%% file: main.tex
\documentclass[11pt]{article}

\usepackage[a4paper,
            left=25mm,
            right=25mm,
            top=25mm,
            bottom=28mm]{geometry}

\usepackage{amsmath,amssymb,amsthm}
\usepackage{latexsym}
\usepackage{bm}
\usepackage{graphicx}
\usepackage{algorithm}
\usepackage{algorithmic}
\usepackage{float}  
\usepackage{placeins}

\usepackage{titletoc}

\usepackage{color}

\providecommand{\bestval}[1]{#1}
\renewcommand{\bestval}[1]{\ensuremath{\bm{#1}}}
\providecommand{\secondval}[1]{#1}
\renewcommand{\secondval}[1]{\ensuremath{\underline{#1}}}

\def\x{{\bm x}}
\def\rmd{\mathrm{d}}

\def\Rbb{\mathbb{R}}
\def\U{{\bm U}}
\def\e{{\bm e}}

\newtheorem{theorem}{Theorem}
\newtheorem{proposition}{Proposition}
\newtheorem{corollary}{Corollary}
\newtheorem{assumption}{Assumption}
\newtheorem{definition}{Definition}
\newtheorem{lemma}{Lemma}
\newtheorem{example}{Example}
\newtheorem{remark}{Remark}

\usepackage[colorlinks=true,
            linkcolor=blue,
            citecolor=blue,
            urlcolor=blue]{hyperref}

\title{Identifiability and Estimation for Unlabeled Finite Mixtures under Marginal Independence}
\author{
Takafumi Kanamori$^{1,2}$, Yushi Hirose$^1$, Shohei Yamamoto$^1$\\[0.5ex]
{\small $^1$Department of Mathematical and Computing Science, Institute of Science Tokyo}\\
{\small $^2$RIKEN Center for Advanced Intelligence Project}
}
\date{}  

\begin{document}

\maketitle

\begin{abstract}
We study component recovery and mixing-matrix estimation from unlabeled finite mixtures whose observable distributions share the same latent components but have unknown mixing weights. The main identifying signal is marginal independence: each component is assumed to be independent on at least one coordinate pair, but no labels, clean component samples, or mixing weights are observed. We first prove a structural result for product components: under a subset-rank condition on the spans of the univariate marginals, any independent affine combination of the components must coincide with a single component. We then extend this principle to observable mixtures and show that, under the corresponding subset-rank, full-rank, and no-cancellation conditions, marginally independent affine combinations recover the corresponding latent components. When every component is independent on some coordinate pair, all components are identifiable, and the mixing matrix is recoverable under the stated completion conditions. Finally, we propose a Product-Marginal Maximum Mean Discrepancy (PM-MMD) estimator over affine combinations of the observable mixtures and prove uniform convergence and stability under approximate marginal independence. This framework also separates the empirical roles of the assumptions: irreducibility is, in general, not directly testable from the unlabeled mixtures alone, whereas marginal independence yields a candidate-level diagnostic through held-out PM-MMD. Controlled and flow-cytometry experiments show when marginal independence provides a useful recovery signal. In the reported multi-component comparisons, condition-aware representative selection stabilizes PM-MMD and improves recovery relative to clustering, factorization, and pairwise mixture-proportion baselines using the same unlabeled mixtures.
\end{abstract}

\section{Introduction}

Many weakly supervised learning problems provide only aggregate or unlabeled distributions. In label-shift adaptation, learning with label proportions, positive-unlabeled learning, and multi-source domain adaptation, one often observes mixtures of latent class-conditional distributions without clean component samples or exact mixing weights
\cite{quadrianto2009estimating,elkan2008learning,roberts2022unsupervised,deng2023mixture}.
A central question is whether the latent component distributions can be recovered from such observable mixtures alone.

This paper studies this question through marginal independence. We consider several observable mixtures that share the same latent components but have unknown mixing weights. Instead of assuming labeled samples, anchor sets, parametric component families, or known proportions, we assume that each component is marginally independent for at least one pair of coordinates. The goal is to recover the latent components and, when possible, the mixing matrix from the observable mixtures.

Our first result clarifies the geometric mechanism behind the recovery. For product-form components, under a subset-rank condition on the spans of the univariate marginals, any fully independent distribution in the affine span of the components must coincide with one of the components. Thus, the intersection between the affine mixture space and the independence model is discrete and consists only of the component distributions. This gives an explicit identifiability condition for mixtures of independent components.

We then extend this principle to marginal independence. For a fixed coordinate pair, we show that any affine combination of the observable mixtures whose bivariate marginal is independent must recover a component that is independent on that pair. Under a no-cancellation condition and the corresponding subset-rank condition on the relevant marginal spans, the coordinate-pair independence constraint identifies the corresponding components. If every component is independent for at least one coordinate pair, all components are identifiable. Under additional irreducibility or square-completion conditions, the mixing matrix is also recoverable.

Our next contribution is an estimation procedure that instantiates this population characterization. 
We introduce a Product-Marginal Maximum Mean Discrepancy (PM-MMD)
criterion: for a candidate affine combination of the observable mixtures, we compare its bivariate marginal with the product of its one-dimensional marginals using a characteristic kernel. This follows the spirit of kernel-based independence tests such as HSIC
\cite{gretton2007kernel}
and MMD-based two-sample testing
\cite{JMLR:v13:gretton12a}.
We then search over affine combinations of the observable mixtures and choose combinations that approximately minimize the PM-MMD criterion. In this way, we recover components that satisfy independence or marginal independence. 
In the square case, we complete the mixing matrix directly when all weight
vectors are recovered, and provide partial-recovery completion rules for the cases where one or two components remain unresolved.

Marginal independence and irreducibility differ in their empirical status. Marginal independence concerns a candidate distribution constructed from the observed mixtures. Once such a candidate is formed, its bivariate marginal can be compared with the product of its one-dimensional marginals on held-out samples. PM-MMD is used as a held-out check: small values suggest compatibility with marginal independence, whereas large values suggest that the candidate lacks the required product-marginal structure. Irreducibility is different. It concerns a residual component in a mixture decomposition. The residual component and the true mixture proportion are not observed. Thus irreducibility is not, in general, directly verifiable from the unlabeled mixtures alone. We use it only as an identification assumption for completing the mixing matrix.

Beyond mixture models, our work connects to broader research on weak or partial supervision, where latent proportions or latent components must be inferred from incomplete or aggregated information. Examples include learning with label proportions, class-prior estimation under distribution shift, and multi-source domain adaptation
\cite{quadrianto2009estimating,roberts2022unsupervised,deng2023mixture}.
These settings share the high-level theme of learning from aggregate or unlabeled distributions. In contrast to methods that require clean component samples, known proportions, or domain-invariance assumptions, our approach uses marginal-independence structure as the identifying signal.

The estimator is based on kernel mean embeddings of probability measures~\cite{muandet2017kernel}. We use known deviation bounds to control the error of empirical embeddings and of empirical PM-MMD criteria. This yields uniform convergence of the empirical criterion over the affine search domain. We also derive stability guarantees under approximate marginal independence. Thus, the paper provides both identifiability results and a practical estimator for unlabeled finite mixtures based on marginal-independence signals.

We complement the theoretical results with numerical experiments in 
Section~\ref{sec:numerical-experiments}. 
The main text reports a compact set of experiments: controlled sanity checks,
a three-component DLBCL flow-cytometry task from raw gated pools, and a binary
DLBCL calibration against recent mixture-proportion-estimation (MPE) baselines.
Additional semi-synthetic, oracle clean-component, and sample-size studies are
given in the appendix.  These experiments show when marginal independence
provides useful recovery information and how condition-aware representative
selection improves stability in multi-component recovery.

The rest of the paper is organized as follows.
Section~\ref{sec:related-work} reviews related work.
Section~\ref{sec:product-form-identifiability} studies the product-space setting and
identifiability from full independence.
Section~\ref{sec:mixing-vector-recovery} develops the marginal-independence
recovery theory for mixing vectors and components.
Section~\ref{sec:mmd-estimation} introduces the PM-MMD estimator and
its statistical guarantees.
Section~\ref{sec:near-marginal-independence} discusses stability under
near marginal independence.
Section~\ref{sec:numerical-experiments} presents the numerical experiments, and
Section~\ref{sec:conclusion} concludes the paper.
Appendix~\ref{app:notation} 
provides a notation list for the symbols used throughout the paper. 
The remaining appendices collect proofs, technical details, additional recovery cases, implementation details, and supplementary experiments. 

\section{Related Work}
\label{sec:related-work}

We briefly review related work and highlight the differences in assumptions and goals.

\paragraph{Mixture proportion and class prior estimation.}
There is a large literature on estimating mixture proportions and class priors.
Typical settings assume partial knowledge of some component distributions or access to labeled data for some classes
\cite{blanchard2010semi,%
elkan2008learning,%
jain2016estimating,
scott2015rate}.
Another variant of weak supervision is learning with label proportions (LLP), where training examples are grouped into bags and only the class proportion of each bag is observed
\cite{quadrianto2009estimating}.
Busa-Fekete et al.~\cite{busa-fekete2025nearly} derived nearly optimal sample-complexity guarantees for this setting.
Kernel-based, divergence-based, and black-box methods use separability, linear independence of component measures, approximate anchor sets, or classifier-based reductions
\cite{christoffel2016class,garg2021mixture,ramaswamy2016mixture,yu2018efficient,zhu2023mixture}.
These methods provide estimators of mixture proportions under structural assumptions on supports, kernels, or classifier scores.

The decontamination literature studies related proportion and residue recovery problems under irreducibility or joint irreducibility assumptions
\cite{blanchard14:_decon_mutual_contam_model,katzsamuels2019decontamination,scott13:_class_asymm_label_noise}.
These results are closest to our completion step, where irreducibility is used after some components have already been recovered.
However, their identifying signal is irreducibility or support structure rather than marginal independence.
Our first-stage recovery instead identifies components as affine combinations of several unlabeled mixtures that satisfy a product-marginal constraint.

Risk-based approaches follow a different route.
They construct unbiased or nearly unbiased empirical risks for learning the desired classifier, often assuming or separately estimating the class prior.
The non-negative risk estimator for positive-unlabeled learning is a prominent example~\cite{kiryo2017positive}.
However, these approaches do not directly identify component distributions as measures and typically focus on a single observable mixture.
In contrast, we study intersections between a finite mixture model and an independence model and use several observable mixtures to identify both components and the mixing matrix.

\paragraph{Class distribution estimation under distribution shift.}
Class distribution estimation under distribution shift is closely related.
Tasche \cite{tasche2023invariance} considered a labeled training distribution and an unlabeled test distribution.
Under covariate shift and certain joint shift models, invariance of posterior probabilities implies identifiability of the test-time class proportions.
More recent work studies several unlabeled domains that share class-conditional densities but differ in class proportions.
Roberts et al.~\cite{roberts2022unsupervised} analyzed this latent label shift setting.
Mielniczuk et al.\ \cite{mielniczuk2025class} addressed class-prior estimation in a positive-unlabeled setting under label shift and proposed a kernel-based estimator with consistency guarantees.
Deng et al.\ \cite{deng2023mixture} studied a multi-source multi-target domain adaptation setting, assuming each target distribution is a mixture of multiple source domains and providing algorithms with theoretical guarantees for estimating the mixture weights and predicting target-specific models.
These works do not impose independence or marginal independence on the components.
Our setting is different.
We assume several unlabeled mixtures that share the same components, and identifiability follows from independence-type constraints and marginal-span rank conditions, rather than domain invariance.

\paragraph{Noisy-label transition-matrix estimation.}
Class-conditional noisy-label learning estimates a transition matrix from a clean
label to an observed noisy label, often using loss correction, importance
reweighting, anchor-point assumptions, or posterior-simplex
geometry~\cite{liu2016importance,li2021volminnet}.  This literature is related
to our setting through a Bayes-dual reinterpretation: if the observed mixture
index is denoted by $Y'$ and treated as a noisy label, then the standard
transition matrix is $P(Y'\mid Y)$, whereas our target mixing matrix is
$P(Y\mid Y')$.  The two are connected by Bayes' rule after a prior over
mixture indices is specified.  However, the formulations are not identical.
In our primary model, $Y'$ is a mixture or group index and the data are
conditional samples from $P(X\mid Y'=j)$; the prior $P(Y')$ is a
sampling-design choice.  Noisy-label transition methods estimate
$P(Y'\mid Y)$ through discriminative assumptions on $P(Y'\mid X)$, while
PM-MMD uses marginal independence of the component distributions.

\paragraph{Multi-view latent variable models and tensor methods.}
Independence across several observed variables is a classical tool in latent variable models.
Hall and Zhou \cite{hall2003nonparametric} used coordinate-wise independence and linear independence of marginals to obtain identifiability for two-component mixtures on product spaces.
Allman et al.\ \cite{allman09:_ident} studied latent structure models with at least three observed variables and established identifiability results under rank conditions using tensor representations.

These ideas led to tensor-based algorithms for learning latent variable models.
Anandkumar et al.\ \cite{anandkumar2014tensor} developed tensor decompositions for a wide class of models, including finite mixtures.
Song et al.\ \cite{song2014nonparametric} and Sgouritsa et al.\ \cite{sgouritsa2013identifying} studied nonparametric multi-view and product mixture models using moments, kernel embeddings, and independence tests.
Kernel-based independence statistics such as HSIC play a central role in these works
\cite{gretton2007kernel}.

Our approach is also based on independence.
However, it focuses on the geometry of the intersection between a finite mixture model and an independence-type model.
Components are characterized as isolated points in this intersection.
We recover them as affine combinations of several observable mixtures by minimizing a kernel-based independence criterion, rather than by tensor or spectral decompositions.
Moreover, we use marginal independence on selected coordinate pairs, rather than requiring a full multi-view product representation for every component.

\paragraph{Kernel methods for independence and mixture models.}
Kernel mean embeddings provide powerful tools for comparing probability distributions~\cite{muandet2017kernel}.
They underlie maximum mean discrepancy statistics and kernel tests of homogeneity and independence~\cite{JMLR:v13:gretton12a,gretton2007kernel,tolstikhin2016minimax}.
Kernel, embedding-based, and related black-box approaches have also been used for mixture proportion estimation and weak supervision~\cite{ramaswamy2016mixture,%
yu2018efficient,%
garg2021mixture,%
zhu2023mixture}.
In our work we use kernel embeddings to define an independence loss over affine combinations of observable mixtures.
We then establish uniform convergence of the empirical loss and derive rates for the resulting estimators
under our identifiability conditions.

\section{Affine Identifiability from Product-Form Independence}
\label{sec:product-form-identifiability} 

We first define the signed affine density class and the probability density class. 

Let $(\mathcal{X}_k,\mathcal{B}_k,\mu_k)$, $k=1,\ldots,d$, be measurable spaces,
and let
$\mathcal{X} := \mathcal{X}_1 \times \cdots \times \mathcal{X}_d,\,
 \mathcal{B} := \mathcal{B}_1 \otimes \cdots \otimes \mathcal{B}_d$, and 
  $\mu := \mu_1 \otimes \cdots \otimes \mu_d$. 
Throughout, we identify probability distributions on $(\mathcal{X},\mathcal{B})$
with their densities with respect to the product measure $\mu$.
The sets of all functions with total mass one and 
all probability densities with the variable
$\x = (x_1,\ldots,x_d) \in \mathcal{X}_1 \times \cdots \times \mathcal{X}_d$ are denoted by
\begin{align*}
  \mathcal{S}
 &= \Bigl\{ 
 q \in L_1(\mathcal{X},\mu) \,\Bigm|\, \int_{\mathcal{X}} q(\x)\,\rmd\mu(\x) = 1
    \Bigl\}, \quad \text{and}\\
  \mathcal{P}
& = \Bigl\{ 
 q \in L_1(\mathcal{X},\mu) \,\Bigm|\, q(\x) \ge 0,\; \int_{\mathcal{X}} q(\x)\,\rmd\mu(\x) = 1
    \Bigr\},
\end{align*}
and the set of product-form functions is
\begin{align*}
  \mathcal{E}
  = \Bigl\{ q \in \mathcal{S} \,\Bigm|\,q(\x) = q_1(x_1)\cdots q_d(x_d),\ 
    \text{where}\ 
    q_j(x_j)=\int_{\mathcal{X}_{(-j)}}\!\!\!\!\! q(\x)\prod_{i\neq {j}}\rmd\mu_i(x_i),\, j\in[d]
    \Bigr\}
 \subset \mathcal{S}.
\end{align*}
We write $\mathrm{d}\mu(\x)$ simply as $\mathrm{d}\x$.
For $\sigma \subset [d]$ and $j \in [d]$, set
$\x_\sigma := (x_i)_{i\in\sigma}\in \mathcal{X}_\sigma := \prod_{i\in\sigma}\mathcal{X}_i$
and 
$\x_{(-j)} := (x_i)_{i\neq j}\in \mathcal{X}_{(-j)} := \prod_{i\neq j}\mathcal{X}_i$. 
For any integrable function $f$, we abbreviate
$\int_{\mathcal{X}_\sigma} f(\x)\,\prod_{i\in\sigma} \mathrm{d}\mu_i(x_i)$
as $\int_{\mathcal{X}_\sigma} f(\x)\,\mathrm{d}\x_\sigma$, and 
$\int_{\mathcal{X}_{(-j)}} f(\x)\,\prod_{i\neq j} \mathrm{d}\mu_i(x_i)$ 
as $\int_{\mathcal{X}_{(-j)}} f(\x)\,\mathrm{d}\x_{(-j)}$, respectively. 
Probability densities in $\mathcal{P}\cap\mathcal{E}$ correspond to mutually independent coordinates, 
whereas elements of $\mathcal{P}\setminus\mathcal{E}$ allow arbitrary
dependence among $(x_1,\ldots,x_d)$.

The affine hull generated by $\{p_i\}_{i=1}^m\subset\mathcal{P}$ with the weight
vector ${\bm\theta} = (\theta_1,\ldots,\theta_m)^T$ is denoted by 
\begin{align*}
 \mathcal{M}=\bigg\{ p_{\bm\theta}(\x):=\sum_{i=1}^{m}\theta_i p_i(\x)\in\mathcal{S} 
 \,\Bigm|\,
 \theta_i\in\Rbb,\ 
 \sum_{i=1}^{m}\theta_i=1 \bigg\}. 
\end{align*}
Note that some elements of ${\bm\theta}$ can be negative. 
Our goal is to find a sufficient condition under which 
\begin{align}
 \label{eq:affine-product-identifiability}
 \mathcal{M}\cap\mathcal{E} = \{p_i\,|\,i\in[m]\}
\end{align}
exactly holds. 

To formulate such a condition, let us consider a projection onto
$\mathcal{E}$. For $q \in \mathcal{S}$, let $q_k$ be the $k$-th marginal
function of $q$, i.e., 
\begin{align*}
q_k(x_k)=\int_{\mathcal{X}_{(-k)}} q(\x)\mathrm{d}\x_{(-k)}.
\end{align*}
For each component $p_i$, write $p_{ik}(x_k)$ for its $k$th marginal.  Define
\begin{align*}
  \Pi_{\mathcal{E}}[q]  := \prod_{k=1}^d q_k \in \mathcal{E}. 
\end{align*}
For ${\bm\theta} = (\theta_1,\ldots,\theta_m)^T$ with $\sum_{i=1}^m \theta_i = 1$,
the $m$-projection of $p_{\bm\theta}(x) := \sum_{i=1}^m \theta_i p_i(x) \in \mathcal{M}$ 
onto $\mathcal{E}$ is thus given by
\begin{align}
\label{eq:m-projection-mixture}
  \Pi_{\mathcal{E}}[p_{\bm\theta}]
  = \prod_{k=1}^d \sum_{i=1}^m \theta_i p_{ik}
  = \sum_{\sigma:[d]\to[m]}\Bigl( \prod_{k=1}^d \theta_{\sigma(k)} \Bigr) \prod_{k=1}^d p_{\sigma(k)k}, 
\end{align}
where the last equality follows from expanding the product of sums.
Clearly, $\Pi_{\mathcal{E}}[q] = q$ holds if and only if
$q \in \mathcal{E}$. Under the product-form assumption
$\{p_i\}_{i\in[m]}\subseteq\mathcal{E}$ imposed below,
\eqref{eq:affine-product-identifiability} is therefore equivalent to the statement
that every $p_{\bm\theta}\in\mathcal{M}\cap\mathcal{E}$ coincides with one of
the component densities.  The sufficient condition below yields the stronger
coefficient-level conclusion ${\bm\theta}={\bm e}_i$ for some $i\in[m]$,
where $\bm{e}_i$ denotes the $i$-th standard basis vector in $\mathbb{R}^m$.

For $p_i\in\mathcal{P}$ and distinct $s,t\in[d]$, write
$p_i(x_s,x_t)$ for its $(s,t)$-marginal.
For a nonempty subset
$I\subseteq[m]$ and $k\in[d]$, define
\begin{align*}
 \delta_k(I)
 :=\dim\operatorname{span}\{\,p_{ik}(x_k)\mid i\in I\,\}.
\end{align*}
The next theorem gives a sufficient condition for
\eqref{eq:affine-product-identifiability}.
\begin{theorem}
\label{thm:product-form-affine-identifiability}
Suppose that $d\ge2$ and
\begin{align*}
 p_i(\x)=\prod_{k=1}^{d}p_{ik}(x_k)\in\mathcal{P}\cap\mathcal{E},
 \qquad i\in[m].
\end{align*}
Assume that, for every $I\subseteq[m]$ with $|I|\ge2$, there exist distinct
coordinates $s,t\in[d]$, possibly depending on $I$, such that
\begin{align*}
 \delta_s(I)+\delta_t(I)>|I|+1.
\end{align*}
Then, for every ${\bm\theta}\in\mathbb{R}^m$ satisfying
$\sum_{i\in[m]}\theta_i=1$, the condition
$p_{\bm\theta}\in\mathcal{E}$ implies
${\bm\theta}={\bm e}_i$ for some $i\in[m]$; in particular,
\eqref{eq:affine-product-identifiability} holds.
\end{theorem}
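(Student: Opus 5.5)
The plan is to reduce the product-form condition $p_{\bm\theta}\in\mathcal{E}$ to a statement about a single bivariate marginal, and then to turn that into a rank identity for a matrix built from $\bm\theta$. Fix $\bm\theta$ with $\sum_i\theta_i=1$ and $p_{\bm\theta}\in\mathcal{E}$. Let $I:=\{i\in[m]\mid\theta_i\neq0\}$ be its support. If $|I|=1$, then the constraint $\sum_i\theta_i=1$ forces $\bm\theta=\bm e_i$, and we are done. So assume $|I|\ge2$; we aim for a contradiction. Choose the coordinates $s\ne t$ supplied by the hypothesis for this $I$.

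\textbf{Step 1: reduce to the $(s,t)$ marginal.} Since $p_{\bm\theta}=\prod_k q_k$ with $q_k=\sum_{i\in I}\theta_i p_{ik}$, and each $q_k$ integrates to one, integrating out all coordinates except $x_s,x_t$ gives
\begin{align*}
\sum_{i\in I}\theta_i\,p_{is}(x_s)p_{it}(x_t)=\Bigl(\sum_{i\in I}\theta_i p_{is}(x_s)\Bigr)\Bigl(\sum_{j\in I}\theta_j p_{jt}(x_t)\Bigr).
\end{align*}
Here the left side is the $(s,t)$-marginal of $p_{\bm\theta}$, computed using the product form of each $p_i$. Rearranging,
\begin{align*}
\sum_{i,j\in I}M_{ij}\,p_{is}\otimes p_{jt}=0,\qquad M:=\operatorname{diag}(\theta_I)-\theta_I\theta_I^T\in\mathbb{R}^{|I|\times|I|}.
\end{align*}

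\textbf{Step 2: convert to a matrix identity.} Pick a basis $f_1,\dots,f_{\delta_s}$ of $\operatorname{span}\{p_{is}\mid i\in I\}$ and a basis $g_1,\dots,g_{\delta_t}$ of $\operatorname{span}\{p_{jt}\mid j\in I\}$. Write $p_{is}=\sum_a A_{ia}f_a$ and $p_{jt}=\sum_b B_{jb}g_b$. Then $A\in\mathbb{R}^{|I|\times\delta_s}$ has rank $\delta_s$ and $B\in\mathbb{R}^{|I|\times\delta_t}$ has rank $\delta_t$. The functions $f_a\otimes g_b$ on $\mathcal{X}_s\times\mathcal{X}_t$ are linearly independent; this is the standard fact for tensor products of linearly independent $L_1$ functions, checked by pairing with suitable test functions. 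Hence the identity of Step 1 is equivalent to
\begin{align*}
A^TMB=0.
\end{align*}

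\textbf{Step 3: rank bound.} Because all $\theta_i$, $i\in I$, are nonzero, $\operatorname{diag}(\theta_I)$ is invertible. A rank-one perturbation lowers rank by at most one, so $\operatorname{rank}M\ge|I|-1$. Applying Sylvester's rank inequality twice gives
\begin{align*}
\operatorname{rank}(A^TMB)\ge\delta_s+\operatorname{rank}M+\delta_t-2|I|\ge\delta_s+\delta_t-|I|-1>0,
\end{align*}
where the last inequality is the hypothesis on $I$. This contradicts $A^TMB=0$. Therefore $|I|=1$, so $\bm\theta=\bm e_i$ for some $i$. Since each $p_i\in\mathcal{E}$, this yields \eqref{eq:affine-product-identifiability}.

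The only step needing real care is Step 2. There one must justify the linear independence of the tensor products $f_a\otimes g_b$ in $L_1(\mu_s\otimes\mu_t)$, and check that passing to the marginal loses nothing needed. I would prove the independence by choosing bounded functions $\phi_a$ with $\int f_{a'}\phi_a=\delta_{aa'}$, which exist by Hahn–Banach duality in finite dimension, together with analogous $\psi_b$ for the $g_b$. Then integrating the identity against $\phi_a\otimes\psi_b$ via Fubini gives $(A^TMB)_{ab}=0$.
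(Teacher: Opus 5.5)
Your proof is correct and follows essentially the paper's argument: reduce to the $(s,t)$-marginal, express it in the tensor basis $\{f_a\otimes g_b\}$, and use Sylvester's rank inequality with the invertible factor $\mathrm{diag}(\theta_I)$ to force $\delta_s(I)+\delta_t(I)\le|I|+1$. The difference is only bookkeeping. The paper notes that the coefficient matrix $A\,\mathrm{diag}(\theta_I)B^T$ of the product-form marginal has rank one and applies Sylvester once, whereas you subtract the product, bound $\operatorname{rank}(\mathrm{diag}(\theta_I)-\theta_I\theta_I^T)\ge|I|-1$, and apply Sylvester twice; you also justify the linear independence of the $f_a\otimes g_b$, which the paper takes for granted.
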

The proof is given in Appendix~\ref{app:proof-product-form-identifiability}. 
\begin{remark}
For a nontrivial family with $m\ge2$, the subset-rank assumption itself
requires at least two coordinates.  If $d=1$, then the independence model
$\mathcal{E}$ coincides with $\mathcal{S}$, and
\eqref{eq:affine-product-identifiability} need not hold.
The subset-rank assumption in
Theorem~\ref{thm:product-form-affine-identifiability}
is expressed only through the dimensions of spans of univariate marginals,
and the coordinate pair may depend on the support set $I$.
As a sufficient special case, if there exist two fixed coordinates $s,t$ for
which both marginal families $\{p_{is}\}_{i\in[m]}$ and
$\{p_{it}\}_{i\in[m]}$ are linearly independent, then
$\delta_s(I)=\delta_t(I)=|I|$ for every nonempty $I\subseteq[m]$.
The bound $\delta_s(I)+\delta_t(I)\le |I|+1$ obtained in the proof for every
distinct $s,t$ is necessary for a nontrivial product-form affine combination,
but its converse does not hold in general.
Thus the subset-rank condition is sufficient, but not necessary, for uniqueness.
Under the stated subset-rank condition, any affine combination
$\sum_{i=1}^m \theta_i p_i$ that belongs to $\mathcal{E}$ must coincide with
one of the components $p_i$, even though the coefficients $\theta_i$ are
allowed to be negative.
Hence, Theorem~\ref{thm:product-form-affine-identifiability} gives a simple
sufficient condition for the identifiability in
\eqref{eq:affine-product-identifiability}.
\end{remark}

\begin{remark}
The scope of Theorem~\ref{thm:product-form-affine-identifiability} differs from that
of existing identifiability results for latent structure models.
Allman et al.~\cite{allman09:_ident} study global identifiability of the full
mixture model, including both the component distributions and the mixing
weights.
Their assumptions require linear independence of each coordinate block and at
least three conditionally independent views.
Sgouritsa et al.~\cite{sgouritsa2013identifying} use kernel embeddings and
tensor decompositions to show that the number of mixture components equals the
tensor rank, and use this fact to learn nonparametric product mixtures with a
finite set of confounders.
In contrast, Theorem~\ref{thm:product-form-affine-identifiability} gives a structural
criterion ensuring that any coordinate-wise independent distribution in the
affine span of $\mathcal{M}$ coincides with one of the original product
components.
\end{remark}

\section{Recovery of Mixing Weights}
\label{sec:mixing-vector-recovery}

We now pass from component-level identifiability to the observable multi-mixture setting.
The observed distributions are unlabeled mixtures of the same latent components.
Their mixing weights are unknown.
Our goal is to find affine combinations of the observed mixtures that recover the latent components.
This section first sets up the observed mixture model.
It then shows how marginal independence identifies such affine combinations.
Finally, it discusses the testability of the assumptions used in the recovery arguments.

\subsection{Observed Mixture Models}
For $\{p_i\}_{i=1}^{m}\subset\mathcal{P}$ and $L \ge m$, we define
\begin{align*}
  U_\ell(x) = \sum_{i\in[m]} \theta_{\ell i} p_i(x)\in\mathcal{S},   \qquad \ell=1,\ldots,L, 
\end{align*}
where $\Theta = (\theta_{\ell i}) \in [0,1]^{L\times m}$ satisfies
$\Theta \bm{1}_m = \bm{1}_L$.
Let $\bm{p} = (p_1,\ldots,p_m)^T$ and $\bm{U} = (U_1,\ldots,U_L)^T$.
Then
\begin{align*}
  \bm{U} = \Theta \bm{p}. 
\end{align*}
Suppose that the mixing matrix $\Theta$ has full column rank. 
If \(\Theta\) were known, one could recover $\bm{p}$ from $\bm{U}$ by
\begin{align*}
\bm{p}=(\Theta^T \Theta)^{-1}\Theta^T \bm{U}. 
\end{align*}
This oracle identity motivates the search for affine coefficients.
Note that
\begin{align*}
  (\Theta^T \Theta)^{-1}\Theta^T  \bm{1}_L
  = (\Theta^T \Theta)^{-1}\Theta^T \Theta \bm{1}_m
  = \bm{1}_m,
\end{align*}
so each component $p_i$ can be written as an affine combination of
$U_1,\ldots,U_L$.

Next we show that the independence property in
Theorem~\ref{thm:product-form-affine-identifiability} is crucial for recovery. 
Let $r_\ell \in \mathbb{R}$, $\ell\in[L]$, be coefficients such that
$\sum_{\ell\in[L]} r_\ell = 1$, and suppose that
\begin{align*}
  \sum_{\ell\in[L]} r_\ell U_\ell(x) \in \mathcal{E}. 
\end{align*}
Using the representation of $U_\ell$, we obtain
\begin{align*}
  \sum_{\ell\in[L]} r_\ell U_\ell(x)
    = \sum_{\ell\in[L]} r_\ell \sum_{i\in[m]} \theta_{\ell i} p_i(x)
    = \sum_{i\in[m]} \beta_i p_i(x),
\end{align*}
where $\beta_i:=\sum_{\ell\in[L]} r_\ell \theta_{\ell i},\,i\in[m]$. 
Since $\Theta \bm{1}_m = \bm{1}_L$ and $\sum_{\ell} r_\ell = 1$, we have
$\sum_{i\in[m]} \beta_i = 1$. 
If the assumptions of
Theorem~\ref{thm:product-form-affine-identifiability} hold for
$p_i(x) = \prod_{k\in[d]} p_{ik}(x_k)$, then
\begin{align*}
\sum_{i\in[m]} \beta_i p_i \in \mathcal{E}
\end{align*}
implies that
\begin{align*}
\bm{\beta} = (\beta_1,\ldots,\beta_m)^T = \bm{e}_i
\end{align*}
for some $i\in[m]$. 
In other words, any affine combination $\sum_{\ell\in[L]} r_\ell U_\ell$ 
that lies in $\mathcal{E}$ must coincide with one of the component
densities $p_i$.
This observation will be used in later sections, where we estimate such
weight vectors $\bm{r} = (r_1,\ldots,r_L)^T$ from data and recover the
independent components via $\sum_{\ell\in[L]} r_\ell U_\ell$.

\begin{remark}
Consider the correspondence to a classification problem. Let $p_i$ denote the class-conditional
density for label $i$, so that a mixing vector plays the role of the class-probability vector. If only
unlabeled mixtures are observed, the components $\{p_i\}$ can be recovered only up to a permutation of
labels. When some mixtures are known to involve specific labels, this permutation ambiguity is removed.
For example, in PU-learning \cite{elkan2008learning,jain2016estimating}, 
if $U_{1}$ equals $p_1$ and $U_{2}$ is a mixture of $p_1$ and
$p_2$, then the identity of the positive component $p_1$ fixes its label, and the
remaining recovered component must correspond to the other label. 
A similar argument applies when the problem has $K$ classes. 
If $U_{\ell}$ (for $\ell=1,\dots,L$) is known to be a mixture of 
the class-conditionals $p_i$ for $i=1,\dots,\min(\ell,K)$, 
then once the components are recovered, 
their correspondence to class labels is uniquely determined within this ordering.
\end{remark}

\begin{remark}
We briefly look at a simple setting with two components. 
Let $p \in \mathcal{P}\cap\mathcal{E}$ be an independent distribution and
$q \in \mathcal{P}\setminus\mathcal{E}$ be a dependent distribution, and form the affine combination
$\alpha p + (1-\alpha) q,\,\alpha \in \mathbb{R}$. 
for values of $\alpha$ such that this mixture belongs to $\mathcal{P}$.
 Fix the marginals $q_k(x_k), k\in[d]$ of $q$ and consider all joint distributions with these
marginals.
Within this class, one can show that the set of $q$ for which there exists
$\alpha \ne 1$ satisfying
$\alpha p + (1-\alpha) q \in \mathcal{P}\cap\mathcal{E}$
forms at most a one-parameter family, 
\begin{align*}
\frac{\prod_{k\in[d]}(\alpha p_k(x_k)+(1-\alpha)q_k(x_k)) - \alpha p(\x)}{1-\alpha}\in\mathcal{P}\setminus\mathcal{E},\quad \alpha\neq 1. 
\end{align*}
Hence such $q$ constitute only a one-dimensional exceptional subset.
For a typical choice of $q$ outside this one-dimensional subset, the above
condition forces $\alpha = 1$, and the mixture then coincides 
with the independent component $p$ itself. 
Thus, even in this simple two-component setting, independence of the mixture
already yields an identifiability property.
This point will be further examined in Theorem~\ref{thm:marginal-independence-identifiability}. 
\end{remark}

\subsection{Recovery from Marginal Independence Assumption}
\label{subsec:marginal-independence-recovery}

We now consider the recovery of probability densities under a more general
independence assumption, referred to as the \emph{marginal independence
assumption}. In particular, each component $p_i \in \mathcal{P}$,
$i \in [m]$, is no longer required to factorize into the product of its
univariate marginals.

For a subset $\sigma \subset [d]$, define 
\begin{align*}
  \mathcal{S}_\sigma
 &:= \Bigl\{
 p : \mathcal{X}_\sigma \to \mathbb{R}
 \,\Bigm|\,
 \int_{\mathcal{X}_\sigma} p(\x_\sigma)\,\rmd{\x}_\sigma = 1
 \Bigr\}, \\
  \mathcal{P}_\sigma
 &:= \Bigl\{
 p : \mathcal{X}_\sigma \to \mathbb{R}
 \,\Bigm|\,
 p \ge 0,\;
 \int_{\mathcal{X}_\sigma} p(\x_\sigma)\,\rmd{\x}_\sigma = 1
 \Bigr\}, \\
  \mathcal{E}_\sigma
  &:= \Bigl\{
        \prod_{s\in\sigma} p_s(x_s)
        \,\Bigm|\,
        p_s \in \mathcal{S}_s,\ s\in\sigma
      \Bigr\}
  \subset \mathcal{S}_\sigma .
\end{align*}
For a singleton $\sigma = \{s\} \subset [d]$ we simply write $\mathcal{S}_s$ or $\mathcal{P}_s$.
For $\sigma = \{s,t\} \subset [d]$ we write 
$\mathcal{S}_{st}, \mathcal{P}_{st}$ and
$\mathcal{E}_{st}$ instead of 
$\mathcal{S}_{\sigma}, \mathcal{P}_\sigma$ and $\mathcal{E}_\sigma$.
Note that $\mathcal{S}_{[d]}=\mathcal{S}, \mathcal{P}_{[d]} = \mathcal{P}$ and $\mathcal{E}_{[d]} = \mathcal{E}$.
For $p_i \in \mathcal{P}$, $i \in [m]$, and indices $s,t \in [d]$ with $s<t$,
we define
\begin{align}
 \label{eq:def-rst}
  R_{st}
  := \bigl\{
        i \in [m]
        \bigm|\,
        p_i(x_s,x_t) = p_i(x_s)\,p_i(x_t) \in \mathcal{E}_{st}
      \bigr\}.
\end{align}
Here the univariate marginals $p_i(x_s)$ and $p_i(x_t)$ depend on $i$ and the
coordinate, but we write them in this form whenever it causes no confusion.
Thus $R_{st}$ collects those components that are independent in the pair of
coordinates $(x_s,x_t)$.

Let $U_\ell(\x)$, $\ell \in [L]$, be the mixtures of $p_i$, $i \in [m]$,
defined by
\begin{align}
  U_\ell(\x)
  = \sum_{i\in[m]} \theta_{\ell i} p_i(\x)\in\mathcal{S},
  \qquad \ell \in [L],                            \label{eq:def-observed-mixtures}
\end{align}
for coefficients $\theta_{\ell i} \in [0,1]$ satisfying
$\sum_{i\in[m]} \theta_{\ell i} = 1$.
Equivalently, the mixing matrix $\Theta = (\theta_{\ell i}) \in [0,1]^{L\times m}$
satisfies $\Theta {\bm 1}_m = {\bm 1}_L$.
For coefficients $r_\ell \in \mathbb{R}, \ell\in[L]$ with $\sum_{\ell\in[L]} r_\ell = 1$,
a linear combination of $U_\ell$ can be written as
\begin{align*}
  \sum_{\ell\in[L]} r_\ell U_\ell(\x)
  &= \sum_{\ell\in[L]} r_\ell \sum_{i\in[m]} \theta_{\ell i} p_i(\x)
   = \sum_{i\in[m]} \beta_i p_i(\x),
\end{align*}
where
\begin{align*}
  \beta_i := \sum_{\ell\in[L]} r_\ell \theta_{\ell i},
  \qquad i \in [m].
\end{align*}
Since $\Theta \bm{1}_m = \bm{1}_L$ and $\sum_{\ell} r_\ell = 1$, we have
$\sum_{i\in[m]} \beta_i = 1$. 
Taking the $(s,t)$-marginal of the above expression yields
\begin{align}
  \sum_{\ell\in[L]} r_\ell U_\ell(x_s,x_t)
  &= \sum_{i\in R_{st}} \beta_i p_i(x_s)\,p_i(x_t)
   + \sum_{i\notin R_{st}} \beta_i p_i(x_s,x_t).
\end{align}
We are interested in affine combinations whose $(s,t)$-marginal is
independent; that is,
\begin{align*}
  \sum_{\ell\in[L]} r_\ell U_\ell(x_s,x_t) \in \mathcal{E}_{st}.
\end{align*}
For later use, we introduce the set
\begin{align*}
  \mathcal{U}_{st}
  := \Bigl\{
        \sum_{\ell\in[L]} r_\ell U_\ell(x_s,x_t) \in \mathcal{S}_{st}
        \,\Bigm|\,
        r_\ell \in \mathbb{R},\
        \sum_{\ell\in[L]} r_\ell = 1
      \Bigr\}.
\end{align*}

\begin{theorem}
\label{thm:marginal-independence-identifiability}
For $\{p_i\}_{i=1}^{m} \subset \mathcal{P}$, let $p_{ik}(x_k)$, $k\in[d]$,
denote the marginals of $p_i(\x)$.
Fix $s,t \in [d]$ with $s<t$ and let $R_{st}$ be the set defined in
\eqref{eq:def-rst}. 
Suppose that, for every $I\subseteq R_{st}$ with $|I|\ge2$,
\begin{align*}
 \delta_s(I)+\delta_t(I)>|I|+1.
\end{align*}
Suppose that the mixing matrix
$\Theta = (\theta_{\ell i}) \in [0,1]^{L\times m}$, satisfying
$\Theta \bm{1}_m = \bm{1}_L$, has full column rank. 
Assume that
\begin{align}
  \sum_{i\in[m]} \beta_i\, p_i(x_s,x_t) \notin \mathcal{E}_{st}
  \label{eq:no-cancellation}
\end{align}
for every $\bm{\beta} = (\beta_1,\ldots,\beta_m)^T \in \mathbb{R}^m$ such that
\begin{align*}
  \sum_{i\in[m]} \beta_i = 1
  \quad\text{and}\quad
  (\beta_i)_{i\in[m]\setminus R_{st}} \ne \bm{0}
  \in \mathbb{R}^{m-|R_{st}|}.
\end{align*}
Then
\begin{align*}
  \mathcal{U}_{st} \cap \mathcal{E}_{st}
  = \{\, p_i(x_s)\,p_i(x_t) \mid i \in R_{st} \,\}.
\end{align*}
For each $i \in R_{st}$, 
the coefficient vector $\bm{r} = \Theta(\Theta^T \Theta)^{-1} \bm{e}_i$ satisfies 
\begin{align*}
  \sum_{\ell\in[L]} r_\ell U_\ell(x_s,x_t) = p_i(x_s)\,p_i(x_t), 
\end{align*}
where $\bm{e}_i$ denotes the $i$th standard basis vector in $\mathbb{R}^m$. 
\end{theorem}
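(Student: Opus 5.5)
We prove the two inclusions separately, reducing the problem to a two-coordinate version of Theorem~\ref{thm:product-form-affine-identifiability} applied to the independent components indexed by $R_{st}$.

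\textbf{The easy inclusion.} Take $i\in R_{st}$ and set $\bm r=\Theta(\Theta^T\Theta)^{-1}\bm e_i$. Full column rank makes $\Theta^T\Theta$ invertible, so
\begin{align*}
\bm\beta=\Theta^T\bm r=\bm e_i .
\end{align*}
Also $\sum_\ell r_\ell=\bm 1_L^T\bm r=\bm 1_m^T\Theta^T\Theta(\Theta^T\Theta)^{-1}\bm e_i=1$, using $\Theta\bm 1_m=\bm 1_L$. Therefore $\sum_\ell r_\ell U_\ell=p_i$. Its $(s,t)$-marginal is $p_i(x_s)p_i(x_t)$, which lies in $\mathcal{E}_{st}$ by the definition of $R_{st}$. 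This gives both ``$\supseteq$'' and the final claim of the theorem.

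\textbf{The reverse inclusion: setup.} Let $\sum_\ell r_\ell U_\ell(x_s,x_t)\in\mathcal{E}_{st}$ with $\sum_\ell r_\ell=1$. As computed before the theorem, this equals $\sum_i\beta_i p_i(x_s,x_t)$ with $\bm\beta=\Theta^T\bm r$ and $\sum_i\beta_i=1$. If $(\beta_i)_{i\notin R_{st}}\neq\bm 0$, the no-cancellation hypothesis \eqref{eq:no-cancellation} is contradicted. Hence $\beta_i=0$ off $R_{st}$, and the combination reduces to
\begin{align*}
q:=\sum_{i\in R_{st}}\beta_i\,p_{is}(x_s)p_{it}(x_t)\in\mathcal{E}_{st},\qquad \sum_{i\in R_{st}}\beta_i=1 .
\end{align*}
This is exactly the setting of Theorem~\ref{thm:product-form-affine-identifiability} with $d=2$, coordinates $(s,t)$, and components $\{p_{is}\otimes p_{it}\}_{i\in R_{st}}$. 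For $d=2$ the only available pair is $(s,t)$, and the subset-rank hypothesis over $I\subseteq R_{st}$ is precisely what we assumed. That theorem then gives $\bm\beta|_{R_{st}}=\bm e_i$, so $q=p_i(x_s)p_i(x_t)$, which proves ``$\subseteq$''.

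\textbf{Main obstacle: re-deriving the two-coordinate case.} To keep the argument self-contained, I would reprove the $d=2$ case directly.

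1. Let $I=\{i:\beta_i\neq0\}\subseteq R_{st}$, and suppose for contradiction that $|I|\ge2$.
2. Marginalizing gives $q_s=\sum_{i\in I}\beta_ip_{is}$ and $q_t=\sum_{i\in I}\beta_ip_{it}$, and $q=q_s\otimes q_t$.
3. Choose bases of $V_s=\operatorname{span}\{p_{is}\}_{i\in I}$ and $V_t=\operatorname{span}\{p_{it}\}_{i\in I}$. Consider the linear map
\begin{align*}
A:\mathbb R^I\to V_s\otimes V_t,\qquad \bm\gamma\mapsto\sum_{i\in I}\gamma_i\,p_{is}\otimes p_{it}.
\end{align*}
Equivalently, view the element as a $\delta_s(I)\times\delta_t(I)$ matrix $M=\sum_{i\in I}\beta_i a_ib_i^T$, where $a_i,b_i$ are the coordinate vectors of $p_{is},p_{it}$.
4. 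The condition $q\in\mathcal E_{st}$ says $M$ has rank at most one.
5. A Sylvester- or Frobenius-type rank inequality should bound the rank of $M$ below by $\delta_s(I)+\delta_t(I)-|I|$. This uses that $M=P\,\mathrm{diag}(\beta_I)\,Q^T$, where $P=[a_i]$ has rank $\delta_s(I)$ and $Q=[b_i]$ has rank $\delta_t(I)$, and that $\mathrm{diag}(\beta_I)$ is invertible on $I$.
6. Rank at most one then forces $\delta_s(I)+\delta_t(I)\le|I|+1$, contradicting the hypothesis. Hence $|I|=1$, and since $\sum\beta_i=1$ we get $\bm\beta=\bm e_i$.

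The step I expect to need the most care is step 5. The precise form is Sylvester's inequality $\operatorname{rank}(PD)+\operatorname{rank}(Q^T)-|I|\le\operatorname{rank}(PDQ^T)$, where $\operatorname{rank}(PD)=\delta_s(I)$ because $D=\mathrm{diag}(\beta_I)$ is invertible. I would also check that $\delta_s(I)$ and $\delta_t(I)$ are finite, which holds since each is at most $|I|$.
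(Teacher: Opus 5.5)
Your proposal is correct and follows the paper's proof: the easy inclusion uses $\bm r=\Theta(\Theta^T\Theta)^{-1}\bm e_i$, the no-cancellation hypothesis forces $\beta_i=0$ off $R_{st}$, and Theorem~\ref{thm:product-form-affine-identifiability} is then applied with $d=2$, whose proof in the paper is exactly your Sylvester rank argument on $A\operatorname{diag}(\theta_i)B^T$. The only point you leave implicit is that the reduced combination forces $R_{st}\neq\emptyset$; the paper states this explicitly, but it does not affect the argument.
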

The proof is given in Appendix~\ref{app:proofs-mixing-vector-recovery}. 
When $\bm{r}^T\bm{U}(x_s, x_t)=p_i(x_s)p_i(x_t)$,
the coefficient-level conclusion of
Theorem~\ref{thm:product-form-affine-identifiability}, as applied in the proof of
Theorem~\ref{thm:marginal-independence-identifiability}, gives
$\bm{r}^T\Theta = \e_i^T$ under the subset-rank and no-cancellation conditions.
Hence, also $\bm{r}^T\bm{U}(\x)=p_i(\x)$ holds.

\begin{remark}
The assumption \eqref{eq:no-cancellation} in
Theorem~\ref{thm:marginal-independence-identifiability} is a generic no-cancellation condition. 
It rules out signed affine cancellations in which dependent non-target components combine with target components to produce a signed product-form marginal.
A sufficient condition for
$\sum_{i\in[m]} \beta_i p_i(x_s,x_t) \notin \mathcal{E}_{st}$ to hold for
every $(\beta_i)_{i\in[m]}$ satisfying the constraints in
Theorem~\ref{thm:marginal-independence-identifiability} is
\begin{align*}
  &\Bigl\{
      \sum_{i\in[m]\setminus R_{st}} \beta_i p_i(x_s,x_t)
      \,\Bigm|\,
      (\beta_i)_{i\in[m]\setminus R_{st}} \neq \bm{0}
    \Bigr\} 
 \cap
    \operatorname{span}\{\, p_i(x_s)p_j(x_t) \mid i,j \in [m] \,\}
    = \emptyset,
\end{align*}
since the condition
$\sum_{i\in[m]} \beta_i p_i(x_s,x_t) \notin \mathcal{E}_{st}$ means that
\begin{align*}
  \sum_{i\in[m]\setminus R_{st}} \beta_i p_i(x_s,x_t)
  \neq
  \sum_{i,j\in[m]} \beta_i \beta_j p_i(x_s)p_j(x_t)
  - \sum_{i\in R_{st}} \beta_i p_i(x_s)p_i(x_t)
\end{align*}
for all such $\bm{\beta}$, and the right-hand side always belongs to
$\operatorname{span}\{\, p_i(x_s)p_j(x_t) \mid i,j \in [m] \,\}$.
For continuous variables $x_s$ and $x_t$, the linear space, 
$\operatorname{span}\{\, p_i(x_s)p_j(x_t) \mid i,j \in [m] \,\}$, 
is finite dimensional, 
whereas the joint densities $p_i(x_s,x_t)$ with $i\notin R_{st}$ form an
infinite-dimensional family that can represent arbitrary dependence between
$x_s$ and $x_t$.
\end{remark}

\paragraph{Finite-table intuition for generic no-cancellation.}
The no-cancellation condition in
Theorem~\ref{thm:marginal-independence-identifiability} should be read as a generic
non-degeneracy condition rather than as a structural equality.  A simple
finite-table dimension count makes this intuition precise.  Suppose that the
$(s,t)$-marginal is represented on an $a\times b$ grid.  The affine space of
signed total-mass-one joint tables has dimension $ab-1$, whereas the
product-form model has dimension
\[
  (a-1)+(b-1)=a+b-2.
\]
Thus the product-form model has codimension
\[
  (ab-1)-(a+b-2)=(a-1)(b-1).
\]
If the coefficient family used for signed cancellation has dimension $d_B$,
then, under a standard transversality, or non-tangency, condition, the set of
component parameters for which some nontrivial signed affine cancellation
becomes product-form has dimension at most
\[
  \dim\mathcal H+d_B-(a-1)(b-1),
\]
where $\mathcal H$ denotes the parameter space of the non-target bivariate
marginals.  Hence, whenever $(a-1)(b-1)>d_B$, this exceptional parameter set is
lower dimensional in $\mathcal H$ and has Lebesgue measure zero.  A precise
finite-table statement and proof are given in
Appendix~\ref{app:generic-no-cancellation}.

\begin{remark}
If $R_{st} = [m]$, then $(\beta_i)_{i\in[m]\setminus R_{st}} \neq \bm{0}$
can never occur.
Hence the additional assumption
$\sum_{i\in[m]} \beta_i p_i(x_s,x_t) \notin \mathcal{E}_{st}$ in
Theorem~\ref{thm:marginal-independence-identifiability} is automatically
satisfied.
In this case,
Theorem~\ref{thm:marginal-independence-identifiability} reduces to
Theorem~\ref{thm:product-form-affine-identifiability} with $d=2$.
\end{remark}

Theorem~\ref{thm:marginal-independence-identifiability} implies that for each
$i\in R_{st}$ there exists $\bm{r}_i \in \mathbb{R}^L$ with
$\bm{r}_i^T \bm{1}_L = 1$ such that $\bm{r}_i^{T}\U(\x) = p_i(\x)$ under
its assumptions.
If
\begin{align*}
  \bigcup_{\substack{s,t\in[d]\\ s<t}} R_{st} = [m],
\end{align*}
then all components $p_i$, $i\in[m]$, can be recovered from affine
combinations of the mixtures $U_\ell$, $\ell\in[L]$, and the mixture model
is identifiable under the marginal independence assumption.
Furthermore, we do not need to specify $R_{st}$ explicitly when recovering
$p_i(\x)$ for $i\in R_{st}$: the set $R_{st}$ is \emph{automatically
detected} by solving the problem
$\bm{r}^T \U(x_s,x_t) \in \mathcal{E}_{st}$.

Summarizing the above discussion, we obtain the following corollary.
\begin{corollary}
\label{cor:full-recovery-marginal-independence}
Let $p_{ik}(x_k)$, $k\in [d]$, be the marginals of $p_i\in\mathcal{P}$,
$i\in[m]$.
For each $s,t\in[d]$ with $s<t$, suppose that, for every
$I\subseteq R_{st}$ with $|I|\ge2$,
\begin{align*}
 \delta_s(I)+\delta_t(I)>|I|+1.
\end{align*}
Suppose that the matrix $\Theta=(\theta_{\ell i})\in[0,1]^{L\times m}$ in
\eqref{eq:def-observed-mixtures}, satisfying $\Theta \bm{1}_m = \bm{1}_L$, has full
column rank.
For each $s,t\in[d]$ with $s<t$, suppose that
\begin{align*}
  \sum_{i\in[m]} \beta_i p_i(x_s,x_t) \notin \mathcal{E}_{st}
\end{align*}
holds for every $(\beta_1,\ldots,\beta_m)^T \in \mathbb{R}^m$ such that 
\begin{align*}
  \sum_{i\in[m]} \beta_i = 1
  \quad\text{and}\quad 
  (\beta_i)_{i\in[m]\setminus R_{st}} \neq \bm{0}
  \in \mathbb{R}^{m-|R_{st}|}. 
\end{align*} 
Define
\begin{align*}
  \mathcal{U}
  := \Bigl\{
        \sum_{\ell\in[L]} r_\ell U_\ell(\x) \in \mathcal{S}
        \,\Bigm|\,
        r_\ell\in\mathbb{R},\
        \sum_{\ell\in[L]} r_\ell = 1
      \Bigr\}.
\end{align*}
Then
\begin{align*}
  \bigcup_{\substack{s,t\in[d]\\ s<t}}
  \bigl\{
    p\in\mathcal{U}
    \,\bigm|\,
    p(x_s,x_t)\in \mathcal{E}_{st}
  \bigr\}
  =
 \Bigl\{\, p_i \Bigm| i\in \bigcup_{\substack{s,t\in[d]\\ s<t}} R_{st}  \,\Bigr\}. 
\end{align*}
 In particular, if $\bigcup_{\substack{s,t\in[d]\\ s<t}} R_{st} = [m]$, 
 then all component densities $p_i$, $i\in[m]$, are identifiable. 
\end{corollary}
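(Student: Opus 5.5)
The plan is to reduce the corollary to Theorem~\ref{thm:marginal-independence-identifiability}, applied pair by pair, and then take the union over all coordinate pairs. Fix $s<t$ and take $p=\sum_{\ell}r_\ell U_\ell\in\mathcal{U}$ with $p(x_s,x_t)\in\mathcal{E}_{st}$. Writing $\beta_i=\sum_\ell r_\ell\theta_{\ell i}$ as before, we have $\sum_i\beta_i=1$ and $p=\sum_i\beta_i p_i$, whose $(s,t)$-marginal is $\sum_i\beta_i p_i(x_s,x_t)$.

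The first step uses the no-cancellation hypothesis. Since this marginal lies in $\mathcal{E}_{st}$, the hypothesis forces $\beta_i=0$ for all $i\notin R_{st}$. The marginal therefore becomes $\sum_{i\in R_{st}}\beta_i\,p_i(x_s)p_i(x_t)$, with $\sum_{i\in R_{st}}\beta_i=1$.

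The second step applies Theorem~\ref{thm:product-form-affine-identifiability} with $d=2$ to the product densities $\{p_i(x_s)p_i(x_t)\}_{i\in R_{st}}$ on $\mathcal{X}_s\times\mathcal{X}_t$. Its subset-rank hypothesis is exactly the assumption imposed for every $I\subseteq R_{st}$ with $|I|\ge2$. It yields the coefficient-level conclusion that $(\beta_i)_{i\in R_{st}}=\bm e_i$ for some $i\in R_{st}$. Together with the first step, the full vector is $\bm\beta=\bm e_i\in\mathbb{R}^m$. Hence $p=\sum_j\beta_jp_j=p_i$ as a full $d$-dimensional density, not merely its $(s,t)$-marginal. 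This is the point emphasized after Theorem~\ref{thm:marginal-independence-identifiability}, and it is why I route through the coefficient-level statement rather than the set equality $\mathcal{U}_{st}\cap\mathcal{E}_{st}=\{p_i(x_s)p_i(x_t)\}$. Two degenerate cases need a remark. If $|R_{st}|=1$, the sum-to-one constraint already forces $\bm\beta=\bm e_i$. If $R_{st}=\emptyset$, the first step contradicts $\sum_i\beta_i=1$, so the set on the left is empty for that pair. This gives the inclusion ``$\subseteq$'' for each pair.

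For the reverse inclusion, take $i\in R_{st}$ and set $\bm r=\Theta(\Theta^T\Theta)^{-1}\bm e_i$. This is well defined by full column rank. It satisfies $\bm r^T\bm 1_L=\bm e_i^T(\Theta^T\Theta)^{-1}\Theta^T\Theta\bm 1_m=1$ and $\bm r^T\Theta=\bm e_i^T$, so $\bm r^T\bm U=p_i\in\mathcal{U}$. Its $(s,t)$-marginal is $p_i(x_s)p_i(x_t)\in\mathcal{E}_{st}$ by the definition of $R_{st}$. Taking the union over $s<t$ of both inclusions gives the claimed set equality. If $\bigcup R_{st}=[m]$, every $p_i$ is characterized intrinsically from $\bm U$ alone, up to labeling, which is the identifiability statement.

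The main obstacle is the second step. Theorem~\ref{thm:product-form-affine-identifiability} must be invoked on the subfamily indexed by $R_{st}$, with $R_{st}$ relabeled as $[|R_{st}|]$. I also need its coefficient-level conclusion rather than only equality of densities, and I must check that its hypotheses, namely the subset-rank condition on all $I\subseteq R_{st}$ and the product form of the reduced components, match the corollary's assumptions. The rest is bookkeeping.
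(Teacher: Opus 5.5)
Your proposal is correct and follows essentially the same route as the paper. The paper obtains the corollary by applying Theorem~\ref{thm:marginal-independence-identifiability} pair by pair: no-cancellation together with Theorem~\ref{thm:product-form-affine-identifiability} at $d=2$ gives the coefficient-level conclusion $\bm r^T\Theta=\bm e_i^T$, which lifts equality of $(s,t)$-marginals to $\bm r^T\bm U=p_i$; the choice $\bm r=\Theta(\Theta^T\Theta)^{-1}\bm e_i$ gives the reverse inclusion; and the union over pairs completes the argument. You inline that theorem's proof rather than citing it, and your treatment of the degenerate cases $|R_{st}|\le 1$ agrees with the paper's observation that $R_{st}$ must be nonempty for any admissible candidate.
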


\begin{remark}[Partial recovery and completion beyond marginal independence]
\label{rem:partial-recovery-completion}
Corollary~\ref{cor:full-recovery-marginal-independence} identifies precisely the
components whose indices belong to
$\mathcal I:=\cup_{s,t}R_{st}$. 
If
\(\mathcal J:=[m]\setminus\mathcal I\) is nonempty, 
the marginal-independence argument by itself does not recover the components in
\(\mathcal J\).  These unresolved components can be completed only under
additional assumptions such as the joint irreducibility on $\mathcal{J}$. 
The details are presented in Appendix~\ref{app:partial-completion_theorems}.
\end{remark}

\subsection{Testability of the Assumptions}
\label{subsec:testability-assumptions}

This subsection discusses which assumptions in the recovery problem have direct empirical implications.  Marginal independence gives an observable restriction on an affine candidate formed from the observed mixtures.  
Irreducibility and anchor-type conditions, on the other hand, are assumptions on latent components or residual distributions.  They are used as identification assumptions, 
rather than as directly testable conditions.

We first recall the standard irreducibility notion used in mixture proportion estimation~\cite{blanchard2010semi,scott2015rate}.  For probability distributions $F$ and $H$, define
\[
  \kappa(F\mid H)
  :=
  \sup\Bigl\{
    \alpha\in[0,1]
    \,\Bigm|\,
    F=\alpha H+(1-\alpha)G
    \text{ for some probability distribution }G
  \Bigr\}.
\]
We say that $F$ is irreducible with respect to $H$ if
\[
  \kappa(F\mid H)=0.
\]
Equivalently, no positive amount of $H$ can be removed from $F$ while leaving a probability distribution.  
Anchor conditions are support-level conditions of a similar kind.  For example, they may require a measurable set $A_i$ such that
\[
  p_i(A_i)>0,
  \quad
  p_j(A_i)=0,\ \ j\in [m]\setminus\{i\}. 
\]
Thus an anchor condition asserts that one component has positive mass on a region where the other components have zero mass.

These conditions are not directly testable from unlabeled mixtures in general.
They refer to latent component distributions, residual distributions, or exact support relations.  For instance, consider a binary mixture
\[
  U=\pi P+(1-\pi)N.
\]
At the population level, the observable pair $(P,U)$ determines at most the maximal removable proportion $\kappa(U\mid P)$.  It does not by itself certify the residual condition
\[
  \kappa(N\mid P)=0
\]
unless the true proportion $\pi$ or the residual component $N$ is known.  Exact zero-mass statements in anchor conditions are even harder to certify from finite samples, especially for continuous distributions.

Marginal independence is different.  Fix an affine candidate
\[
  q_{\bm r}:=\bm r^T\bm U,
  \qquad
  \bm r^T\bm 1_L=1.
\]
For a coordinate pair $(s,t)$, the condition
\[
  q_{\bm r}(x_s,x_t)=q_{\bm r}(x_s)q_{\bm r}(x_t)
\]
is a product-marginal relation for the candidate $q_{\bm r}$.  
Since $q_{\bm r}$ is constructed from the observed mixtures, this relation has a direct empirical implication.  One can check whether the candidate is compatible with marginal independence on the selected coordinate pair.  
This does not prove in advance that an unobserved true component is marginally independent, but it makes the identifying signal falsifiable at the candidate level.

In this paper, irreducibility and anchor-type conditions should therefore be read as latent-decomposition assumptions.  
As shown in Appendix~\ref{app:partial-completion_theorems}, 
they provide additional identification power after some components have been recovered, but they should not be presented as directly testable from the unlabeled mixtures alone.  
By contrast, marginal independence provides an observable product-marginal check, and Section~\ref{sec:mmd-estimation} introduces an MMD-based criterion for carrying out this check from samples.

\section{Estimation with Maximum Mean Discrepancy}
\label{sec:mmd-estimation}
In this section we develop an estimation method of the vector ${\bm{r}}$ such that 
${\bm r}^T\bm{U}= p_i$ for some $i\in[m]$. For that purpose, the kernel mean embedding (KMM) is employed.

\subsection{MMD criterion and empirical estimator}
\label{subsec:mmd-criterion}
Consider marginal distributions $U_\ell(x, x')$ of $U_\ell(\x)$, 
$\ell \in [L]$, for $(x,x')\in\mathcal{X}_1 \times \mathcal{X}_2$. 
The same construction applies to the marginal densities on 
$\mathcal{X}_s \times \mathcal{X}_t$ for any fixed $s,t\in[d]$ with $s<t$. 
For each pair $\ell,\ell' \in [L]$, we define
\begin{align*}
  U_{\ell \otimes \ell'}(x,x') := U_\ell(x)\, U_{\ell'}(x'),
\end{align*}
that is, $U_{\ell \otimes \ell'}$ is the product of the one-dimensional
marginals of $U_\ell$ and $U_{\ell'}$.
Theorem~\ref{thm:marginal-independence-identifiability} ensures that, under its
assumptions,
\begin{align}
  \label{eq:product-marginal-identity}
  \sum_{\ell\in[L]} r_\ell U_\ell(x,x')
 = \Bigl(\sum_{\ell\in[L]} r_\ell U_\ell(x)\Bigr)
 \Bigl(\sum_{\ell'\in[L]} r_{\ell'} U_{\ell'}(x')\Bigr) 
 = \sum_{\ell,\ell'\in[L]} r_\ell r_{\ell'} U_{\ell \otimes \ell'}(x,x')
\end{align}
holds if and only if $\sum_{\ell} r_\ell U_\ell = p_i$ for some
$i \in R_{12}$.
Our goal in this section is to develop a learning algorithm that estimates
all such weight vectors $\bm{r}$ from samples drawn from $U_\ell$,
$\ell \in [L]$.

Suppose that i.i.d.\ samples $\{\bm{x}_{\ell,i}\}_{i=1}^{n_\ell} \sim U_\ell(x,x')$, 
$\ell \in [L]$, are observed, 
where $\bm{x}_{\ell,i} = (x_{\ell,i}, x'_{\ell,i}) \in \mathcal{X}_1 \times \mathcal{X}_2$. 
The goal is to estimate $\bm{r} = (r_1, \ldots, r_L)^T$ that satisfies \eqref{eq:product-marginal-identity}. 
Instead of the probability densities themselves, we use the maximum mean discrepancy (MMD) 
and its empirical estimate with a characteristic kernel~\cite{sriperumbudur2010hilbert}. 
Let $\mathcal{H}_k$ be the RKHS endowed with the kernel $k$ on 
$\mathcal{X}_1 \times \mathcal{X}_2$. 
The kernel embedding $\phi:\mathcal{S}\rightarrow\mathcal{H}_k$ is defined by 
\begin{align*}
 \phi(q) = \int_{\mathcal{X}} k(\bm{x},\cdot) q(\bm{x})\,\rmd{\x},\quad q\in\mathcal{S}
\end{align*}
The kernel mean embedding for empirical distributions is similarly defined, 
and the same notation $\phi$ is used. 

The maximum mean discrepancy (MMD) from $\mathcal{S}$ to its projection onto $\mathcal{E}$ 
is defined by 
\begin{align*}
 \mathrm{MMD}(q) 
 &:=
 \big\| \phi(q)-\phi(\Pi_{\mathcal{E}}[q]) \big\|_{\mathcal{H}_k},\quad q\in\mathcal{S}, 
\end{align*}
where $\|\cdot\|_{\mathcal{H}_k}$ is the RKHS norm of $\mathcal{H}_k$. 
Since \(k\) is characteristic on probability measures, its mean embedding is injective on finite signed measures with common total mass~\cite{sriperumbudur2010hilbert,simon2018kernel}.
In our problem, our target is to find the parameter $\bm{r}$ that minimizes
\begin{align*}
\mathrm{MMD}(\bm{r}^T\U)
 &=
 \bigg\|
  \sum_{\ell} r_\ell \mathbb{E}_{U_\ell}[k(\bm{x},\cdot)]
 -\sum_{\ell,\ell'} r_\ell r_{\ell'} \mathbb{E}_{U_{\ell\otimes\ell'}}[k((x_{\ell},x_{\ell'}'),\cdot)]
 \bigg\|_{\mathcal{H}_k}. 
\end{align*}
In practice, we compute $ \mathrm{MMD}(\bm{r}^T\widehat{\U})$, that is, 
\begin{align*}
 \mathrm{MMD}(\bm{r}^T\widehat{\U})
 &=
 \bigg\|
 \sum_{\ell}\frac{r_\ell }{n_\ell}\sum_{i=1}^{n_\ell}k(\bm{x}_{\ell,i},\cdot)
 -
 \sum_{\ell,\ell'}\frac{r_\ell}{n_\ell}\frac{r_{\ell'}}{n_{\ell'}}\!\!\!\!
 \sum_{i\in[n_\ell],j\in[n_{\ell'}]}\!\!\!\!\!\!\!\! k((x_{\ell,i},x_{\ell',j}'),\cdot)
 \bigg\|_{\mathcal{H}_k}, 
\end{align*}
where $\widehat{\U}=(\widehat{U}_\ell)_{\ell\in[L]}$ is the collection of the empirical distributions. 

Let us evaluate the gap between $ \mathrm{MMD}(\bm{r}^T\U)$ and $\mathrm{MMD}(\bm{r}^T\widehat{\U})$
over the $(L-1)$-dimensional extended simplex 
\begin{align*}
 \Delta_{\bar{r}} := \bigl\{ \bm{r} \in \mathbb{R}^L \,\big|\, 
 \bm{r}^T \bm{1} = 1,\ \|\bm{r}\|_1 \leq \bar{r}\big\}, 
\end{align*}
where $\bar{r}$ is a large constant greater than or equal to $1$. 
The non-negativity constraint $\bm{r} \ge \bm{0}$ is not imposed in general.

\begin{theorem}
\label{thm:pmmd-uniform-bound}
Let us define $N_{1/2}:=\sum_{\ell\in[L]} n_\ell^{-1/2}$. 
Suppose that the kernel function $k$ on $\mathcal{X}_1\times\mathcal{X}_2$ satisfies
$\sup_{\bm{x}} k(\bm{x},\bm{x})\le K^2$. Then, with probability at least
$1-\delta$,
\begin{align*}
 \sup_{\bm r\in\Delta_{\bar r}}
 \left|\mathrm{MMD}(\bm r^T\U)-\mathrm{MMD}(\bm r^T\widehat\U)\right|
 \le
 10\bar{r}^2K\Bigg(1+\sqrt{\log\frac{2L^2}{\delta}}\Bigg)N_{1/2}.
\end{align*}
\end{theorem}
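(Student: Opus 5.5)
We first reduce the statement to deviations of individual kernel mean embeddings and then use concentration for Hilbert-space-valued sums.

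\emph{Reduction.} By the reverse triangle inequality in $\mathcal{H}_k$,
\[
\left|\mathrm{MMD}(\bm r^T\U)-\mathrm{MMD}(\bm r^T\widehat\U)\right|
\le \Bigl\|\sum_\ell r_\ell(\mu_\ell-\widehat\mu_\ell)\Bigr\|_{\mathcal{H}_k}
+\Bigl\|\sum_{\ell,\ell'} r_\ell r_{\ell'}(\mu_{\ell\otimes\ell'}-\widehat\mu_{\ell\otimes\ell'})\Bigr\|_{\mathcal{H}_k}.
\]
Here $\mu_\ell=\mathbb{E}_{U_\ell}[k(\bm x,\cdot)]$ and $\mu_{\ell\otimes\ell'}=\mathbb{E}_{U_{\ell\otimes\ell'}}[k((x,x'),\cdot)]$, and the hatted quantities are their empirical counterparts. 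Both terms are at most $\|\bm r\|_1\max_\ell\|\mu_\ell-\widehat\mu_\ell\|$ and $\|\bm r\|_1^2\max_{\ell,\ell'}\|\mu_{\ell\otimes\ell'}-\widehat\mu_{\ell\otimes\ell'}\|$, respectively. Using $\|\bm r\|_1\le\bar r$ with $\bar r\ge1$, the supremum over $\Delta_{\bar r}$ is at most $\bar r^2$ times a quantity that does not depend on $\bm r$. Thus uniformity in $\bm r$ costs nothing. To obtain the sum $N_{1/2}$ rather than a maximum, we can keep the weighted sums, $\sum_\ell |r_\ell|\,\epsilon_\ell\le \bar r\sum_\ell\epsilon_\ell$, where each $\epsilon_\ell$ is of order $n_\ell^{-1/2}$.

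\emph{Single-sample terms.} For each $\ell$, $\widehat\mu_\ell-\mu_\ell$ is an average of $n_\ell$ i.i.d.\ centered $\mathcal{H}_k$-valued variables bounded by $2K$. The standard bound (expectation at most $K/\sqrt{n}$, plus McDiarmid with bounded differences $2K/n$) gives
\[
\|\mu_\ell-\widehat\mu_\ell\|\le \frac{K}{\sqrt{n_\ell}}\Bigl(1+\sqrt{2\log(1/\delta')}\Bigr)
\]
with probability at least $1-\delta'$.

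\emph{Product terms (the main obstacle).} The estimate $\widehat\mu_{\ell\otimes\ell'}=\frac{1}{n_\ell n_{\ell'}}\sum_{i,j}k((x_{\ell,i},x'_{\ell',j}),\cdot)$ is a two-sample V-statistic and not an i.i.d.\ average. When $\ell=\ell'$, it also contains diagonal terms $i=j$ that are dependent. We handle it through a decomposition in the style of McDiarmid: viewed as a function of all $n_\ell+n_{\ell'}$ samples, changing a single $x_{\ell,i}$ moves it by at most $2K/n_\ell$ in norm, and similarly for the second sample. For $\ell=\ell'$, a single point affects both coordinates, and the change is at most $4K/n_\ell$. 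Hence the bounded-difference variance proxy is of order $K^2(1/n_\ell+1/n_{\ell'})$.

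For the expectation, we split
\[
\widehat\mu_{\ell\otimes\ell'}-\mu_{\ell\otimes\ell'}=(\widehat\mu_{\ell\otimes\ell'}-\tilde\mu)+(\tilde\mu-\mu_{\ell\otimes\ell'}),
\]
where $\tilde\mu$ averages over the first sample only, after conditioning on the second. Each piece is a conditional i.i.d.\ average, which gives $\mathbb{E}\|\cdot\|\le K(n_\ell^{-1/2}+n_{\ell'}^{-1/2})$. The diagonal bias for $\ell=\ell'$ adds $O(K/n_\ell)\le O(K/\sqrt{n_\ell})$, and this is absorbed into the constant. So with probability at least $1-\delta'$,
\[
\|\mu_{\ell\otimes\ell'}-\widehat\mu_{\ell\otimes\ell'}\|\le c\,K\,(n_\ell^{-1/2}+n_{\ell'}^{-1/2})\Bigl(1+\sqrt{\log(1/\delta')}\Bigr).
\]

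\emph{Union bound and assembly.} We take a union bound over the $L$ single terms and the $L^2$ product terms, at most $2L^2$ events, by setting $\delta'=\delta/(2L^2)$. Summing with weights $|r_\ell||r_{\ell'}|$ gives
\[
\sum_{\ell,\ell'}|r_\ell||r_{\ell'}|(n_\ell^{-1/2}+n_{\ell'}^{-1/2})\le 2\bar r\sum_\ell|r_\ell|n_\ell^{-1/2}\le 2\bar r^2 N_{1/2}.
\]
Similarly, the linear part is bounded by $\bar r N_{1/2}$. Collecting constants from the McDiarmid steps (the factor $\sqrt2$ and the doubled bounded differences for $\ell=\ell'$) should stay below $10$, which yields the stated bound.
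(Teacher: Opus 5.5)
\paragraph{Overall assessment.}
Your overall structure matches the paper's proof. You use the reverse triangle inequality with the weighted sums $\sum_\ell|r_\ell|\,\|\widehat\mu_\ell-\mu_\ell\|_{\mathcal H_k}$ and $\sum_{\ell,\ell'}|r_\ell r_{\ell'}|\,\|\widehat\mu_{\ell\otimes\ell'}-\mu_{\ell\otimes\ell'}\|_{\mathcal H_k}$. You apply McDiarmid with bounded differences $2K/n_\ell$, and $4K/n_\ell$ when $\ell=\ell'$. You account for a diagonal bias of order $K/n_\ell$ and take a union bound over at most $2L^2$ events with $\delta'=\delta/(2L^2)$.

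\paragraph{The gap: the expectation bound for $\ell=\ell'$.}
The step that does not work as written is the expectation bound for the product embedding when $\ell=\ell'$. Your split conditions on the ``second sample'' and treats $\widehat\mu_{\ell\otimes\ell'}-\tilde\mu$ as a conditionally i.i.d.\ average over the first sample. For $\ell\neq\ell'$ this is fine, since the samples from $U_\ell$ and $U_{\ell'}$ are independent. It even gives $K(n_\ell^{-1/2}+n_{\ell'}^{-1/2})$, sharper than the paper's Efron--Stein constant $2K$.

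For $\ell=\ell'$, however, the two samples are the two coordinates $x_{\ell,i}$ and $x'_{\ell,i}$ of the same i.i.d.\ pairs, and these are dependent under $U_\ell$. Conditionally on all second coordinates, the first coordinates are independent, but each has the conditional law given $x'_{\ell,i}$ rather than the first marginal. So $\widehat\mu_{\ell\otimes\ell}-\tilde\mu$ is not a centered conditional average. Accounting for the $i=j$ terms only handles the bias. The remaining part $\frac{1}{n_\ell^2}\sum_{i\neq j}k((x_{\ell,i},x'_{\ell,j}),\cdot)$ is a U-statistic with dependent summands, and your argument gives no bound on its fluctuation.

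\paragraph{Two repairs.}
The paper's repair is the Hilbert-space Efron--Stein inequality. It needs only the bounded differences you already computed: with $c_i=4K/n_\ell$ it gives $\mathbb{E}\|\widehat\mu_{\ell\otimes\ell}-\mathbb{E}\widehat\mu_{\ell\otimes\ell}\|_{\mathcal H_k}\le 2\sqrt2K/\sqrt{n_\ell}$.

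Alternatively, you can keep your conditioning idea but apply it leave-one-out.
\begin{itemize}
\item Set $h(y):=\mathbb{E}\,k((X,y),\cdot)$ with $X$ drawn from the first marginal of $U_\ell$, and $\tilde\mu:=\frac{1}{n_\ell}\sum_j h(x'_{\ell,j})$.
\item Apply the triangle inequality over $j$ first.
\item For fixed $j$, condition on the pair $\bm x_{\ell,j}$. Under this conditioning, $\{x_{\ell,i}\}_{i\neq j}$ are i.i.d.\ from the marginal.
\end{itemize}
This yields $\mathbb{E}\|\widehat\mu_{\ell\otimes\ell}-\mu_{\ell\otimes\ell}\|_{\mathcal H_k}\le 2K/\sqrt{n_\ell}+2K/n_\ell$.

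\paragraph{The constant.}
Your claim that the constants ``should stay below 10'' is asserted rather than checked, though it does hold. Suppose the single-sample bound is $c_1K(1+\sqrt a)n_\ell^{-1/2}$ and the per-pair bound is $c_2K(1+\sqrt a)(n_\ell^{-1/2}+n_{\ell'}^{-1/2})$. Then the assembly gives $(c_1+2c_2)\bar r^2K(1+\sqrt a)N_{1/2}$. The paper has $c_1=2$ and $c_2=4$, which gives exactly $10$. Your constants with the Efron--Stein repair give $c_1=\sqrt2$ and $c_2=1+\sqrt2$, hence $2+3\sqrt2\approx 6.24$.
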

The proof is deferred to Appendix~\ref{app:mmd-estimation-proofs}. 

As shown in \cite{tolstikhin2016minimax}, the minimax convergence rate of 
the standard MMD estimator with radial kernels is of order $1/\sqrt{n}$, 
where $n$ denotes the sample size.
Theorem~\ref{thm:pmmd-uniform-bound} ensures that, in our setting, 
$\mathrm{MMD}(\bm{r}^{\top}\widehat{\U})$ achieves the same convergence order. 

This result shows that the empirical MMD criterion is uniformly consistent on $\Delta_{\bar r}$. 
When all sample sizes satisfy $n_\ell \asymp n$ for 
some $n$, 
the deviation is of order $O_p(\bar r^2 K L\sqrt{\log{L}}  n^{-1/2})$, so the
minimizer of $\mathrm{MMD}(\bm{r}^T \widehat{\bm{U}})$ approximates the
population minimizer of $\mathrm{MMD}(\bm{r}^T \bm{U})$ with the usual
parametric rate up to the approximate linear factor in $L$.

\subsection[Estimation of Mixing Weights: The Case L=m]{Estimation of Mixing Weights: The Case $L=m$}
\label{subsec:weight-estimation-square}

In this section, 
we assume $L = m$ and that the mixing matrix $\Theta \in [0,1]^{m \times m}$ is
invertible. 
In this case, 
${\bm{U}}=\Theta{\bm p}$ and $R{\bm{U}}={\bm p}$ leads to $R=\Theta^{-1}$. 
The condition \eqref{eq:product-marginal-identity} holds if and only if 
\begin{align*}
  \bm{r} \in \{\, (\Theta^T)^{-1} \bm{e}_i \mid i \in R_{12} \,\}  \subset \Delta_{\bar{r}} 
\end{align*}
for any $\bar{r}>\max_{i\in[m]}\|(\Theta^T)^{-1}\bm e_i\|_1$. 

We define the quartic functions $d(\bm{r})$ and $\widehat{d}(\bm{r})$ as follows: 
\begin{align*}
 d(\bm{r}) = \bigl(\mathrm{MMD}(\bm{r}^T\bm{U})\bigr)^2,
 \qquad  
 \widehat{d}(\bm{r}) = \bigl(\mathrm{MMD}(\bm{r}^T\widehat{\bm{U}})\bigr)^2,
 \qquad 
 \bm{r}\in \Delta_{\bar{r}}. 
\end{align*}
Since $k$ is characteristic, $d(\bm r)=0$ is equivalent to the product-marginal identity
\eqref{eq:product-marginal-identity}. 
Hence, by Theorem~\ref{thm:marginal-independence-identifiability} and the invertibility of $\Theta$, 
the zeros of $d$ on $\Delta_{\bar r}$ are exactly
\[
  \bm r_i := (\Theta^T)^{-1}\bm e_i,\qquad i\in R_{12}.
\]
The following theorems describe the level sets of $d(\bm{r})$ and 
the relationship between $d$ and $\widehat{d}$. 
Their proofs are given in Appendix~\ref{app:mmd-estimation-proofs}. 


\begin{theorem}
 \label{thm:population-level-set}
 Suppose that, for each $i\in R_{12}$, the Hessian matrix of $d(\bm{r})$ at
 $\bm{r}_i$ is positive definite on the tangent space of $\Delta_{\bar{r}}$. 
 Then, for sufficiently small $c>0$, the level set
 \[
   \{\, \bm{r}\in \Delta_{\bar{r}} \mid d(\bm{r}) \le c \,\}
 \]
 consists of $|R_{12}|$ disjoint subsets. 
\end{theorem}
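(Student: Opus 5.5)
The plan is to use three facts: $d$ is a continuous polynomial on the compact set $\Delta_{\bar r}$; its zero set there is the finite set $\{\bm r_i\mid i\in R_{12}\}$; and near each zero $d$ grows quadratically because of the Hessian assumption.

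\textbf{Step 1 (polynomial structure).} Write $\mu_\ell:=\phi(U_\ell)$ and $\nu_{\ell\ell'}:=\phi(U_{\ell\otimes\ell'})$. Then
\[
 d(\bm r)=\Bigl\|\sum_\ell r_\ell\mu_\ell-\sum_{\ell,\ell'}r_\ell r_{\ell'}\nu_{\ell\ell'}\Bigr\|_{\mathcal H_k}^2 .
\]
This is a polynomial of degree at most four in $\bm r$, whose coefficients are RKHS inner products. Hence $d$ is $C^\infty$ and nonnegative. Since $\Delta_{\bar r}$ is an intersection of an affine hyperplane with an $\ell_1$-ball, it is compact.

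\textbf{Step 2 (zeros).} As noted before the statement, because $k$ is characteristic, Theorem~\ref{thm:marginal-independence-identifiability} and the invertibility of $\Theta$ give
\[
 d^{-1}(0)\cap\Delta_{\bar r}=\{\bm r_i\}_{i\in R_{12}} .
\]
These points are distinct because $\Theta^T$ is invertible. We assume $\bar r$ is large enough that they lie in the interior of the $\ell_1$-ball.

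\textbf{Step 3 (local quadratic control).} Fix $i\in R_{12}$. Since $d\ge0$ and $d(\bm r_i)=0$, the gradient of $d$ along the tangent space $T=\{\bm v\mid\bm v^T\bm 1=0\}$ vanishes at $\bm r_i$. Let $\lambda_i>0$ be the smallest eigenvalue of the restricted Hessian. Taylor's theorem gives $\rho_i>0$ such that
\[
 d(\bm r)\ge \tfrac{\lambda_i}{4}\|\bm r-\bm r_i\|^2 \quad\text{for all }\bm r\in\Delta_{\bar r}\text{ with }\|\bm r-\bm r_i\|\le\rho_i .
\]
Now choose $\rho<\min_i\rho_i$ and also $\rho<\tfrac12\min_{i\ne j}\|\bm r_i-\bm r_j\|$, so the closed balls $B_i:=B(\bm r_i,\rho)$ are pairwise disjoint.

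\textbf{Step 4 (separation; the main point).} Let
\[
 K_\rho:=\Delta_{\bar r}\setminus\bigcup_i \operatorname{int}B_i .
\]
This set is compact and contains no zero of $d$, so $c_0:=\min_{K_\rho}d>0$. Now take
\[
 0<c<\min\bigl(c_0,\ \min_i\lambda_i\rho^2/4\bigr).
\]
Then the level set $\{d\le c\}$ is contained in $\bigcup_i B_i$. Each piece $\{d\le c\}\cap B_i$ contains $\bm r_i$, so it is nonempty. By Step 3 it lies in the smaller ball $B\bigl(\bm r_i,\sqrt{4c/\lambda_i}\bigr)$, which is strictly inside $B_i$. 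Consequently, the level set is the union of exactly $|R_{12}|$ nonempty, pairwise disjoint, closed sets, each at positive distance from the others. These are the claimed disjoint subsets.

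The only delicate step is Step 4, in particular ensuring that no part of the level set escapes the balls, including near the boundary $\|\bm r\|_1=\bar r$. Compactness together with the exact identification of the zero set in Step 2 handles this. If one also wants each piece to be connected, I would use the local convexity of $d$ on $B_i\cap\Delta_{\bar r}$, which follows from the positive-definite Hessian for small $\rho$: each sublevel piece is then convex, hence connected.
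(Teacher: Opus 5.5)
Your proof is correct and follows essentially the same route as the paper's: a local quadratic lower bound on $d$ near each $\bm r_i$ from the Hessian assumption, a strictly positive minimum of $d$ on $\Delta_{\bar r}$ outside small disjoint balls (by compactness and the zero characterization), and convexity of each piece for connectedness. The paper treats that convexity/connectedness step as part of the conclusion, since it is what gives ``$|R_{12}|$ disjoint subsets'' its content, so keep it in the main argument rather than as an optional add-on; your ordering of constants (fix $\rho$, then $c$) is cleaner than the paper's.
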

\begin{theorem}
\label{thm:empirical-level-set}
 Suppose that the Hessian matrix of $d(\bm{r})$ at $\bm{r}_i$ is positive definite 
 on the tangent space of $\Delta_{\bar{r}}$ for all $i \in R_{12}$. 
 For some sufficiently small $\epsilon_0 > 0$ and $c_0 > 0$, assume that 
 \begin{align*}
  \|d - \widehat{d}\|_\infty \le \epsilon \le \epsilon_0, 
  \quad \text{and} \quad 
  2\epsilon \le c \le c_0. 
 \end{align*}
 Then $\{\bm{r}\in\Delta_{\bar{r}} \mid \widehat{d}(\bm{r}) \le c\}$ consists of $|R_{12}|$ disjoint subsets, 
 and each subset contains exactly one $\bm{r}_i$, $i \in R_{12}$.
\end{theorem}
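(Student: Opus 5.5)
Our plan is to combine the population picture of Theorem~\ref{thm:population-level-set} with a sandwich argument comparing sublevel sets of $d$ and $\widehat d$. By the discussion preceding Theorem~\ref{thm:population-level-set}, the zeros of $d$ on $\Delta_{\bar r}$ are exactly the isolated points $\bm r_i$, $i\in R_{12}$. Since the Hessian is positive definite on the tangent space, we can fix a radius $\rho>0$ and constants $0<\lambda\le\Lambda$ such that the balls $B_i:=\{\bm r\in\Delta_{\bar r}:\|\bm r-\bm r_i\|\le\rho\}$ are pairwise disjoint. On each ball we want the two-sided bound
\[
\lambda\|\bm r-\bm r_i\|^2\le d(\bm r)\le\Lambda\|\bm r-\bm r_i\|^2 .
\]
This follows from a second-order Taylor expansion, because $d$ is a polynomial (quartic) in $\bm r$, $d(\bm r_i)=0$, and the gradient vanishes at the minimum. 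Here we take $\bar r$ large enough that each $\bm r_i$ lies in the relative interior of $\Delta_{\bar r}$.

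Next we separate the balls from the rest of the domain. The set $\Delta_{\bar r}\setminus\bigcup_i \operatorname{int}B_i$ is compact, and $d$ is continuous and strictly positive on it. Hence its minimum $\eta>0$. We choose $c_0<\min(\eta/2,\lambda\rho^2/2)$ and $\epsilon_0\le c_0/2$.

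With these constants the argument runs as follows. Suppose $\widehat d(\bm r)\le c$.
\begin{itemize}
\item Since $d(\bm r)\le c+\epsilon\le 2c_0<\eta$, the point $\bm r$ lies in some $B_i$.
\item Moreover $\lambda\|\bm r-\bm r_i\|^2\le c+\epsilon\le 3c/2$, so $\bm r$ lies in the smaller ball $\{\|\bm r-\bm r_i\|\le\sqrt{3c/(2\lambda)}\}$. This ball sits strictly inside $B_i$ because $3c/2<\lambda\rho^2$.
\end{itemize}
So the empirical sublevel set splits into the disjoint pieces $A_i:=\{\widehat d\le c\}\cap B_i$. Each piece is nonempty and contains $\bm r_i$, since $\widehat d(\bm r_i)\le d(\bm r_i)+\epsilon=\epsilon\le c/2<c$; this is where the hypothesis $2\epsilon\le c$ is used. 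Distinct pieces lie in disjoint balls and are relatively closed, so they are genuinely separated.

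This gives exactly $|R_{12}|$ pieces, each containing exactly one $\bm r_i$, provided ``disjoint subsets'' is read as this partition. If connected components are intended, more work is needed: we would have to show each $A_i$ is connected, and that is the main obstacle. Only the sup-norm closeness of $\widehat d$ to $d$ is available, which does not control the topology of sublevel sets. Our first attempt would be to exploit that $\widehat d$ is also a quartic polynomial in $\bm r$, with coefficients given by RKHS inner products of empirical embeddings. Its Hessian is then close to that of $d$ on $B_i$ after a shrinking of $\rho$, because polynomials of fixed degree on a compact set have equivalent norms (Markov-type inequalities). Then $\widehat d$ is strictly convex on $B_i$, so $A_i$ is convex and hence connected. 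This would only require decreasing $\epsilon_0$ further.
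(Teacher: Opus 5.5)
Your proposal is correct and follows essentially the paper's proof. The paper likewise fixes disjoint balls $B_i$ on which $d$ is strongly convex, sets $\zeta_0=\min d$ off the balls (your $\eta$), and takes $c_0=\zeta_0/2$ and $\epsilon_0=\zeta_0/8\wedge\epsilon_0'$, where $\epsilon_0'$ comes from exactly your finite-dimensional quartic-polynomial perturbation argument keeping $\widehat d$ strongly convex on the balls; it then concludes that each $A_i$ contains $\bm r_i$, the $A_i$ are disjoint, and each is convex, hence connected.

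Two adjustments. The connected-component reading is the intended one, so your Hessian-perturbation step belongs in the proof rather than being optional. Also, $\rho$ must be fixed small enough for the Hessian of $d$ to be positive definite on the balls \emph{before} you choose $\eta$, $c_0$, $\epsilon_0$, rather than shrunk afterwards. Your extra quadratic-growth localization is harmless but not needed.
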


Based on Theorem~\ref{thm:pmmd-uniform-bound} and
Theorem~\ref{thm:empirical-level-set}, 
we can identify the number of independent components $|R_{12}|$ 
and the weight vectors $\bm{r}_i$, $i \in R_{12}$, 
using the empirical loss $\mathrm{MMD}(\bm{r}^T\widehat{\bm{U}})$. 
By Theorem~\ref{thm:pmmd-uniform-bound}, 
for sufficiently small $N_{1/2}$, the inequality
\begin{align*}
\big\|\sqrt{d} - \sqrt{\widehat{d}}\big\|_\infty 
 \leq 10(1+\sqrt{t+\log 2L^2}) \,\bar{r}^2 K N_{1/2}
\end{align*}
holds with probability at least $1 - e^{-t}$.
An upper bound for $\mathrm{MMD}(\bm{r}^T\bm{U})$ and $\mathrm{MMD}(\bm{r}^T\widehat{\bm{U}})$ 
is given by $2K\bar{r}^2$ for $\bm{r}\in\Delta_{\bar{r}}$. 
Hence,
\begin{align*}
 \|d - \widehat{d}\|_\infty 
 &\leq 
 40 (1+\sqrt{t+\log 2L^2}) \,\bar{r}^4 K^2 N_{1/2}
\end{align*}
Therefore, with probability at least $1 - e^{-t}$ and for sufficiently small $N_{1/2}$, 
the level set 
\begin{align*}
\bigl\{\bm{r}\in\Delta_{\bar{r}} \,\big|\, (\mathrm{MMD}(\bm{r}^T\widehat{\bm{U}}))^2 
 \leq
 80 (1+\sqrt{t+\log 2L^2}) \,\bar{r}^4 K^2 N_{1/2}
\bigr\}
\end{align*}
consists of $|R_{12}|$ disjoint subsets, and each subset contains one of the points 
$\bm{r}_i$, $i \in R_{12}$.

\subsection[Estimation and Completion of the Mixing Matrix: The Case L=m]{Estimation and Completion of the Mixing Matrix: The Case $L=m$}
\label{subsec:pmmd-algorithm}

We now describe the estimation-and-completion procedure based on the MMD
criterion in Section~\ref{subsec:mmd-criterion} and the weight-vector characterization in
Section~\ref{subsec:weight-estimation-square}.
Throughout this subsection, we assume $L=m$ and that the mixing matrix
$\Theta\in[0,1]^{m\times m}$ is invertible.
In this case, $\bm U=\Theta\bm p$, and hence the relation $R\bm U=\bm p$
implies $R=\Theta^{-1}$.
Accordingly, each weight vector is uniquely determined by
\[
  \bm r_i=(\Theta^T)^{-1}\bm e_i,
  \qquad i\in[m].
\]

The procedure consists of two stages.
The first stage estimates those weight vectors that are detectable from
marginal independence on a prescribed collection of coordinate pairs.
The second stage, when needed, completes unresolved components and the
corresponding columns of the mixing matrix by mixture-proportion and
residual-deflation steps.  The one- and two-unresolved-component cases are
given in Appendix~\ref{app:partial-completion}; cases with more
than two unresolved components require a higher-dimensional residual
vertex-estimation step and are not pursued here.

Fix a set of coordinate pairs
\[
  \mathcal A\subset\{(s,t)\in[d]\times[d]\mid s<t\}.
\]
For each $e=(s,t)\in\mathcal A$, let
\[
  \widehat{\bm U}_e
  :=
  (\widehat U_{e,1},\ldots,\widehat U_{e,m})^T
\]
denote the vector of empirical bivariate marginals on the coordinates
$(x_s,x_t)$, and define
\[
  \widehat d_e(\bm r)
  :=
  \bigl(\mathrm{MMD}(\bm r^T\widehat{\bm U}_e)\bigr)^2,
  \qquad
  \bm r\in\Delta_{\bar r}.
\]
Applying
Theorems~\ref{thm:marginal-independence-identifiability},
\ref{thm:pmmd-uniform-bound}, \ref{thm:population-level-set}, and
\ref{thm:empirical-level-set} to each selected pair $e\in\mathcal A$, we
expect the small-loss local minima of $\widehat d_e(\bm r)$ to concentrate
around the target vectors $\bm r_i$, $i\in R_e$.

The first stage has two modular components: candidate generation and
representative selection.  For each coordinate pair
$e=(s,t)\in\mathcal A$, candidate weight vectors are generated by searching
for small values of the empirical MMD loss $\widehat d_e(\bm r)$ over
$\Delta_{\bar r}$.  Since this empirical objective is generally nonconvex, we
treat the numerical optimizer as a modular component of the procedure.  The
candidate set may be generated by any routine that returns finitely many
low-loss candidates.

The resulting candidates are then reduced to a finite representative set.  In
finite samples, it is often useful to separate the data used for candidate
generation from the data used for candidate selection.  We therefore allow an
optional training/selection split: the training part generates candidates, and
the selection part, typically a validation split, scores and selects stable
representatives.  If no split is used, the same empirical distributions are
used in both steps.  The selection rule is required to retain candidates with
small selection loss while avoiding nearly duplicate representatives.  This
diversity requirement is useful when $|R_e|>1$, because several components may
be marginally independent for the same coordinate pair.

The estimation procedure is summarized in Algorithm~\ref{alg:pmmd}.

\begin{algorithm}[!t]
 \caption{PM-MMD: Weight estimation by pairwise MMD and representative selection}
\label{alg:pmmd}
\begin{algorithmic}[1]
\REQUIRE Samples $\{\x_{\ell,i}\}_{i=1}^{n_\ell}\sim U_\ell$ for $\ell\in[m]$;
a constant $\bar r\ge 1$;
a set of coordinate pairs $\mathcal A$;
a candidate-generation routine on $\Delta_{\bar r}$;
a representative-selection rule;
upper bounds $q_{\max}$ and $K_{\max}\le m$.
\ENSURE A representative set
$\mathcal C\subset\Delta_{\bar r}$ and plug-in estimates
$\{\widehat p_{\bm r}:=\bm r^T\widehat{\bm U}\mid \bm r\in\mathcal C\}$.

\STATE Optionally split the samples into a training part and a selection part.
If no split is used, use the same samples in both steps.
Construct the empirical mixtures $\widehat{\bm U}$ from all available samples.

\STATE Initialize the aggregated pairwise representative set
$\mathcal V\gets\emptyset$.

\FOR{each coordinate pair $e=(s,t)\in\mathcal A$}
    \STATE Construct empirical bivariate marginals
    $\widehat{\bm U}_e^{\mathrm{tr}}$ and
    $\widehat{\bm U}_e^{\mathrm{sel}}$ from the training and selection parts.

    \STATE Define the training loss
    \[
      \widehat d_e^{\mathrm{tr}}(\bm r)
      :=
      \bigl(\mathrm{MMD}(\bm r^T\widehat{\bm U}_e^{\mathrm{tr}})\bigr)^2,
      \qquad
      \bm r\in\Delta_{\bar r}.
    \]

    \STATE Apply the candidate-generation routine to
    $\widehat d_e^{\mathrm{tr}}$ over $\Delta_{\bar r}$, and denote the
    resulting finite candidate set by $\mathcal V_e^0$.

    \STATE Score each $\bm r\in\mathcal V_e^0$ by
    \[
      \widehat s_e(\bm r)
      :=
      \bigl(\mathrm{MMD}(\bm r^T\widehat{\bm U}_e^{\mathrm{sel}})\bigr)^2.
    \]

\STATE Apply the representative-selection rule to
$(\mathcal V_e^0,\widehat s_e)$ and retain at most $q_{\max}$
pairwise representatives.  Denote the selected set by $\mathcal C_e$.

\STATE Update the aggregated representative set
\[
  \mathcal V \gets \mathcal V\cup\mathcal C_e .
\]
\ENDFOR

\STATE Apply the representative-selection rule to the aggregated set
$\mathcal V$ and retain at most $K_{\max}$ global representatives.
Denote the resulting set by $\mathcal C$.

\STATE For each $\bm r\in\mathcal C$, set
$\widehat p_{\bm r}:=\bm r^T\widehat{\bm U}$.

\STATE \textbf{return} $\mathcal C$ and
$\{\widehat p_{\bm r}\mid \bm r\in\mathcal C\}$.
\end{algorithmic}
\end{algorithm}

In practice, we set $K_{\max}=m$.  The parameter $q_{\max}$ controls how many
representatives are retained from each coordinate pair.  A natural default is
$q_{\max}=m$, which keeps the pairwise candidate sets large enough to cover
the case $|R_e|>1$; the final aggregation step then reduces the union of
pairwise representatives to at most $m$ global representatives.

In the square case, the global representative-selection step can also use the
structural implication $R=\Theta^{-1}$.  If a candidate subset
$\mathcal C=\{\bm r_1,\ldots,\bm r_m\}$ is selected, write
$R_{\mathcal C}:=(\bm r_1,\ldots,\bm r_m)^T$.  Since a valid inverse should
satisfy $R_{\mathcal C}^{-1}\approx\Theta$, the matrix
$R_{\mathcal C}^{-1}$ should be close to a nonnegative row-stochastic matrix.
This motivates the following condition-aware global score over $m$-subsets of
the pairwise representatives:
\begin{equation}
\label{eq:stable-global-selection-score}
\begin{aligned}
\mathrm{Score}(\mathcal C)
&=
\frac{1}{m}\sum_{\bm r\in\mathcal C}
\frac{\widehat s(\bm r)}{s_{\mathrm{med}}+\varepsilon}
+\lambda_{\mathrm{cond}}\log \operatorname{cond}(R_{\mathcal C})  \\
&\quad
+\lambda_{\mathrm{neg}}
\bigl\|[-R_{\mathcal C}^{-1}]_+\bigr\|_1
+\lambda_{\Delta}
\bigl\|R_{\mathcal C}^{-1}
      -\Pi_{\Delta}(R_{\mathcal C}^{-1})\bigr\|_F^2 .
\end{aligned}
\end{equation}
Here $\widehat s(\bm r)$ is the validation PM-MMD score of the pairwise
representative that produced $\bm r$, $s_{\mathrm{med}}$ is the median
validation score in the aggregated candidate set, $\Pi_{\Delta}$ denotes
row-wise projection onto the probability simplex, and $[\cdot]_+$ is the
positive part applied entrywise.  The additional terms favor candidate sets
whose inverse is well conditioned and compatible with a mixing matrix.
Appendix~\ref{app:stable-selection-guarantee} gives a deterministic
selection-stability statement: if the target candidate subset has a positive
population score gap and the empirical score is uniformly close to its
population counterpart, then the condition-aware selector returns that subset
from the candidate pool.  The resulting mixing-matrix error is then controlled
by a standard inverse-perturbation bound.

\begin{remark}[Implementation choices]
The optimization and representative-selection steps in
Algorithm~\ref{alg:pmmd} are modular.  The candidate-generation routine
may be implemented by homotopy search, multi-start constrained optimization,
projected-gradient methods, or any other solver that returns low-loss
candidates on $\Delta_{\bar r}$.  The representative-selection rule is also
not tied to a particular clustering method such as $k$-means.  For instance,
one may cluster low-selection-loss candidates and select representative
medoids, or use a greedy diversity rule that scans candidates in increasing
order of selection loss and keeps a candidate only if it is sufficiently far
from previously selected representatives.
In Section~\ref{sec:numerical-experiments}, 
the controlled and semi-synthetic experiments use the latter greedy
validation-diversity rule.  The raw gated-pool three-component DLBCL experiment uses the
condition-aware global rule in~\eqref{eq:stable-global-selection-score} after
pairwise representative selection.  The concrete filtering, deduplication,
validation scoring, and aggregation steps are described in
Appendix~\ref{app:implementation-details}.
\end{remark}

\paragraph{Operational diagnostic.}
The marginal independence is directly scored on
candidate affine combinations.  The held-out PM-MMD value
$\widehat s_e(\bm r)$ in Algorithm~\ref{alg:pmmd} is therefore both a
selection criterion and a diagnostic for whether the candidate combination
$\bm r^T\bm U$ exhibits the product-marginal signal on pair $e=(s,t)$.

\paragraph{Completion after representative selection.}
Suppose that, after representative selection and reindexing, the first-stage
outputs are written as
\[
  \{(\widehat{\bm r}_i,\widehat p_i)\}_{i=1}^K,
  \qquad
  \widehat p_i:=\widehat{\bm r}_i^T\widehat{\bm U}.
\]
If \(K=m\), then all weight vectors have been recovered and we set
\[
  \widehat R
  :=
  (\widehat{\bm r}_1,\ldots,\widehat{\bm r}_m)^T,
  \qquad
  \widehat\Theta:=\widehat R^{-1}.
\]
If \(K=m-1\), exactly one component remains unresolved, and the completion
rule is Algorithm~\ref{alg:completion-one-missing} in
Appendix~\ref{app:one-missing-component}. 
If \(K=m-2\), the residual two-component completion rule is Algorithm~\ref{alg:completion-two-missing} in Appendix~\ref{app:two-missing-components}.  
These completion steps use mixture-proportion estimation and row-sum or residual-deflation identities, not additional marginal-independence minimization.

\section{Near Marginal Independence Property}
\label{sec:near-marginal-independence}

We consider a nearly independent setting. 
For simplicity, we restrict attention to marginal densities on 
$\mathcal{X}_1\times\mathcal{X}_2$. 
Assume that, for some $i$, the joint density $p_i(x_1,x_2)$ is close to, but not equal to, 
$p_i(x_1)p_i(x_2)$. 

The mixture densities generating the observations are given by
\begin{align*}
 U_\ell(x_1,x_2)
 &=
 \sum_{i=1}^{m}\theta_{\ell i}p_i(x_1,x_2),
 \qquad \ell\in[L].
\end{align*}
Throughout this section, we assume $L=m$ and that the mixing matrix
$\Theta=(\theta_{\ell i}) \in [0,1]^{L\times L}$ is invertible. 
Then $(\Theta^T)^{-1}$ is well defined. 
The optimal weight vector is obtained by minimizing 
\begin{align*}
 \mathrm{MMD}(\bm{r}^T\U) 
 =
 \bigl\|
 \phi(\bm{r}^T\U)
 -
 \phi\bigl(\Pi_{\mathcal{E}}[\bm{r}^T\U]\bigr)
 \bigr\|_{\mathcal{H}_k}. 
\end{align*}
In the nearly independent setting, the minimum value of $\mathrm{MMD}(\bm{r}^T\U)$ may not be zero. 
Let us quantify the discrepancy from the marginal independence assumption. 
For a given $\eta>0$, define 
\begin{align*}
 R_{12}^{(\eta)}
 =
 \Bigl\{
 i\in[m]\Bigm|
 \bigl\|\phi(p_i)-\phi\bigl(\Pi_{\mathcal{E}}[p_i]\bigr)\bigr\|_{\mathcal{H}_k}
 < K\eta
 \Bigr\}.
\end{align*}
We investigate the landscape of $\mathrm{MMD}(\bm{r}^T\U)$ under the above perturbation. 

We also define, using the same mixing matrix $\Theta$, a modified mixture
$\bar{U}_\ell$ by replacing, for each $i\in R_{12}^{(\eta)}$, the component
$p_i(x_1,x_2)$ with its independent counterpart $p_i(x_1)p_i(x_2)$ while
keeping the other components unchanged, that is,
\begin{align*}
 \bar{U}_\ell(x_1,x_2) 
 &=
 \sum_{i\in R_{12}^{(\eta)}} \theta_{\ell i}p_i(x_1)p_i(x_2)
 +
 \sum_{i\not\in R_{12}^{(\eta)}} \theta_{\ell i}p_i(x_1,x_2),
 \qquad \ell\in[L]. 
\end{align*}
Then 
\[
 \Pi_{\mathcal{E}}[U_\ell]
 =
 \Pi_{\mathcal{E}}[\bar{U}_\ell]
\]
holds for all $\ell$. 
Let $\bm{U}=(U_1,\ldots,U_L)^T$ and $\bar{\bm{U}}=(\bar{U}_1,\ldots,\bar{U}_L)^T$, 
and for each $i\in R_{12}^{(\eta)}$ define the weight vector $\bm{r}_i = (\Theta^T)^{-1}\bm{e}_i$. 
Then 
\begin{align*}
 p_i(x_1,x_2)&=\bm{r}_i^T\bm{U}(x_1,x_2),
 \qquad 
 p_i(x_1)p_i(x_2)=\bm{r}_i^T\bar{\bm{U}}(x_1,x_2),
 \qquad i\in R_{12}^{(\eta)}.
\end{align*}

The next lemma shows that replacing weakly dependent components by their independent counterparts changes 
the MMD objective only slightly. 
\begin{lemma}
 \label{lem:near-independent-replacement}
 Let the kernel satisfy $\sup_{\x}k(\x,\x)\le K^2$. 
 For any $\bm{r}\in\Delta_{\bar{r}}$, it holds that 
 \begin{align*}
  \bigl|\mathrm{MMD}(\bm{r}^T\U)-\mathrm{MMD}(\bm{r}^T\bar{\U})\bigr| < K\bar{r}\eta .
 \end{align*}
\end{lemma}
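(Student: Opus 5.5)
The plan is to use the reverse triangle inequality in $\mathcal{H}_k$ together with the identity $\Pi_{\mathcal{E}}[U_\ell]=\Pi_{\mathcal{E}}[\bar U_\ell]$, which extends to affine combinations. The one-dimensional marginals of $\bm r^T\U$ and $\bm r^T\bar\U$ coincide: replacing $p_i(x_1,x_2)$ by $p_i(x_1)p_i(x_2)$ does not change either univariate marginal, and marginalization is linear. Hence $\Pi_{\mathcal{E}}[\bm r^T\U]=\Pi_{\mathcal{E}}[\bm r^T\bar\U]$ for every $\bm r$ with $\bm r^T\bm 1=1$, even though $\Pi_{\mathcal{E}}$ itself is nonlinear. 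Writing $\Phi:=\phi(\Pi_{\mathcal{E}}[\bm r^T\U])$, the reverse triangle inequality gives
\[
\bigl|\mathrm{MMD}(\bm r^T\U)-\mathrm{MMD}(\bm r^T\bar\U)\bigr|
=\bigl|\,\|\phi(\bm r^T\U)-\Phi\|_{\mathcal{H}_k}-\|\phi(\bm r^T\bar\U)-\Phi\|_{\mathcal{H}_k}\bigr|
\le \|\phi(\bm r^T\U)-\phi(\bm r^T\bar\U)\|_{\mathcal{H}_k}.
\]

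Next I will expand the right-hand side using linearity of the embedding $\phi$ on $\mathcal{S}$, which holds because it is an integral against $k(\x,\cdot)$:
\[
\phi(\bm r^T\U)-\phi(\bm r^T\bar\U)=\sum_{i\in R_{12}^{(\eta)}}\beta_i\bigl(\phi(p_i)-\phi(\Pi_{\mathcal{E}}[p_i])\bigr),\qquad \beta_i=\sum_{\ell}r_\ell\theta_{\ell i}.
\]
Here $\Pi_{\mathcal{E}}[p_i]=p_i(x_1)p_i(x_2)$ for the bivariate marginal. The triangle inequality and the definition of $R_{12}^{(\eta)}$ then bound the norm by
\[
\sum_{i\in R_{12}^{(\eta)}}|\beta_i|\,\|\phi(p_i)-\phi(\Pi_{\mathcal{E}}[p_i])\|_{\mathcal{H}_k} < K\eta\sum_{i\in R_{12}^{(\eta)}}|\beta_i|.
\]
This step needs $R_{12}^{(\eta)}$ nonempty and some $\beta_i\neq0$ there. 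If the sum vanishes, the difference is exactly $0<K\bar r\eta$, so the strict inequality still holds.

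It remains to show $\sum_i|\beta_i|\le\bar r$. Since $\theta_{\ell i}\ge0$ and each row of $\Theta$ sums to one,
\[
\sum_i|\beta_i|\le\sum_i\sum_\ell|r_\ell|\theta_{\ell i}=\sum_\ell|r_\ell|=\|\bm r\|_1\le\bar r
\]
on $\Delta_{\bar r}$, which gives the claim.

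I expect the main care point to be the first step: justifying that the projection terms coincide for affine combinations and not only for each $U_\ell$. I would argue this directly from the linearity of the marginal maps rather than from the stated per-$\ell$ identity. The strictness bookkeeping in the degenerate case $\beta_i=0$ for all $i\in R_{12}^{(\eta)}$ is a minor secondary point.
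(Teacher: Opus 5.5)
Your proof is correct and follows the paper's argument. You reduce to $\|\phi(\bm r^T\U)-\phi(\bm r^T\bar{\U})\|_{\mathcal{H}_k}$ via the common product-marginal term $\phi(\Pi_{\mathcal{E}}[\bm r^T\U])=\phi(\Pi_{\mathcal{E}}[\bm r^T\bar{\U}])$, expand by linearity over $R_{12}^{(\eta)}$, and bound using $\sum_\ell |r_\ell|\sum_i\theta_{\ell i}\le\|\bm r\|_1\le\bar r$. Your treatment of the degenerate case where all $\beta_i$ vanish on $R_{12}^{(\eta)}$ justifies the strict inequality more carefully than the paper, whose displayed chain uses only non-strict bounds.
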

The proof is deferred to Appendix~\ref{app:mmd-estimation-proofs}. 

Let us define $d(\bm{r})$, $\bar{d}(\bm{r})$ and $\widehat{d}(\bm{r})$ by 
\begin{align*}
  d(\bm{r}):=(\mathrm{MMD}(\bm{r}^T\U))^2, \quad
  \bar{d}(\bm{r}):=(\mathrm{MMD}(\bm{r}^T\bar{\U}))^2, \quad\text{and}\quad 
  \widehat{d}(\bm{r}):=(\mathrm{MMD}(\bm{r}^T\widehat{\U}))^2, 
\end{align*}
where $\widehat{\U}$ is the collection of empirical distributions defined by samples from
$U_1,\ldots,U_L$. 
Lemma~\ref{lem:near-independent-replacement} then yields 
\begin{align}
 \label{eq:near-independence-loss-gap}
  \|d-\bar{d}\|_\infty < 4K^2\bar{r}^3\eta
\end{align}
on $\Delta_{\bar{r}}$, since both $\mathrm{MMD}(\bm{r}^T\U)$ 
and $\mathrm{MMD}(\bm{r}^T\bm{\bar U})$ for $\bm{r}\in\Delta_{\bar r}$ are bounded by $2K\bar r^2$. 
\begin{theorem}
 \label{thm:near-independence-stability}
Assume $\sup_{\x}k(\x,\x)\leq K^2$. 
For each $i \in R_{12}^{(\eta)}$, let $\bm{r}_i := (\Theta^T)^{-1}\bm{e}_i$. 
Assume that the zero characterization of Theorem~\ref{thm:marginal-independence-identifiability}
holds for the modified exact model
\(\bar{\bm{U}}\), with the index set \(R_{12}^{(\eta)}\).
Assume that, for each $i\in R_{12}^{(\eta)}$, there exists $\lambda_i>0$ such that, 
on some neighborhood of $\bm{r}_i$ within $\Delta_{\bar{r}}$,  
all eigenvalues of the Hessian matrix of $\bar{d}(\bm{r})/K^2$, 
restricted to the tangent space of $\Delta_{\bar{r}}$, are bounded below by $\lambda_i$. 
Then the following statements hold. 
 \begin{enumerate}
  \item[(a)] For some small $c_0>0$, assume that 
	     \begin{align*}
	      8K^2\bar{r}^3\eta \leq c< c_0. 
	     \end{align*}
	     Then, the level set 
	     $$\{\bm{r}\in\Delta_{\bar{r}}\,\mid\,d(\bm{r})\leq c\}$$ 
	     consists of $|R_{12}^{(\eta)}|$ 
	     disjoint subsets, 
	     and each subset contains exactly one $\bm{r}_i$, $i\in R_{12}^{(\eta)}$. 
	     Let $\bm{r}_*$ be
	     a local minimizer of $d(\bm{r})$ 
	     on $\{\bm{r}\in\Delta_{\bar{r}}\,\mid\,d(\bm{r})\leq c\}$. 
	     Then there exists a $p_i$, $i\in R_{12}^{(\eta)}$, such that 
	     \begin{align*}
	      \frac{\big\|\phi(\bm{r}_*^T\U)-\phi(p_i)\big\|_{\mathcal{H}_k}}{K}
	      \leq 
	      3\bar{r}\eta\sqrt{\frac{m}{\lambda_i}}. 
	     \end{align*}
	     
  \item[(b)] Suppose that 
	     \begin{align*}
	      \sup_{\bm{r}\in\Delta_{\bar{r}}}\bigl|\mathrm{MMD}(\bm{r}^T \U)-\mathrm{MMD}(\bm{r}^T 
	      \widehat{\U})\bigr| < K\zeta_n
	     \end{align*}
	     holds with high probability. 
	     For some small $c_0>0$, assume that 
	     $8K^2\bar{r}^2(\bar{r}\eta+\zeta_n)\leq c'<c_0$. 
	     Then, the level set 
	     $$\{\bm{r}\in\Delta_{\bar{r}}\,\mid\,\widehat{d}(\bm{r})\leq c'\}$$
	     consists of 
	     $|R_{12}^{(\eta)}|$ disjoint subsets, 
	     and each subset contains exactly one $\bm{r}_i$, $i\in R_{12}^{(\eta)}$. 
	     Let $\widehat{\bm{r}}$ be
	     any local minimizer of $\widehat{d}(\bm{r})$ 
	     on $\{\bm{r}\in\Delta_{\bar{r}}\,\mid\,\widehat{d}(\bm{r})\leq c'\}$. 
	     Then there exists a $p_i$, $i\in R_{12}^{(\eta)}$, such that 
	     \begin{align*}
	      \frac{\big\|\phi(\widehat{\bm{r}}^T\U)-\phi(p_i)\big\|_{\mathcal{H}_k}}{K}
	      \leq  
	      3(\bar{r}\eta+\zeta_n)\sqrt{\frac{m}{\lambda_i}} 
	     \end{align*}
	     holds with high probability. 
 \end{enumerate}
\end{theorem}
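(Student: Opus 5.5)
The strategy is to treat $\bar d$ as an exact model with isolated, nondegenerate zeros, and to regard $d$ and $\widehat d$ as uniform perturbations of it.

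\emph{Step 1: the exact model $\bar d$.}
By the assumed zero characterization of Theorem~\ref{thm:marginal-independence-identifiability} for $\bar{\bm U}$, together with invertibility of $\Theta$, the zeros of $\bar d$ on $\Delta_{\bar r}$ are exactly $\bm r_i=(\Theta^T)^{-1}\bm e_i$, $i\in R_{12}^{(\eta)}$.

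The Hessian hypothesis gives, on a neighborhood $B_i$ of $\bm r_i$, the quadratic lower bound
\begin{align*}
\bar d(\bm r)\ge \tfrac{K^2\lambda_i}{2}\|\bm r-\bm r_i\|^2 .
\end{align*}
This holds because $\bar d(\bm r_i)=0$ and the gradient vanishes at a minimum. Outside $\bigcup_i B_i$, compactness of $\Delta_{\bar r}$ and continuity of $\bar d$ give a positive minimum $\gamma>0$. Choosing $c_0<\gamma/2$, small enough that the sublevel pieces lie well inside the $B_i$, reproduces the argument of Theorems~\ref{thm:population-level-set}--\ref{thm:empirical-level-set}.

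\emph{Step 2: part (a).}
By \eqref{eq:near-independence-loss-gap}, $\|d-\bar d\|_\infty<4K^2\bar r^3\eta\le c/2$. Hence
\begin{align*}
\{\bar d\le c/2\}\subset\{d\le c\}\subset\{\bar d\le 3c/2\}.
\end{align*}
The outer set splits into $|R_{12}^{(\eta)}|$ disjoint pieces inside the $B_i$, and each piece contains $\bm r_i$, since $d(\bm r_i)<c$. The same nesting gives the level-set claim.

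For a local minimizer $\bm r_*$ in the piece around $\bm r_i$, we have $d(\bm r_*)\le d(\bm r_i)$. Lemma~\ref{lem:near-independent-replacement} and the definition of $R_{12}^{(\eta)}$ give $d(\bm r_i)\le K^2\eta^2$. Therefore
\begin{align*}
\bar d(\bm r_*)\le d(\bm r_*)+4K^2\bar r^3\eta .
\end{align*}
It is better to work at the MMD level:
\begin{align*}
\sqrt{\bar d(\bm r_*)}\le \sqrt{d(\bm r_*)}+K\bar r\eta\le \sqrt{d(\bm r_i)}+K\bar r\eta\le K\eta+K\bar r\eta\le 2K\bar r\eta .
\end{align*}
The quadratic bound then gives
\begin{align*}
\|\bm r_*-\bm r_i\|_2\le 2\bar r\eta\sqrt{2/\lambda_i}.
\end{align*}
Finally, write $\phi(\bm r_*^T\bm U)-\phi(p_i)=\sum_\ell(r_{*\ell}-r_{i\ell})\phi(U_\ell)$. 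Using $\|\phi(U_\ell)\|\le K$ and $\|\cdot\|_1\le\sqrt m\|\cdot\|_2$, this is bounded by $K\sqrt m\,\|\bm r_*-\bm r_i\|_2$. The constant $2\sqrt2<3$ then yields the bound $3\bar r\eta\sqrt{m/\lambda_i}$.

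\emph{Step 3: part (b).}
Repeat Step 2 with $\widehat d$ in place of $d$, using the triangle inequality at the MMD level:
\begin{align*}
\bigl|\sqrt{\widehat d}-\sqrt{\bar d}\bigr|< K(\bar r\eta+\zeta_n).
\end{align*}
Multiplying by the bound $4K\bar r^2$ on $\sqrt{\widehat d}+\sqrt{\bar d}$ gives
\begin{align*}
\|\widehat d-\bar d\|_\infty<4K^2\bar r^2(\bar r\eta+\zeta_n)\le c'/2,
\end{align*}
so the same sandwich argument gives the level-set structure.

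For a local minimizer $\widehat{\bm r}$, we have $\sqrt{\widehat d(\widehat{\bm r})}\le\sqrt{\widehat d(\bm r_i)}\le K\eta+K\zeta_n$. Hence
\begin{align*}
\sqrt{\bar d(\widehat{\bm r})}\le K\eta+2K\zeta_n+K\bar r\eta .
\end{align*}
This is at most a constant times $K(\bar r\eta+\zeta_n)$, using $\bar r\ge1$. Converting to $\|\widehat{\bm r}-\bm r_i\|$ and to the embedding distance as before gives the stated rate.

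\emph{Main obstacle.}
The delicate point is the constant bookkeeping: the chain $K\eta+2K\zeta_n+K\bar r\eta$ must stay below $3/\sqrt2$ times $K(\bar r\eta+\zeta_n)$. If it does not, I would use the sharper bound $\sqrt{\bar d(\widehat{\bm r})}\le\sqrt{\bar d(\bm r_i)}+\ldots$ with $\bar d(\bm r_i)=0$. That route requires comparing $\widehat d$ with $\bar d$ at both points and accepting the loss from the local-minimizer inequality.

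A second point is ensuring the quadratic lower bound holds on the whole piece. This follows by choosing $c_0$ so that the sublevel set of $\bar d$ at $3c_0/2$ lies in $\bigcup_i B_i$, which is possible by the compactness gap $\gamma$.
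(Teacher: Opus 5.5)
There is a genuine gap at the step ``for a local minimizer $\bm r_*$ in the piece around $\bm r_i$, we have $d(\bm r_*)\le d(\bm r_i)$'', and at its analogue $\widehat d(\widehat{\bm r})\le\widehat d(\bm r_i)$ in part (b). A local minimizer is only guaranteed to beat points near itself. To compare it with $\bm r_i$, which may lie elsewhere in the same piece, you need $\bm r_*$ to minimize $d$ over the whole piece.

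Your sandwich $\{\bar d\le c/2\}\subset\{d\le c\}\subset\{\bar d\le 3c/2\}$ controls values but not shape. A sup-norm perturbation of size $c/2$ of the strongly convex $\bar d$ can in general create spurious local minima inside $B_i$ at a level close to $c$. For such a point the sandwich only gives $\bar d(\bm r_*)\le 3c/2$, hence $\|\bm r_*-\bm r_i\|\lesssim \sqrt{c/\lambda_i}/K$. Since $c\ge 8K^2\bar r^3\eta$, this is at best of order $\sqrt{\eta}$, and it does not vanish at all when $c$ is merely below a fixed $c_0$.

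The same omission leaves the level-set claim incomplete. The sandwich shows that $\{d\le c\}$ lies in $\bigcup_i B_i$ and contains each $\bm r_i$. It does not show that each piece $\{d\le c\}\cap B_i$ is connected, which is the substantive content of ``consists of $|R_{12}^{(\eta)}|$ disjoint subsets''. Your appeal in Step~1 to the argument of Theorem~\ref{thm:empirical-level-set} is used only to pick $c_0$ through the gap $\gamma$. The part of that argument you actually need, that the perturbed function stays strongly convex, is never stated or used.

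The missing idea, which the paper uses, is that $d$, $\widehat d$ and $\bar d$ are all quartic polynomials in $\bm r$. On the finite-dimensional space of quartics restricted to the affine hull of $\Delta_{\bar r}$, the sup norm on $\Delta_{\bar r}$ controls second derivatives. Hence there is $\epsilon_0'>0$ such that every quartic $g$ with $\|g-\bar d\|_\infty\le\epsilon_0'$ is strongly convex on each $B_i\cap\Delta_{\bar r}$. Choosing $c_0$ with $c_0/2\le\epsilon_0'$, in addition to $c_0\le\gamma/2$, then makes $d$ and $\widehat d$ strongly convex on every $B_i$. Consequently each piece is convex, hence connected, and any local minimizer in it is the minimizer over the piece, which justifies $d(\bm r_*)\le d(\bm r_i)$.

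With that inserted, the rest of your argument goes through. Your bound $\sqrt{d(\bm r_i)}<K\eta$, read directly off the definition of $R_{12}^{(\eta)}$, is valid and slightly sharper than going through $\bar{\bm U}$. The constant bookkeeping you flag in part (b) is not an obstacle: for $\bar r\ge1$ you have $K\eta+2K\zeta_n+K\bar r\eta\le 2K(\bar r\eta+\zeta_n)$, and $2\sqrt2<3$.
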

The proof is deferred to Appendix~\ref{app:mmd-estimation-proofs}. 

\begin{remark}
The constant $c_0$ in Theorem~\ref{thm:near-independence-stability} 
is determined from $\bar{d}$ in a similar way as in the proof of Theorem~\ref{thm:empirical-level-set}.
\end{remark}

These results show that our identifiability and estimation guarantees remain stable under small violations
of the marginal independence assumptions. 
They also show that the proposed independence criterion continues to select mixtures that stay
close to the true components.

\section{Numerical Experiments}
\label{sec:numerical-experiments}

This section reports a selective set of experiments: 
controlled sanity checks, a three-component flow-cytometry task from raw gated pools, and a binary flow-cytometry calibration
against BBE~\cite{garg2021mixture},
DEDPUL~\cite{ivanov2019dedpul}, and
KM2~\cite{ramaswamy2016mixture}.
Full sample-size plots, semi-synthetic benchmark
tables, oracle clean-component comparisons, and additional implementation
details are deferred to Appendix~\ref{app:additional-numerical-results}
and Appendix~\ref{app:implementation-details}.

Throughout, the observed distributions are unlabeled mixtures sharing the same
latent components.  The true mixing matrix is used only for evaluation, such as
relative Frobenius error and diagnostic vectors $\bm r^T\Theta$.  It is not used
in PM-MMD candidate generation or representative selection.  Unless otherwise
stated, PM-MMD uses an optimization split to generate candidates and an
independent validation split for representative selection.

\subsection{Controlled sanity checks}

The controlled experiments are included mainly to verify the identifying
mechanism.  First, a symmetric Gaussian design violates the no-cancellation
condition in Theorem~\ref{thm:marginal-independence-identifiability}; as expected,
validation cannot remove the resulting population-level spurious low-MMD
solutions.  Second, asymmetric Gaussian designs satisfying the intended
pairwise no-cancellation structure show decreasing error and stable vertex
recovery.  Third, a non-Gaussian design checks that PM-MMD is exploiting
marginal independence rather than Gaussian cluster geometry.  The full
sample-size table, simplex plots, and data-generating parameters are given in
Appendices~\ref{app:controlled-synthetic-results},
\ref{app:gmm-designs}, and~\ref{app:nongauss-design}.

\begin{table}[t]
\centering
\small
\caption[Controlled sanity-check experiments.]{
Controlled sanity-check experiments.  Errors are relative Frobenius errors of
estimated mixing matrices, reported as mean $\pm$ standard error over ten seeds.
``Rec.'' denotes PM-MMD vertex recovery: the number of seeds for which the
selected diagnostic directions recover the intended simplex vertices, 
when such a recovery diagnostic applies.
``MMD degeneracy'' refers to low-MMD diagnostic directions that do not recover component vertices.}
\label{tab:controlled-summary-main}
\begin{tabular}{llccc}
\hline
Design & Role & $n$ & rel. Frobenius err.  & Rec. \\
\hline
Symmetric GMM & violates no-cancellation & -- & MMD degeneracy & fails \\
GMM, $|R_e|=1$ & identifiable singleton pairs & 20000 & $0.0323\pm0.0033$ & $10/10$ \\
GMM, $|R_e|=2$ & multi-component pair sets & 10000 & $0.0594\pm0.0089$ & $10/10$ \\
Non-Gaussian & non-Gaussian components & 1000 & $0.1338\pm0.0166$ & $10/10$ \\
\hline
\end{tabular}
\end{table}

Table~\ref{tab:controlled-summary-main} supports two points.  The
no-cancellation condition is a real identifiability condition rather than a
numerical artifact, and PM-MMD remains effective when the components are
skewed, multimodal, or heavy-tailed.  In the non-Gaussian design, the best
clustering/factorization reference at $n=1000$ has error $0.4444\pm0.0019$
whereas PM-MMD has error $0.1338\pm0.0166$; see
Table~\ref{tab:nongauss-controlled} in the appendix.

\subsection{Three-component DLBCL experiment from raw gated pools}
\label{subsec:raw-dlbcl-three-component}

We next evaluate a three-component DLBCL flow-cytometry experiment from raw gated pools.  This
experiment tests the multi-component setting directly: it uses three
manual-gated component pools but does not artificially enforce marginal
independence by coordinate resampling.  The manual labels are used only to
construct the empirical component pools and to evaluate recovery; no labels,
clean component samples, or mixing weights are supplied to PM-MMD or to the
non-oracle baselines.

The data are the \texttt{cytometree} DLBCL subset~\cite{commenges2018cytometree},
with marker coordinates FL1, FL2, and FL4.  The binary calibration in
Section~\ref{subsec:dlbcl-binary-prior-shift} uses the two largest gated
populations from the same subset.
We use the three gated populations with class counts $47$, $604$, and
$4873$, and form three unlabeled mixtures with
\[
\Theta
=0.70 I_3+\frac{0.30}{3}\mathbf 1\mathbf 1^T
=
\begin{pmatrix}
0.8&0.1&0.1\\
0.1&0.8&0.1\\
0.1&0.1&0.8
\end{pmatrix}.
\]
PM-MMD uses the marker pairs FL1--FL4 and FL2--FL4 (zero-indexed pairs
$(0,2)$ and $(1,2)$), and uses the condition-aware global selection rule in
\eqref{eq:stable-global-selection-score}.  The pair choice is treated as a
prescribed marker-pair choice; the true mixing matrix and labels are used only
for evaluation.

We compare PM-MMD with several baseline families.  First, we use distributional
baselines that directly operate on the observed mixture samples: Gaussian
mixture fitting via EM~\cite{dempster1977maximum}, $k$-means~\cite{macqueen1967some},
spectral clustering~\cite{ng2002spectral}, histogram NMF~\cite{lee1999learning},
histogram archetypal analysis~\cite{cutler1994archetypal}, and a
pairwise-residue MPE heuristic.  
The pairwise-residue baseline is a same-information heuristic. 
It uses only the observed unlabeled mixtures, but the binary MPE guarantees do not directly extend to this pairwise reduction in the multi-class setting. 
The method applies a BBE-style binary MPE subroutine~\cite{garg2021mixture} to ordered pairs of observed mixtures, 
constructs pseudo-residual distributions following the mutual-contamination decontamination viewpoint~\cite{blanchard14:_decon_mutual_contam_model,katzsamuels2019decontamination},
and estimates the mixing matrix from these residual candidates.  It is included
to test the natural pairwise-reduction strategy suggested by binary MPE.

We also include classifier-posterior transition-matrix baselines, following the
transition-matrix viewpoint in noisy-label learning~\cite{liu2016importance,patrini2017making,li2021volminnet}.
For this comparison only, the mixture index is denoted by $Y'$ and treated as
an observed noisy label.  We fit a cross-fitted multinomial logistic classifier
for $P(Y'\mid X)$ and use the resulting posterior vectors
$\widehat P(Y'=\cdot\mid X=x)$ to estimate the forward transition matrix
$T_{ji}=P(Y'=j\mid Y=i)$.  We evaluate four lightweight posterior-simplex
constructions: anchor posterior, posterior archetypes, posterior $k$-means, and
minimum-volume simplex fitting.  The first three use high-posterior points,
extreme posterior points, and posterior cluster centers, respectively.  The
minimum-volume variant fits a small-volume simplex to the posterior cloud,
analogously to minimum-volume transition estimation.  Given the empirical prior
$\widehat q_j=\widehat P(Y'=j)$, all transition baselines compute
$\widehat\pi$ from $\widehat q=\widehat T\widehat\pi$ and then set
\[
  \widehat\Theta_{ji}
  =
  \frac{\widehat T_{ji}\widehat\pi_i}{\widehat q_j}.
\]
They use the same observed samples as PM-MMD, but their identifying structure is
posterior-simplex geometry rather than marginal independence.

\begin{table}[t]
\centering
\small
\caption[Raw gated-pool three-component DLBCL flow-cytometry experiment.]{
Raw gated-pool three-component DLBCL flow-cytometry experiment at $\rho=0.70$.
Entries are relative Frobenius errors of the estimated mixing matrix over ten
random seeds.  The oracle binary MPE row uses clean component samples and is
included only as an oracle calibration.  The transition rows use
classifier-posterior estimates of $P(Y'\mid X)$ and posterior-simplex estimates
of $P(Y'\mid Y)$ under the same observed-data sample budget as PM-MMD.  All
other non-oracle methods directly use the observed mixture samples.  
At this $\rho=0.70$ setting, the min-volume transition baseline has the lowest
non-oracle mean error, while PM-MMD with stable selection has the second-lowest
non-oracle mean error.
``Max vtx. dist.'' denotes the mean maximum distance of
diagnostic vectors to their nearest simplex vertices, and ``Cov.'' denotes the
mean number of distinct nearest vertices.  Boldface and underlining mark the
lowest and second-lowest non-oracle mean errors, respectively.}
\label{tab:raw-dlbcl-three-component}
\resizebox{\linewidth}{!}{
\begin{tabular}{lccccc}
\hline
Method & Mean $\pm$ SE & Median & Max & Max vtx. dist. & Cov. \\
\hline
\multicolumn{6}{l}{\textit{Proposed method and ablation}}\\
PM-MMD, stable selection
& \secondval{0.1078 \pm 0.0121} & $0.1028$ & $0.1877$ & $0.1367$ & $3.0$ \\
PM-MMD, greedy selection
& $0.2217 \pm 0.0756$ & $0.1240$ & $0.7367$ & $0.5281$ & $3.0$ \\
\hline
\multicolumn{6}{l}{\textit{Noisy-label transition baselines}}\\
Transition matrix, min-volume simplex
& \bestval{0.0984 \pm 0.0038} & $0.0923$ & $0.1254$ & $0.1262$ & $3.0$ \\
Transition matrix, anchor posterior
& $0.1487 \pm 0.0042$ & $0.1472$ & $0.1818$ & $0.1604$ & $3.0$ \\
Transition matrix, posterior archetypes
& $0.2052 \pm 0.0025$ & $0.2063$ & $0.2215$ & $0.2304$ & $3.0$ \\
Transition matrix, posterior $k$-means
& $0.2787 \pm 0.0054$ & $0.2838$ & $0.2952$ & $0.7328$ & $3.0$ \\
\hline
\multicolumn{6}{l}{\textit{MPE-based methods}}\\
Oracle binary MPE (BBE-style)
& $0.0866 \pm 0.0114$ & $0.0901$ & $0.1344$ & $0.1226$ & $3.0$ \\
Pairwise-residue MPE (BBE-style)
& $0.1818 \pm 0.0047$ & $0.1749$ & $0.2069$ & $0.1874$ & $3.0$ \\
\hline
\multicolumn{6}{l}{\textit{Classical clustering and factorization baselines}}\\
Histogram NMF
& $0.2670 \pm 0.0081$ & $0.2775$ & $0.3015$ & $0.2444$ & $3.0$ \\
Histogram archetypal analysis
& $0.3015 \pm 0.0000$ & $0.3015$ & $0.3015$ & $0.2449$ & $3.0$ \\
$k$-means
& $0.4226 \pm 0.0072$ & $0.4197$ & $0.4574$ & $0.9746$ & $3.0$ \\
Gaussian mixture fit
& $0.4433 \pm 0.0457$ & $0.5088$ & $0.6013$ & $1.9860$ & $3.0$ \\
Spectral clustering
& $1.1291 \pm 0.0147$ & $1.1226$ & $1.2063$ & $1.74\times 10^3$ & $2.0$ \\
\hline
\end{tabular}}
\end{table}

Table~\ref{tab:raw-dlbcl-three-component} shows that 
the min-volume transition baseline has the lowest non-oracle mean error at the current setting.
PM-MMD with stable global selection has the second-lowest non-oracle mean error.
It gives lower error than pairwise-residue MPE, NMF, archetypal analysis,
Gaussian mixture fitting, $k$-means, and spectral clustering.  
This result suggests that the classifier-posterior geometry of $Y'$ contains useful information for these DLBCL markers 
when $\rho$ is moderately large.  It also shows that marginal independence provides a different recovery signal from posterior-simplex geometry.  The oracle BBE-style baseline remains a calibration because it uses clean component samples.


The relative performance changes with the mixture-separation parameter.  To
check that the conclusion is not an artifact of the single setting,
we repeat the same raw gated-pool DLBCL experiment with
\[
\Theta=\Theta(\rho)=\rho I_3+\frac{1-\rho}{3}\mathbf 1\mathbf 1^T.
\]
Table~\ref{tab:raw-dlbcl-rho-sweep-main} reports a compact subset of the
sweep using an expanded PM-MMD search radius $\bar r=10$; the full sweep is
shown in Figure~\ref{fig:raw-dlbcl-rho-sweep} in
Appendix~\ref{app:raw-dlbcl-rho-sweep}.

\begin{table}[tb]
\centering
\small
\caption[Raw DLBCL mixture-separation sweep.]{
Raw gated-pool DLBCL mixture-separation sweep.  Entries are relative Frobenius
errors, reported as mean $\pm$ standard error over ten seeds.  The table shows
four representative values of $\rho$; the full sweep is shown in
Appendix~\ref{app:raw-dlbcl-rho-sweep}.  PM-MMD uses stable selection and
$\bar r=10$ in this sweep.  Boldface and underlining mark the lowest and
second-lowest mean errors in each row, respectively.}
\label{tab:raw-dlbcl-rho-sweep-main}
\resizebox{\linewidth}{!}{
\begin{tabular}{ccccc}
\hline
$\rho$ & PM-MMD, stable & Transition min-volume & Transition anchor & Pairwise-residue MPE \\
\hline
$0.40$ & \bestval{0.2070 \pm 0.0175} & $0.2541 \pm 0.0071$ & \secondval{0.2507 \pm 0.0099} & $0.3705 \pm 0.0064$ \\
$0.50$ & \bestval{0.1679 \pm 0.0177} & \secondval{0.1913 \pm 0.0055} & $0.2207 \pm 0.0056$ & $0.3051 \pm 0.0066$ \\
$0.60$ & \secondval{0.1503 \pm 0.0181} & \bestval{0.1387 \pm 0.0029} & $0.1873 \pm 0.0040$ & $0.2436 \pm 0.0071$ \\
$0.70$ & \secondval{0.1078 \pm 0.0121} & \bestval{0.0984 \pm 0.0038} & $0.1487 \pm 0.0042$ & $0.1818 \pm 0.0047$ \\
\hline
\end{tabular}}
\end{table}

The sweep shows that the relative ordering depends on the mixture-separation
parameter.  At small separations, PM-MMD has larger variability because the
signed inverse weights are large and empirical PM-MMD minimization can produce
more spurious low-loss candidates.  At intermediate separations, where
$\rho=0.40$ and $0.50$, PM-MMD has the lowest mean error among the listed
non-oracle methods.  In this range, the marginal-independence signal is usable,
whereas the posterior-simplex geometry is less pronounced.  For larger
separations, where $\rho=0.60$ and $0.70$, min-volume transition estimation has
the lowest mean error.  This is consistent with the fact that the mixture index
$Y'$ becomes more informative about the latent component as the mixtures become
more separated.  These results suggest that PM-MMD and classifier-based
transition estimation exploit different recovery signals.  Additional
implementation details and post-hoc diagnostic remarks are reported in
Appendix~\ref{app:raw-dlbcl-details}.

\FloatBarrier

\subsection{Binary flow-cytometry calibration against recent MPE methods}
\label{subsec:dlbcl-binary-prior-shift}

As an additional real-data-based binary calibration, we use the two largest
manual-gated DLBCL cell populations, with class counts $4873$ and $604$.  For
each shift strength $\rho$, we generate two unlabeled mixtures with
\[
\Theta=\Theta(\rho)=\rho I_2+\frac{1-\rho}{2}\mathbf 1\mathbf 1^T.
\]
We use the same representative grid $\rho\in\{0.40,0.50,0.60,0.70\}$ as in
Table~\ref{tab:raw-dlbcl-rho-sweep-main} to align the mixture-separation
parameter, while noting that this binary calibration is not row-wise comparable
to the three-component experiment.  Because $m=2$, the mutual-contamination
identities for binary MPE apply directly.  We compare PM-MMD with
KM2~\cite{ramaswamy2016mixture}, DEDPUL~\cite{ivanov2019dedpul}, and
BBE~\cite{garg2021mixture}, using only the two unlabeled mixtures.

\begin{table}[H]
\centering
\small
\caption[DLBCL flow-cytometry prior-shift experiment.]{
DLBCL flow-cytometry prior-shift experiment with two manual-gated cell
populations.  Entries are relative Frobenius errors of the estimated mixing
matrix, reported as mean $\pm$ standard error over ten random seeds.  Boldface
and underlining mark the lowest and second-lowest errors in each row.  The last
column reports the PM-MMD vertex recovery rate, defined as recovery of both
vertices with maximum vertex distance at most $0.35$.}
\label{tab:dlbcl-binary-prior-shift}
\begin{tabular}{lccccc}
\hline
Shift $\rho$ & PM-MMD & BBE & DEDPUL & KM2 & PM rec. \\
\hline
0.40
& 0.2065 $\pm$ 0.0823
& \bestval{0.0314 \pm 0.0046}
& 0.1121 $\pm$ 0.0141
& \secondval{0.0370 \pm 0.0049}
& $9/10$ \\
0.50
& 0.0623 $\pm$ 0.0161
& \secondval{0.0310 \pm 0.0060}
& 0.0868 $\pm$ 0.0103
& \bestval{0.0291 \pm 0.0052}
& $10/10$ \\
0.60
& 0.0522 $\pm$ 0.0138
& \bestval{0.0148 \pm 0.0030}
& 0.0750 $\pm$ 0.0061
& \secondval{0.0287 \pm 0.0054}
& $10/10$ \\
0.70
& 0.0392 $\pm$ 0.0099
& \bestval{0.0139 \pm 0.0029}
& 0.0523 $\pm$ 0.0075
& \secondval{0.0319 \pm 0.0055}
& $10/10$ \\
\hline
\end{tabular}
\end{table}

Table~\ref{tab:dlbcl-binary-prior-shift} confirms that binary MPE methods 
gives lower errors in this favorable two-component setting: BBE is best for three shifts,
and KM2 is slightly best at $\rho=0.50$.  
PM-MMD improves as the two mixtures separate and recovers both vertices in all ten seeds for $\rho\ge0.50$, but it is less accurate than the best binary MPE method. 
This experiment therefore calibrates PM-MMD against binary MPE baselines in their natural two-component setting. 

\paragraph{Additional semi-synthetic and oracle comparisons.}
Appendix~\ref{app:semisynthetic-results} reports the full
semi-synthetic benchmark tables.  Those results show that PM-MMD is most useful
when marginal-independence structure is present and simple Gaussian cluster
geometry is not dominant.  The same appendix also reports oracle
clean-component KM2, DEDPUL, and BBE comparisons, which give accurate estimates when clean component samples are supplied but use more information than PM-MMD.

\FloatBarrier
\section{Concluding Remarks}
\label{sec:conclusion}

This paper studied unlabeled multi-mixture decontamination through independence
and marginal-independence signals.  We established identifiability conditions
under which marginally independent components can be recovered by signed affine
combinations of the observed mixtures.  We also proposed an empirical procedure
based on product-marginal MMD and validation-based representative selection.

The numerical experiments illustrate both the identifying mechanism and the
scope of the method.  The controlled experiments show that the no-cancellation
condition is essential, and that PM-MMD can use marginal-independence structure
beyond Gaussian cluster geometry.  The raw gated-pool three-component DLBCL
experiment provides a real-data-based multi-component check without artificially
constructing independent coordinates.  In this task, min-volume transition
estimation gives lower errors at larger mixture separations, whereas PM-MMD
gives lower errors at intermediate separations among the non-oracle methods
considered.  This comparison separates two recovery mechanisms:
posterior-simplex geometry for classifier-based transition baselines, and
marginal independence for PM-MMD.

The binary DLBCL prior-shift experiment provides an additional calibration
against recent binary MPE methods.  In this two-component setting, which is
well matched to mutual-contamination methods, BBE and KM2 give lower errors than
PM-MMD in most cases.  PM-MMD nevertheless recovers both vertices reliably for
moderate to large prior shifts.  Additional semi-synthetic and oracle
clean-component comparisons in the appendix further distinguish the unlabeled
multi-mixture setting considered here from clean-component binary MPE settings.

Future work includes finite-sample analysis of the validation and
condition-aware selection steps, automatic selection of informative coordinate
pairs, and improved numerical solvers for the nonconvex empirical MMD objective.

\bibliographystyle{plain}
\bibliography{allref}

\appendix
\startcontents[appendices]
\clearpage
\input appendix

\end{document}

%% file: appendix.tex
\section*{Appendix Contents}

\begingroup
\setcounter{tocdepth}{2}
\printcontents[appendices]{}{1}{}
\endgroup

\section{Notation}
\label{app:notation}

This appendix summarizes the notation used most frequently in the paper.
All densities are understood with respect to the product reference measure
introduced in Section~\ref{sec:product-form-identifiability}.

\begin{table}[H] 
\centering
\small
\renewcommand{\arraystretch}{1.18}
\caption{Main notation.}
\begin{tabular}{p{0.24\linewidth}p{0.68\linewidth}}
\hline
Notation & Meaning \\
\hline
$[q]$ & Index set $\{1,\ldots,q\}$. \\
$m,L,d$ & Number of latent components, observed mixtures, and coordinates. \\
$\mathcal X=\prod_{k=1}^d\mathcal X_k$ & Product sample space. \\
$\bm x_\sigma$, $\mathcal X_\sigma$ & Coordinates indexed by $\sigma\subset[d]$ and the corresponding product space. \\
$\mathcal S$ & Signed total-mass-one density class, $\{q\in L^1:\int q=1\}$. \\
$\mathcal P$ & Probability density class, $\{q\in\mathcal S:q\ge0\}$. \\
$\mathcal E$ & Product-form signed density class in $\mathcal S$. \\
$\mathcal S_\sigma,\mathcal P_\sigma,\mathcal E_\sigma$ & Coordinate-subset analogues on $\mathcal X_\sigma$. \\
$\bm 1_q$, $\bm e_i$ & All-ones vector in $\mathbb R^q$ and the $i$th standard basis vector. \\
$p_i$, $p_{ik}$ & The $i$th latent component density and its $k$th marginal. \\
$\bm p=(p_1,\ldots,p_m)^T$ & Vector of component densities. \\
$\mathcal M$ & Affine hull generated by $\{p_i\}_{i=1}^m$. \\
$\Pi_{\mathcal E}[q]$ & Product-marginal of $q$. \\
$U_\ell$, $\bm U$ & Observed mixture density and the vector $(U_1,\ldots,U_L)^T$. \\
$\Theta=(\theta_{\ell i})$ & $L\times m$ mixing matrix, with $\Theta\bm 1_m=\bm 1_L$. \\
$\bm r$ & Affine coefficient vector over observed mixtures; typically $\bm r^T\bm 1_L=1$. \\
$\bm\beta=\Theta^T\bm r$ & Induced affine coefficient vector over latent components. \\
$R_{st}$ & Component indices whose $(s,t)$-marginal is independent. \\
$\mathcal U$, $\mathcal U_{st}$ & Affine sets generated by the full observable mixtures and their $(s,t)$-marginals. \\
$\mathcal I$, $\mathcal J$ & Identified and unresolved component indices, $\mathcal I=\bigcup_{s<t}R_{st}$ and $\mathcal J=[m]\setminus\mathcal I$. \\
$\Theta_{\mathcal I}$, $\Theta_{\mathcal J}$ & Submatrices of $\Theta$ with columns indexed by $\mathcal I$ and $\mathcal J$. \\
$\kappa(F\mid H)$ & Mixture proportion of $H$ in $F$. \\
$q_{\ell i}$ & Residual density after removing component $p_i$ from $U_\ell$, when $\theta_{\ell i}<1$. \\
$k$, $\mathcal H_k$, $\phi$ & Kernel, its RKHS, and the kernel mean embedding. \\
$\mathrm{MMD}(q)$ & Product-marginal MMD loss of $q$. \\
$\widehat U_\ell$, $\widehat{\bm U}$ & Empirical distribution from $U_\ell$ and the collection of empirical mixtures. \\
$\Delta_{\bar r}$ & Extended simplex $\{\bm r:\bm r^T\bm 1_L=1,\ \|\bm r\|_1\le\bar r\}$. \\
$d(\bm r)$, $\widehat d(\bm r)$ & Population and empirical squared PM-MMD losses. \\
$\bm r_i$ & Target affine weight vector in the square case, $\bm r_i=(\Theta^T)^{-1}\bm e_i$. \\
$n_\ell$, $N_{1/2}$ & Sample size from $U_\ell$ and $N_{1/2}=\sum_{\ell\in[L]}n_\ell^{-1/2}$. \\
$R_{12}^{(\eta)}$ & Components whose $(1,2)$-marginal is within $K\eta$ of product form. \\
$\bar{\bm U}$, $\bar d$ & Modified exact mixtures and their squared PM-MMD loss in the near-independence analysis. \\
\hline
\end{tabular}
\end{table}

\section{Proof of Theorem in Section~\ref{sec:product-form-identifiability} }
\label{app:proof-product-form-identifiability}

\subsection{Proof of Theorem~\ref{thm:product-form-affine-identifiability}}
\begin{proof}
[Proof of Theorem~\ref{thm:product-form-affine-identifiability}]
The inclusion $\{p_i:i\in[m]\}\subseteq\mathcal{M}\cap\mathcal{E}$ is
immediate.  Conversely, take $p_{\bm\theta}\in\mathcal{M}\cap\mathcal{E}$ and set
$I=\{i:\theta_i\ne0\}$.  The affine constraint makes $I$ nonempty.  If
$|I|\ge2$, choose $s,t$ as in the subset-rank assumption.  Let
$f_1,\ldots,f_{\delta_s(I)}$ and $g_1,\ldots,g_{\delta_t(I)}$ be bases of the
spans of $\{p_{is}:i\in I\}$ and $\{p_{it}:i\in I\}$, respectively, and let
$A\in\mathbb{R}^{\delta_s(I)\times |I|}$ and $B\in \mathbb{R}^{\delta_t(I)\times |I|}$ 
be the corresponding coefficient matrices. 
Thus $\operatorname{rank}(A)=\delta_s(I)$ and
$\operatorname{rank}(B)=\delta_t(I)$.  In the linearly independent product
basis $\{f_a(x_s)g_b(x_t)\}_{a,b}$, the coefficient matrix of
\begin{align*}
 p_{\bm\theta}(x_s,x_t)
 =\sum_{i\in I}\theta_i p_{is}(x_s)p_{it}(x_t)
\end{align*}
is $A\operatorname{diag}(\theta_i:i\in I)B^T$.  Since
$p_{\bm\theta}\in\mathcal{E}$, this marginal is the product of its $s$- and
$t$-marginals.  Both have total mass one, so the coefficient matrix has rank
one; the diagonal matrix is invertible by the definition of $I$.  Following
Cohen's rank
argument~\cite{cohen2014conditions}, Sylvester's rank inequality gives
\begin{align}
1
 &=\operatorname{rank}\bigl(A\operatorname{diag}(\theta_i:i\in I)B^T\bigr)\notag\\
 &\ge \operatorname{rank}(A)+\operatorname{rank}(B)-|I|\notag\\
 &=\delta_s(I)+\delta_t(I)-|I|.
 \label{eq:product-coefficient-identity}
\end{align}
Equation~\eqref{eq:product-coefficient-identity} contradicts the subset-rank
assumption.  Hence $I=\{i\}$, and the affine
constraint gives $\bm\theta=\bm e_i$.  Consequently,
$\mathcal{M}\cap\mathcal{E}=\{p_i:i\in[m]\}$.
\end{proof}

\section{Proof of Theorems in Section~\ref{sec:mixing-vector-recovery}}
\label{app:proofs-mixing-vector-recovery}

\subsection{Proof of Theorem~\ref{thm:marginal-independence-identifiability}}
\begin{proof}
 [Proof of Theorem~\ref{thm:marginal-independence-identifiability}]
For $i \in R_{st}$, set $\bm{r} = \Theta(\Theta^T \Theta)^{-1} \bm{e}_i$.
Then
\begin{align*}
  \sum_{\ell\in[L]} r_\ell U_\ell(x_s,x_t)
  &= \sum_{j\in[m]} \sum_{\ell\in[L]} r_\ell \theta_{\ell j} p_j(x_s,x_t) \\
  &= \sum_{j\in[m]} (\bm{e}_i)_j\,p_j(x_s,x_t)
   = p_i(x_s,x_t) = p_i(x_s)\,p_i(x_t),
\end{align*}
and
\begin{align*}
  \bm{r}^T \bm{1}_L
  = \bm{e}_i^T (\Theta^T \Theta)^{-1} \Theta^T \bm{1}_L
  = \bm{e}_i^T \bm{1}_m
  = 1.
\end{align*}
Hence
\begin{align*}
  \mathcal{U}_{st} \cap \mathcal{E}_{st}
  \supset \{\, p_i(x_s)\,p_i(x_t) \mid i \in R_{st} \,\}.
\end{align*}
For the converse inclusion, suppose that
\begin{align*}
  \sum_{\ell\in[L]} r_\ell U_\ell(x_s,x_t)
  = \sum_{i\in[m]} \beta_i p_i(x_s,x_t) \in \mathcal{E}_{st},
\end{align*}
where $\beta_i = \sum_{\ell\in[L]} r_\ell \theta_{\ell i}$, $i\in[m]$.
Since $\sum_{\ell\in[L]}r_\ell=1$ and
$\Theta\bm{1}_m=\bm{1}_L$, we have $\sum_{i\in[m]}\beta_i=1$.
By assumption \eqref{eq:no-cancellation}, we must have
$\beta_i = 0$ for all $i \notin R_{st}$.
Consequently, $\sum_{j\in R_{st}}\beta_j=1$, so $R_{st}$ is nonempty for any
such candidate, and
\begin{align}
  \sum_{j\in R_{st}} \beta_j p_j(x_s,x_t)
  = \sum_{j\in R_{st}} \beta_j p_j(x_s)\,p_j(x_t)
  \in \mathcal{E}_{st}.
  \label{eq:proof-marginal-independence-mixture}
\end{align}
Applying Theorem~\ref{thm:product-form-affine-identifiability} with $d=2$ to
the product family in \eqref{eq:proof-marginal-independence-mixture}, together
with the subset-rank assumption, gives $\beta_i=1$ for some $i\in R_{st}$ and
$\beta_j=0$ for every $j\in R_{st}\setminus\{i\}$.  Together with
$\beta_j=0$ for $j\notin R_{st}$, this yields $\bm\beta=\bm e_i$.  Therefore
\begin{align*}
  \sum_{\ell\in[L]} r_\ell U_\ell(x_s,x_t)
  = p_i(x_s)\,p_i(x_t)
\end{align*}
for some $i \in R_{st}$, and we obtain
\begin{align*}
  \mathcal{U}_{st} \cap \mathcal{E}_{st}
  \subset \{\, p_i(x_s)\,p_i(x_t) \mid i \in R_{st} \,\}.
\end{align*}
Combining the two inclusions yields the claim.
\end{proof}

\subsection{A finite-table genericity statement for no-cancellation}
\label{app:generic-no-cancellation}

The following proposition makes precise the finite-dimensional genericity
intuition used in Section~\ref{subsec:marginal-independence-recovery}. 
It is not
a replacement for the continuous formulation of
Theorem~\ref{thm:marginal-independence-identifiability}; rather, it shows that the
signed cancellations excluded by condition~\eqref{eq:no-cancellation}
are lower-dimensional exceptional events in finite-grid or histogram models.

\begin{proposition}[Generic no-cancellation for finite bivariate tables]
Fix a coordinate pair $(s,t)$ and represent its bivariate marginal on an
$a\times b$ finite grid.  Let
\[
  \mathcal S_{ab}
  :=
  \left\{
    Q\in\mathbb R^{a\times b}
    \,\middle|\,
    \sum_{u=1}^a\sum_{v=1}^b Q_{uv}=1
  \right\}
\]
be the affine space of signed total-mass-one tables, and let
\[
  \mathcal E_{ab}^{\mathrm{ind}}
  :=
  \left\{
    \bm u\bm v^T
    \,\middle|\,
    \bm u\in\mathbb R^a,\ \bm v\in\mathbb R^b,\ 
    \mathbf 1^T\bm u=1,\ \mathbf 1^T\bm v=1
  \right\}
  \subset \mathcal S_{ab}
\]
be the discrete product-form model.  Then
\[
  \operatorname{codim}_{\mathcal S_{ab}}
  \mathcal E_{ab}^{\mathrm{ind}}
  =
  (a-1)(b-1).
\]

Let $R\subset[m]$ be the set of target components that are independent on
$(s,t)$.  For $i\in R$, fix product-form tables
$P_i^{\mathrm{ind}}\in\mathcal E_{ab}^{\mathrm{ind}}$.  For $j\notin R$,
let
\[
  P_j(\eta)\in\mathcal S_{ab},\qquad \eta\in\mathcal H,
\]
be a smooth family of non-target bivariate marginal tables, where
$\mathcal H$ is a smooth parameter manifold of dimension $h$.  Let
$\mathcal B$ be a smooth coefficient manifold of dimension $d_B$ contained in
\[
  \left\{
    \bm\beta\in\mathbb R^m
    \,\middle|\,
    \sum_{i=1}^m\beta_i=1,\ 
    (\beta_j)_{j\notin R}\ne \bm 0
  \right\}.
\]
Define
\[
  F:\mathcal H\times\mathcal B\to\mathcal S_{ab},
  \qquad
  F(\eta,\bm\beta)
  =
  \sum_{i\in R}\beta_i P_i^{\mathrm{ind}}
  +
  \sum_{j\notin R}\beta_j P_j(\eta).
\]
Assume that $F$ is transverse to
$\mathcal E_{ab}^{\mathrm{ind}}$.  If
\[
  (a-1)(b-1)>d_B,
\]
then the exceptional parameter set
\[
  \mathcal N
  :=
  \left\{
    \eta\in\mathcal H
    \,\middle|\,
    \exists \bm\beta\in\mathcal B
    \text{ such that }
    F(\eta,\bm\beta)\in\mathcal E_{ab}^{\mathrm{ind}}
  \right\}
\]
has Lebesgue measure zero in $\mathcal H$, in every local coordinate chart.
Consequently, for almost every component parameter $\eta$, no coefficient
vector in $\mathcal B$ produces a nontrivial signed affine cancellation into a
product-form bivariate marginal.
\end{proposition}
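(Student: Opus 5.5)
The proof has two parts: a dimension count for the product model, and a transversality argument followed by projection to $\mathcal H$.

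\emph{Codimension.} I would identify $\mathcal S_{ab}$ with an affine space of dimension $ab-1$. Then I would parametrize $\mathcal E_{ab}^{\mathrm{ind}}$ by the map
\[
\psi(\bm u,\bm v)=\bm u\bm v^T,
\]
where $\bm u$ ranges over the affine hyperplane $\mathbf 1^T\bm u=1$ (dimension $a-1$) and $\bm v$ over $\mathbf 1^T\bm v=1$ (dimension $b-1$). The map $\psi$ is injective on this domain. If $\bm u\bm v^T=\bm u'\bm v'^T\ne0$, then $\bm u'=c\bm u$ and $\bm v'=c^{-1}\bm v$. Summing the entries of $\bm u'$ gives $1=c$, so the two pairs coincide. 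The differential is $(\dot{\bm u},\dot{\bm v})\mapsto\dot{\bm u}\bm v^T+\bm u\dot{\bm v}^T$. If this vanishes, then $\dot{\bm u}\bm v^T=-\bm u\dot{\bm v}^T$, which forces $\dot{\bm u}=\lambda\bm u$ and $\dot{\bm v}=-\lambda\bm v$. Summing entries, $\mathbf 1^T\dot{\bm u}=0$ gives $\lambda=0$, so the differential is injective. Hence $\mathcal E_{ab}^{\mathrm{ind}}$ is an injectively immersed submanifold of dimension $a+b-2$. This is the rank-one stratum of the affine slice, and it is embedded because the rank-one locus is a smooth submanifold away from the zero matrix. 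Its codimension in $\mathcal S_{ab}$ is $(ab-1)-(a+b-2)=(a-1)(b-1)$.

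\emph{Genericity.} By the transversality assumption, the preimage
\[
\mathcal Z:=F^{-1}(\mathcal E_{ab}^{\mathrm{ind}})\subset\mathcal H\times\mathcal B
\]
is either empty or a smooth submanifold of codimension $(a-1)(b-1)$. Its dimension is therefore $h+d_B-(a-1)(b-1)$, which is strictly less than $h$ because $(a-1)(b-1)>d_B$. The exceptional set is $\mathcal N=\pi(\mathcal Z)$, where $\pi:\mathcal H\times\mathcal B\to\mathcal H$ is the projection. The restriction $\pi|_{\mathcal Z}$ is a smooth map from a manifold of dimension less than $h$ into an $h$-dimensional manifold. Every point of $\mathcal Z$ is therefore a critical point, and Sard's theorem gives that $\pi(\mathcal Z)$ has measure zero in every chart. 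One can avoid Sard: a smooth map from a lower-dimensional second-countable manifold is locally Lipschitz, so its image has $h$-dimensional Hausdorff measure zero. Second countability of $\mathcal Z$ is inherited from $\mathcal H\times\mathcal B$, and it lets us cover $\mathcal Z$ by countably many charts. The final sentence of the proposition is then a restatement: for $\eta\notin\mathcal N$, no $\bm\beta\in\mathcal B$ satisfies $F(\eta,\bm\beta)\in\mathcal E_{ab}^{\mathrm{ind}}$.

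The main obstacle is making the transversality hypothesis meaningful. It requires $\mathcal E_{ab}^{\mathrm{ind}}$ to be an honest embedded submanifold, so the embedding claim in the codimension step must be checked carefully. Once that holds, the preimage theorem and the measure-zero projection argument are standard.
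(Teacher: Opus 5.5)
Your proposal is correct and follows the paper's proof essentially step for step. The parametrization $(\bm u,\bm v)\mapsto\bm u\bm v^T$, shown to be injective with injective differential, gives the codimension $(a-1)(b-1)$. The preimage theorem followed by projection onto $\mathcal H$ then gives the measure-zero conclusion, and your appeal to Sard's theorem (or to Lipschitz images of lower-dimensional manifolds) makes rigorous the paper's informal remark that a smooth map cannot increase dimension.

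For the embedding step, intersecting the rank-one locus with the total-mass hyperplane would need its own transversality check. It is simpler to note, as the paper implicitly does, that $Q\mapsto(Q\mathbf 1,Q^T\mathbf 1)$ is a linear left inverse of the parametrization, so the injective immersion is a homeomorphism onto its image.
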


\begin{proof}
The affine space $\mathcal S_{ab}$ has dimension $ab-1$, since an
$a\times b$ table has $ab$ entries and one linear total-mass constraint.
The map
\[
  \Psi:
  \left\{
    \bm u\in\mathbb R^a\,\middle|\, \mathbf 1^T\bm u=1
  \right\}
  \times
  \left\{
    \bm v\in\mathbb R^b\,\middle|\, \mathbf 1^T\bm v=1
  \right\}
  \to
  \mathcal S_{ab},
  \qquad
  \Psi(\bm u,\bm v)=\bm u\bm v^T,
\]
parametrizes $\mathcal E_{ab}^{\mathrm{ind}}$.  The domain of $\Psi$ has
dimension $(a-1)+(b-1)=a+b-2$.  The parametrization is injective, because
for $Q=\bm u\bm v^T$ with $\mathbf 1^T\bm u=\mathbf 1^T\bm v=1$, the row and
column sums of $Q$ recover $\bm u$ and $\bm v$, respectively.

The differential of $\Psi$ is also injective.  Indeed, suppose that
\[
  \delta\bm u\,\bm v^T+\bm u\,\delta\bm v^T=0,
  \qquad
  \mathbf 1^T\delta\bm u=0,\quad
  \mathbf 1^T\delta\bm v=0.
\]
Multiplying the first display on the right by $\mathbf 1$ gives
\[
  \delta\bm u\,(\bm v^T\mathbf 1)
  +\bm u\,(\delta\bm v^T\mathbf 1)
  =
  \delta\bm u=0,
\]
because $\mathbf 1^T\bm v=1$ and $\mathbf 1^T\delta\bm v=0$.  Multiplying on
the left by $\mathbf 1^T$ then gives $\delta\bm v=0$.  Hence
$\mathcal E_{ab}^{\mathrm{ind}}$ is a smooth embedded submanifold of
$\mathcal S_{ab}$ with dimension $a+b-2$.  Therefore
\[
  \operatorname{codim}_{\mathcal S_{ab}}
  \mathcal E_{ab}^{\mathrm{ind}}
  =
  (ab-1)-(a+b-2)
  =
  (a-1)(b-1).
\]

Now define the failure set in the joint parameter--coefficient space by
\[
  \mathcal Z
  :=
  F^{-1}(\mathcal E_{ab}^{\mathrm{ind}})
  =
  \left\{
    (\eta,\bm\beta)\in\mathcal H\times\mathcal B
    \,\middle|\,
    F(\eta,\bm\beta)\in\mathcal E_{ab}^{\mathrm{ind}}
  \right\}.
\]
By the transversality assumption and the preimage theorem,
$\mathcal Z$ is a smooth submanifold of $\mathcal H\times\mathcal B$ of
dimension
\[
  \dim\mathcal Z
  =
  \dim\mathcal H+\dim\mathcal B
  -
  \operatorname{codim}_{\mathcal S_{ab}}
  \mathcal E_{ab}^{\mathrm{ind}}.
\]
Using the codimension computed above, this gives
\[
  \dim\mathcal Z
  =
  h+d_B-(a-1)(b-1).
\]
Let $\pi_{\mathcal H}:\mathcal H\times\mathcal B\to\mathcal H$ be the
projection.  The exceptional parameter set is
\[
  \mathcal N=\pi_{\mathcal H}(\mathcal Z).
\]
Since a smooth map cannot increase local dimension,
\[
  \dim\mathcal N
  \le
  \dim\mathcal Z
  =
  h+d_B-(a-1)(b-1).
\]
If $(a-1)(b-1)>d_B$, then
\[
  \dim\mathcal N < h=\dim\mathcal H.
\]
Thus $\mathcal N$ is locally contained in the image of a lower-dimensional
smooth manifold and has Lebesgue measure zero in every coordinate chart of
$\mathcal H$.  Hence, for almost every $\eta\in\mathcal H$, there is no
$\bm\beta\in\mathcal B$ such that
$F(\eta,\bm\beta)\in\mathcal E_{ab}^{\mathrm{ind}}$.  This proves the claim.
\end{proof}

\section{Completion after partial first-stage recovery}
\label{app:partial-completion_theorems}

Let us consider the recovery of $p_i\in\mathcal{P}$, $i\in[m]$, when
\begin{align*}
  \bigcup_{\substack{s,t\in[d]\\ s<t}} R_{st} \neq [m].
\end{align*}
Define
\begin{align*}
  \mathcal{I} := \bigcup_{\substack{s,t\in[d]\\ s<t}} R_{st},
  \qquad
  \mathcal{J} := [m]\setminus\mathcal{I}.
\end{align*}
Assume throughout this subsection that 
$\Theta\in[0,1]^{L\times m}$ has full column rank.
For each $i\in\mathcal{I}$, Theorem~\ref{thm:marginal-independence-identifiability}
identifies the component density $p_i$.
When $L>m$, however, the affine coefficient vector
$\bm r_i\in\mathbb{R}^L$ satisfying $\bm r_i^T\U = p_i$ is generally not
unique, because one may add any vector in $\ker(\Theta^T)$. 
Thus the identifiable object at this stage is the density $p_i$ itself,
whereas the representing coefficient vector should be treated only as an
auxiliary right-inverse variable.

For later use, let $E_{\mathcal{I}} := [\,\e_i\,]_{i\in\mathcal{I}}
\in \mathbb{R}^{m\times |\mathcal{I}|}$, and choose any matrix
$R_{\mathcal{I}}\in\mathbb{R}^{L\times |\mathcal{I}|}$ satisfying
\begin{align*}
  \Theta^T R_{\mathcal{I}} = E_{\mathcal{I}}.
\end{align*}
Such a matrix exists because $\Theta$ has full column rank; for example, one
may take
\begin{align*}
  R_{\mathcal{I}} = \Theta(\Theta^T\Theta)^{-1} E_{\mathcal{I}}.
\end{align*}
Let $\Theta_{\mathcal{I}}$ and $\Theta_{\mathcal{J}}$ denote the submatrices
of $\Theta$ formed by the columns indexed by $\mathcal{I}$ and
$\mathcal{J}$, respectively, and let
$\bm p_{\mathcal{I}} := (p_i)_{i\in\mathcal{I}}^T$ and
$\bm p_{\mathcal{J}} := (p_j)_{j\in\mathcal{J}}^T$ be column vectors whose entries are probability densities.
For any $R_{\mathcal{I}}\in\mathbb{R}^{L\times |\mathcal{I}|}$ satisfying
$\Theta^T R_{\mathcal{I}} = E_{\mathcal{I}}$, we have
\begin{align}
  R_{\mathcal{I}}^T \Theta_{\mathcal{I}} = I_{|\mathcal{I}|},
  \qquad
  R_{\mathcal{I}}^T \Theta_{\mathcal{J}} = O,
  \qquad
  R_{\mathcal{I}}^T \U(\x) = \bm p_{\mathcal{I}}(\x),
  \qquad
  \bm{1}_L^T R_{\mathcal{I}} = \bm{1}_{|\mathcal{I}|}^T.
\end{align}

To recover the mixing matrix $\Theta$, let us consider irreducibility conditions.
\begin{definition}
[One-sided irreducibility{\cite{blanchard2010semi}}]
For any two probability distributions $F$ and $H$, define
\begin{align}
 \kappa(F\mid H)
  := \sup\Bigl\{ \alpha\in[0,1]
      \Bigm| F = \alpha H + (1-\alpha) G,\text{ for some distribution } G
    \Bigr\}.
\end{align}
We say that $F$ is \emph{irreducible with respect to} $H$ if
$\kappa(F\mid H)=0$.
When probability densities $f,g\in\mathcal{P}$ exist, 
we also write $\kappa(f\mid h)$ for the same quantity.
\end{definition}
\begin{remark}
We refer to the condition that $F$ is irreducible with respect to $H$
as \emph{one-sided irreducibility}, in order to distinguish it from the
multi-distribution notion of \emph{joint irreducibility} introduced below.
\end{remark}
\begin{definition}
[Joint irreducibility\cite{blanchard14:_decon_mutual_contam_model,scott13:_class_asymm_label_noise}]
\label{def:joint-irreducibility}
The family $\{p_j\}_{j\in\mathcal{J}}\subset\mathcal{P}$ is said to be
\emph{jointly irreducible} if, for every $(\beta_j)_{j\in\mathcal{J}}
\in \mathbb{R}^{|\mathcal{J}|}$ satisfying
\begin{align*}
  \sum_{j\in\mathcal{J}} \beta_j p_j \in \mathcal{P},
\end{align*}
it follows that $\beta_j \ge 0$ for all $j\in\mathcal{J}$.
\end{definition}
For the recovery of $\Theta_{\mathcal I}$, we use one-sided irreducibility
through mixture proportions of the form $\kappa(U_\ell\mid p_i)$, whereas
the recovery of the remaining block $\Theta_{\mathcal J}$ relies on joint
irreducibility of $\{p_j\}_{j\in\mathcal J}$.

\begin{theorem}
\label{thm:recover-known-columns}
Suppose that for each $i\in\mathcal{I}$ and $\ell\in[L]$ with
$\theta_{\ell i}<1$, the probability density
\begin{align}
 \label{eq:def-residual-after-removal}
  q_{\ell i}(\x)
  := \frac{U_\ell(\x)-\theta_{\ell i} p_i(\x)}{1-\theta_{\ell i}}
    \in\mathcal{P}
\end{align}
is one-sided irreducible with respect to $p_i$. Then
\begin{align*}
  \theta_{\ell i} = \kappa(U_\ell \mid p_i),
  \qquad \ell\in[L],\ i\in\mathcal{I}.
\end{align*}
Consequently, the submatrix $\Theta_{\mathcal{I}}$ is identifiable.
\end{theorem}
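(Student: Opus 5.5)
The plan is to fix $i\in\mathcal I$ and $\ell\in[L]$, and to show $\kappa(U_\ell\mid p_i)=\theta_{\ell i}$ in two steps: first a lower bound $\ge$, then an upper bound $\le$ using irreducibility. After that, identifiability of $\Theta_{\mathcal I}$ follows because each $p_i$, $i\in\mathcal I$, is identified by Corollary~\ref{cor:full-recovery-marginal-independence}, and $U_\ell$ is observable. Hence $\kappa(U_\ell\mid p_i)$ is a functional of identified objects, up to the usual label permutation.

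\textbf{Step 1 (lower bound).} If $\theta_{\ell i}=1$, then $U_\ell=p_i$ and $\kappa=1$ trivially, taking any $G$. Otherwise $U_\ell=\theta_{\ell i}p_i+(1-\theta_{\ell i})q_{\ell i}$ with $q_{\ell i}\in\mathcal P$. This holds because $U_\ell-\theta_{\ell i}p_i=\sum_{j\ne i}\theta_{\ell j}p_j\ge0$ has mass $1-\theta_{\ell i}$. Thus $\alpha=\theta_{\ell i}$ is feasible in the definition of $\kappa$, giving $\kappa(U_\ell\mid p_i)\ge\theta_{\ell i}$.

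\textbf{Step 2 (upper bound).} Suppose $U_\ell=\alpha p_i+(1-\alpha)G$ with $G$ a probability distribution and $\alpha>\theta_{\ell i}$; in particular $\alpha\le1$ and $\theta_{\ell i}<1$. Equating the two representations of $U_\ell$ gives
\[
(1-\theta_{\ell i})q_{\ell i}=(\alpha-\theta_{\ell i})p_i+(1-\alpha)G .
\]
Divide by $1-\theta_{\ell i}>0$ and set $\alpha'=(\alpha-\theta_{\ell i})/(1-\theta_{\ell i})\in(0,1]$. Then $q_{\ell i}=\alpha' p_i+(1-\alpha')G$, so $\kappa(q_{\ell i}\mid p_i)\ge\alpha'>0$, which contradicts one-sided irreducibility. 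The case $\alpha=1$ is handled the same way: it gives $q_{\ell i}=p_i$ and hence $\kappa=1$. Therefore every feasible $\alpha$ satisfies $\alpha\le\theta_{\ell i}$, and taking the supremum yields equality.

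The only delicate point is bookkeeping of the degenerate cases $\theta_{\ell i}=1$ and $\alpha=1$, together with stating clearly that identifiability refers to the densities $p_i$ rather than to the non-unique coefficient vectors $\bm r_i$ when $L>m$. I do not expect a substantive obstacle.
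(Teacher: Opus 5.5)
Your proposal is correct and follows essentially the same argument as the paper. The lower bound comes from the explicit decomposition $U_\ell=\theta_{\ell i}p_i+(1-\theta_{\ell i})q_{\ell i}$, and the upper bound comes from subtracting $\theta_{\ell i}p_i$ from a hypothetical decomposition with $\alpha>\theta_{\ell i}$ and renormalizing, which contradicts the irreducibility of $q_{\ell i}$ with respect to $p_i$; your treatment of the degenerate case $\theta_{\ell i}=1$ and of identifiability through the already-identified densities $p_i$ matches the paper's surrounding remarks.
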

If $\theta_{\ell i}=1$, then $U_\ell=p_i$, and therefore $\kappa(U_\ell\mid p_i)=1=\theta_{\ell i}$. 

\begin{proof}
 [Proof of Theorem~\ref{thm:recover-known-columns}]
Note that
\begin{align*}
  U_\ell = \theta_{\ell i} p_i + (1-\theta_{\ell i}) q_{\ell i},
\end{align*}
so $\kappa(U_\ell\mid p_i) \ge \theta_{\ell i}$. If
$\kappa(U_\ell\mid p_i) > \theta_{\ell i}$, then for some
$\alpha > \theta_{\ell i}$ and some $g\in\mathcal{P}$,
\begin{align*}
  U_\ell = \alpha p_i + (1-\alpha) g.
\end{align*}
Subtracting $\theta_{\ell i} p_i$ and renormalizing gives
\begin{align*}
  q_{\ell i}
  = \frac{\alpha-\theta_{\ell i}}{1-\theta_{\ell i}} p_i
    + \frac{1-\alpha}{1-\theta_{\ell i}} g,
\end{align*}
which contradicts the irreducibility of $q_{\ell i}$ with respect to $p_i$.
Hence $\kappa(U_\ell\mid p_i)=\theta_{\ell i}$.
\end{proof}

Without an additional irreducibility-type assumption, the block
$\Theta_{\mathcal{I}}$ is not identifiable in general from the observable
mixtures, even when the vectors $\bm{r}_i$, $i \in \mathcal{I}$ 
in Theorem~\ref{thm:marginal-independence-identifiability}, are known. 
An exception is the full square case $L=m$ with $|\mathcal{I}|=m$, 
in which $\Theta=\Theta_{\mathcal{I}}=R^{-1}$.
Theorem~\ref{thm:recover-known-columns} removes this ambiguity under a
one-sided irreducibility assumption by identifying each $\theta_{\ell i}$ from
the observable one-dimensional mixture, 
$U_\ell = \theta_{\ell i} p_i + (1-\theta_{\ell i}) q_{\ell i}$.

The remaining recovery problem is reduced to 
the unresolved block $\Theta_{\mathcal{J}}$. 
\begin{theorem}
\label{thm:remaining-components-convex-hull}
Assume that $\{p_j\}_{j\in\mathcal{J}}$ is jointly irreducible in the
sense of Definition~\ref{def:joint-irreducibility}. 
Define
\begin{align*}
  \mathcal{U}_{\mathcal{J}}
  := \Bigl\{
       \bm{a}^T \U(\x) \in \mathcal{P}
       \Bigm| \bm{a}\in\mathbb{R}^L,
       \ \bm{a}^T \bm{1}_L = 1,
       \ \bm{a}^T \Theta_{\mathcal{I}} = 0
     \Bigr\}.
\end{align*}
Then
\begin{align*}
  \mathcal{U}_{\mathcal{J}} = \operatorname{conv}\{ p_j \mid j\in\mathcal{J} \}.
\end{align*}
Consequently,
\begin{align*}
  \operatorname{ext}(\mathcal{U}_{\mathcal{J}})
  = \{ p_j \mid j\in\mathcal{J} \},
\end{align*}
so all remaining components $p_j$, $j\in\mathcal{J}$, are identifiable, 
where $\operatorname{ext}(\mathcal U_{\mathcal J})$ denotes the set of extreme 
points of the convex set $\mathcal U_{\mathcal J}$.
Moreover, $\Theta_{\mathcal{J}}$ is identifiable, and hence the full mixing
matrix $\Theta$ is identifiable.
\end{theorem}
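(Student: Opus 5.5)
The plan is to compute $\bm a^T\U$ in terms of the latent components, and then use joint irreducibility to force the coefficients to be nonnegative. For any $\bm a\in\mathbb{R}^L$ with $\bm a^T\bm 1_L=1$ and $\bm a^T\Theta_{\mathcal I}=0$, set $\bm\beta:=\Theta^T\bm a$. Then
\[
  \bm a^T\U=\sum_{j\in\mathcal J}\beta_j p_j,
  \qquad
  \sum_{j\in\mathcal J}\beta_j=\bm a^T\Theta\bm 1_m=1,
\]
because the coordinates $\beta_i$ with $i\in\mathcal I$ vanish.

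\emph{Inclusion $\subseteq$.} If moreover $\bm a^T\U\in\mathcal P$, joint irreducibility (Definition~\ref{def:joint-irreducibility}) gives $\beta_j\ge0$ for all $j\in\mathcal J$. Together with $\sum_{j\in\mathcal{J}}\beta_j=1$, this places $\bm a^T\U$ in $\operatorname{conv}\{p_j\}_{j\in\mathcal J}$.

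\emph{Inclusion $\supseteq$.} Take a convex weight vector $\bm\gamma\in\mathbb{R}^{|\mathcal J|}$. I need some $\bm a$ with $\Theta^T\bm a=E_{\mathcal J}\bm\gamma$; full column rank of $\Theta$ makes $\Theta^T$ surjective, so one can take $\bm a=\Theta(\Theta^T\Theta)^{-1}E_{\mathcal J}\bm\gamma$. Then $\bm a^T\Theta_{\mathcal I}=0$ and $\bm a^T\bm 1_L=\bm a^T\Theta\bm 1_m=\bm\gamma^T\bm 1=1$. Also $\bm a^T\U=\sum_j\gamma_j p_j\in\mathcal P$, so this convex combination lies in $\mathcal U_{\mathcal J}$.

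\emph{Extreme points.} The extreme points of $\operatorname{conv}\{p_j\}$ form a subset of $\{p_j\}$, so it suffices to show that no $p_k$ is a nontrivial convex combination of the others. Suppose $p_k=\sum_j\gamma_j p_j$ with $\gamma\neq e_k$. Then $p_k=\gamma_kp_k+\sum_{j\ne k}\gamma_j p_j$ with $\gamma_k<1$, so
\[
  p_k=\sum_{j\ne k}\frac{\gamma_j}{1-\gamma_k}\,p_j .
\]
Now choose $j_0\neq k$ with $\gamma_{j_0}>0$, and consider
\[
  2p_k-\sum_{j\ne k}\frac{\gamma_j}{1-\gamma_k}\,p_j=p_k\in\mathcal P .
\]
This is an affine combination of $\{p_j\}_{j\in\mathcal J}$ with a negative coefficient, which contradicts joint irreducibility. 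The same kind of argument also shows the $p_j$ are distinct.

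\emph{Identifiability of $\Theta_{\mathcal J}$.} This is the step I expect to need the most care, in particular to ensure it uses only observables. $\Theta_{\mathcal I}$ is identified by Theorem~\ref{thm:recover-known-columns} (assumed available in this setting), and then $\mathcal U_{\mathcal J}$ is defined from $\U$ and $\Theta_{\mathcal I}$ alone. Hence the densities $\{p_j\}_{j\in\mathcal J}$ are identified as its extreme points, up to labeling. The components $p_i$ for $i\in\mathcal I$ are identified by Corollary~\ref{cor:full-recovery-marginal-independence}.

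To finish, I will show that the identified family $\{p_i\}_{i\in[m]}$ is linearly independent. If $\sum_i c_ip_i=0$ with $\bm c\ne0$, then by full column rank of $\Theta$ one gets $\bm c^T\bm 1\ne0$ issues; instead I would argue directly. Any vanishing linear relation can be normalized, or split into positive and negative parts, to produce a signed affine combination contradicting joint irreducibility or the representation uniqueness. If this fails, I would fall back on obtaining uniqueness of $\Theta_{\mathcal J}$ from the equation $\U-\Theta_{\mathcal I}\bm p_{\mathcal I}=\Theta_{\mathcal J}\bm p_{\mathcal J}$. There the residual mixture $\U-\Theta_{\mathcal I}\bm p_{\mathcal I}$ is known, and for each $\ell$ its normalization lies in $\operatorname{conv}\{p_j\}$, whose barycentric coordinates are unique because the $p_j$ are affinely independent. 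Affine independence follows from joint irreducibility: an affine relation $\sum_j c_jp_j=0$ with $\sum_j c_j=0$ and $c\ne0$ would give $p_k+t\sum_j c_jp_j\in\mathcal P$ with negative coefficients for suitable $t$. The row sums $1-\sum_{i\in\mathcal I}\theta_{\ell i}$ give the normalization. Therefore $\Theta_{\mathcal J}$, and hence $\Theta$, is identified.
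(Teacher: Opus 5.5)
Your proof of $\mathcal U_{\mathcal J}=\operatorname{conv}\{p_j\mid j\in\mathcal J\}$ is the paper's argument. Your extreme-point argument is a direct, explicit version of the paper's remark that joint irreducibility implies affine independence. Your fallback for $\Theta_{\mathcal J}$ is exactly how the paper finishes: deflate $U_\ell$ by the already identified $\Theta_{\mathcal I}\bm p_{\mathcal I}$, normalize by $c_\ell=1-\sum_{i\in\mathcal I}\theta_{\ell i}$, and read off the unique barycentric coordinates. Make that your main line, and add the trivial case $c_\ell=0$, where $\theta_{\ell j}=0$ for all $j\in\mathcal J$.

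Drop the attempted proof that the whole family $\{p_i\}_{i\in[m]}$ is linearly independent, because it does not follow from the hypothesis you invoke. Joint irreducibility constrains only $\{p_j\}_{j\in\mathcal J}$. For example, with $\mathcal I=\{1,2\}$ and $\mathcal J=\{3\}$, it cannot rule out $p_3=\tfrac12(p_1+p_2)$, since joint irreducibility of a single density is vacuous. In the paper's framework such relations are excluded only by the no-cancellation and subset-rank conditions of Corollary~\ref{cor:full-recovery-marginal-independence}. That is not the argument you sketched, and the deflation route does not need it.
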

\begin{proof}
[Proof of Theorem~\ref{thm:remaining-components-convex-hull}]
Let $q = \bm{a}^T \U \in \mathcal{U}_{\mathcal{J}}$ and define
$\bm{\beta} = \Theta^T \bm{a}$. Since $\U = \Theta \bm{p}$,
\begin{align*}
  q = \bm{a}^T \Theta \bm{p} = \sum_{i\in[m]} \beta_i p_i.
\end{align*}
The constraint $\bm{a}^T \Theta_{\mathcal{I}} = 0$ implies
$\beta_i = 0$ for all $i\in\mathcal{I}$, and
$\bm{a}^T \bm{1}_L = 1$ implies $\sum_{j\in\mathcal{J}} \beta_j = 1$.
By the joint irreducibility of $\{p_j\}_{j\in\mathcal{J}}$, we obtain
$\beta_j \ge 0$ for all $j\in\mathcal{J}$. Hence
$q \in \operatorname{conv}\{ p_j \mid j\in\mathcal{J} \}$.
Conversely, let
\begin{align*}
  q = \sum_{j\in\mathcal{J}} \beta_j p_j
  \in \operatorname{conv}\{ p_j \mid j\in\mathcal{J} \},
\end{align*}
where $\beta_j \ge 0$ and $\sum_{j\in\mathcal{J}} \beta_j = 1$.
Define $\bm{\beta}\in\mathbb{R}^m$ by setting $\beta_i=0$ for
$i\in\mathcal{I}$, and let
\begin{align*}
  \bm{a} := \Theta(\Theta^T\Theta)^{-1} \bm{\beta}.
\end{align*}
Then
\begin{align*}
  \bm{a}^T \Theta = \bm{\beta}^T,
  \qquad
  \bm{a}^T \Theta_{\mathcal{I}} = 0,
  \qquad
  \bm{a}^T \bm{1}_L
  = \bm{\beta}^T (\Theta^T\Theta)^{-1}\Theta^T \bm{1}_L
  = \bm{\beta}^T \bm{1}_m = 1,
\end{align*}
since $\Theta \bm{1}_m = \bm{1}_L$ implies
$(\Theta^T\Theta)^{-1}\Theta^T \bm{1}_L = \bm{1}_m$.
Therefore $q = \bm{a}^T \U \in \mathcal{U}_{\mathcal{J}}$.
This proves
$\mathcal{U}_{\mathcal{J}} = \operatorname{conv}\{ p_j \mid j\in\mathcal{J} \}$.
Since joint irreducibility implies affine independence, 
the convex hull of $\{p_j\}_{j\in\mathcal{J}}$ 
has exactly these points as extreme points. 
Hence, the components $p_j$, $j\in\mathcal{J}$, are identifiable.

It remains to identify $\Theta_{\mathcal{J}}$.
For each $\ell\in[L]$, the deflated observable density
\begin{align*}
  \widetilde U_\ell
  := U_\ell - \sum_{i\in\mathcal{I}} \theta_{\ell i} p_i
  = \sum_{j\in\mathcal{J}} \theta_{\ell j} p_j
\end{align*}
is identifiable, because both $U_\ell$ and the block
$\Theta_{\mathcal{I}}\bm p_{\mathcal{I}}$ are identifiable.
Let
\begin{align*}
  c_\ell := \sum_{j\in\mathcal{J}} \theta_{\ell j}
  = 1 - \sum_{i\in\mathcal{I}} \theta_{\ell i}.
\end{align*}
Since $\Theta_{\mathcal{I}}$ is already known, the scalar $c_\ell$ is known as
well.
If $c_\ell=0$, then necessarily $\theta_{\ell j}=0$ for all
$j\in\mathcal{J}$.
Assume now that $c_\ell>0$, and write
\begin{align*}
  \frac{\widetilde U_\ell}{c_\ell}
  = \sum_{j\in\mathcal{J}} \lambda_{\ell j} p_j,
  \qquad
  \lambda_{\ell j} := \frac{\theta_{\ell j}}{c_\ell},
  \qquad
  \sum_{j\in\mathcal{J}} \lambda_{\ell j} = 1.
\end{align*}
Thus $\widetilde U_\ell/c_\ell$ is an affine combination of the identifiable
family $\{p_j\}_{j\in\mathcal{J}}$.
Because this family is affinely independent, that affine representation is
unique.
Hence the coefficients $\lambda_{\ell j}$ are uniquely determined, and then so
are
\begin{align*}
  \theta_{\ell j} = c_\ell\lambda_{\ell j}, \qquad j\in\mathcal{J}.
\end{align*}
Therefore every row of $\Theta_{\mathcal{J}}$ is identifiable.
Since $\Theta_{\mathcal{I}}$ was already identified by
Theorem~\ref{thm:recover-known-columns}, the full mixing matrix 
$\Theta = (\Theta_{\mathcal{I}}\ \Theta_{\mathcal{J}})$ is identifiable.
\end{proof}

Additional assumptions for completing the mixing matrix $\Theta$ after component recovery
are summarized below.
\begin{assumption}
Let $\mathcal{J}=[m]\setminus\mathcal{I}$. We impose the following conditions:
\begin{itemize}
 \item[(i)] For each $i\in\mathcal{I}$, the probability density $p_i\in\mathcal{P}$ contains at least one pair of independent variables.
 \item[(ii)] For each $i\in\mathcal{I}$ and $\ell\in[L]$, 
 the probability density $q_{\ell i}\in\mathcal{P}$ in \eqref{eq:def-residual-after-removal}
      is one-sided irreducible with respect to $p_i$.
 \item[(iii)] The collection $\{p_j\}_{j\in\mathcal{J}}$ is jointly irreducible. 
\end{itemize}
\end{assumption}

\begin{example}[Assumption (ii) alone does not identify the binary decomposition]
Let $L=m=2$ and $\mathcal I=\{1\}$, so that $\mathcal J=\{2\}$ and
assumption (iii) is automatic. Write
\[
U_\ell(x)=\theta_{\ell1}p_1(x)+(1-\theta_{\ell1})p_2(x),
\qquad \ell=1,2,
\]
where $0<\theta_{11},\theta_{21}<1$ and $\theta_{11}\neq\theta_{21}$.
Then $q_{11}=q_{21}=p_2$, so assumption (ii) reduces to the one-sided
irreducibility of $p_2$ with respect to $p_1$.
This condition alone does not identify the binary decomposition. Indeed, for
any $\lambda\in(\max\{\theta_{11},\theta_{21}\},1]$, let
\begin{align*}
p_1^{(\lambda)}:=\lambda p_1+(1-\lambda)p_2. 
\end{align*}
Then
\begin{align*}
U_\ell(x)
=
\frac{\theta_{\ell1}}{\lambda}p_1^{(\lambda)}(x)
+
\left(1-\frac{\theta_{\ell1}}{\lambda}\right)p_2(x),
\qquad \ell=1,2, 
\end{align*}
and this alternative decomposition still satisfies assumption (ii). Hence
assumption (ii) alone does not identify the binary decomposition.
The point is that assumption~(ii) should be viewed only as a condition for recovering the
mixing coefficients associated with an already identified component. 
In the present framework, the component $p_1$ is first identified from assumption~(i) via 
Theorem~\ref{thm:marginal-independence-identifiability} and assumption~(ii) is then used to identify the 
corresponding column of $\Theta_{\mathcal{I}}$ 
through the relation $\theta_{\ell 1}=\kappa(U_\ell \mid p_1)$.
By contrast, suppose that $\{p_1,p_2\}$ is 
mutually irreducible~\cite{scott13:_class_asymm_label_noise}; that is, 
each component is irreducible with respect to the other. Then the binary
decomposition is identifiable up to permutation.
This is a different sufficient condition 
from the marginal-independence route used here.
\end{example}

\section{Completion Algorithms} 
\label{app:partial-completion}

\subsection{Completion when exactly one component remains}
\label{app:one-missing-component}

Suppose that, after representative selection and reindexing, the first-stage
outputs are
\[
  \{(\widehat{\bm r}_i,\widehat p_i)\}_{i=1}^{m-1},
  \qquad
  \widehat p_i:=\widehat{\bm r}_i^T\widehat{\bm U},
\]
so that exactly one component remains unresolved.  In this case, the first
\(m-1\) columns of the mixing matrix can be estimated by binary mixture
proportion estimation against the recovered components, and the last column is
then obtained from the row-sum constraint.
At the population level, this uses the mixture-proportion identity in
Theorem~\ref{thm:recover-known-columns} for the recovered components and the
row-sum constraint for the remaining column.

\begin{algorithm}[!t]
\caption{Completion when exactly one component remains}
\label{alg:completion-one-missing}
\begin{algorithmic}[1]
\REQUIRE Reindexed first-stage outputs
\(\{(\widehat{\bm r}_i,\widehat p_i)\}_{i=1}^{m-1}\),
empirical mixtures \(\{\widehat U_\ell\}_{\ell=1}^m\),
an estimator \(\widehat\kappa(f\mid h)\) of the mixture proportion
\(\kappa(f\mid h)\), and a threshold \(\eta\ge 0\).
\ENSURE An estimate \(\widehat\Theta\in[0,1]^{m\times m}\) and component
estimates \(\{\widehat p_i\}_{i=1}^m\).

\STATE Estimate the first \(m-1\) columns by
\[
  \widetilde\theta_{\ell i}
  :=
  \widehat\kappa(\widehat U_\ell\mid \widehat p_i),
  \qquad
  \ell\in[m],\ i\in[m-1].
\]

\STATE Recover the last column by the row-sum constraint:
\[
  \widetilde\theta_{\ell m}
  :=
  1-\sum_{i=1}^{m-1}\widetilde\theta_{\ell i},
  \qquad
  \ell\in[m].
\]

\STATE If needed, project each row
\[
  \widetilde{\bm\theta}_\ell
  =
  (\widetilde\theta_{\ell1},\dots,\widetilde\theta_{\ell m})^T
\]
onto the simplex \(\Delta_1^m\), and denote the resulting matrix by
\(\widehat\Theta=(\widehat\theta_{\ell i})\).

\STATE Let
\[
  I_\eta:=\{\ell\in[m]\mid \widehat\theta_{\ell m}>\eta\}.
\]
Estimate the remaining component by
\[
  \widehat p_m
  =
  \frac{
    \sum_{\ell\in I_\eta}
    \left(
      \widehat U_\ell
      -
      \sum_{i=1}^{m-1}\widehat\theta_{\ell i}\widehat p_i
    \right)
  }{
    \sum_{\ell\in I_\eta}\widehat\theta_{\ell m}
  }.
\]

\STATE \textbf{return}
\(\widehat\Theta\) and
\(\{\widehat p_1,\dots,\widehat p_{m-1},\widehat p_m\}\).
\end{algorithmic}
\end{algorithm}
The threshold \(\eta\) is used only as a numerical stabilization device:
if \(\widehat\theta_{\ell m}\) is very small, then the deflation formula for
\(\widehat p_m\) can be unstable.  At the population level, one may simply
take \(\eta=0\).

\subsection{Completion when exactly two components remain}
\label{app:two-missing-components}

We record a simple extension of Section~\ref{subsec:pmmd-algorithm} to the
case where the first stage recovers $m-2$ components.
After reindexing, suppose that the first-stage outputs are
\[
  \{(\widehat{\bm r}_i,\widehat p_i)\}_{i=1}^{m-2},
\]
so that the unresolved indices are $\mathcal J=\{m-1,m\}$.
At the population level, write
\[
  c_\ell
  :=
  1-\sum_{i=1}^{m-2}\theta_{\ell i}
  = \theta_{\ell,m-1}+\theta_{\ell m},
  \qquad \ell\in[m],
\]
and, whenever $c_\ell>0$, define the normalized residual mixture
\[
  V_\ell
  :=
  \frac{U_\ell-\sum_{i=1}^{m-2}\theta_{\ell i}p_i}{c_\ell}.
\]
Then
\[
  V_\ell
  =
  \lambda_\ell p_{m-1}+(1-\lambda_\ell)p_m,
  \qquad
  \lambda_\ell:=\frac{\theta_{\ell,m-1}}{c_\ell}\in[0,1].
\]
Hence each $V_\ell$ lies on the line segment
$\operatorname{conv}\{p_{m-1},p_m\}=\mathcal U_{\mathcal J}$ from
Theorem~\ref{thm:remaining-components-convex-hull}.
The next lemma shows that any two distinct points on this line segment determine
its two endpoints through mixture proportions.

\begin{lemma}
\label{lem:two-point-residue}
Let $p,q\in\mathcal P$ be jointly irreducible.
For $0\le \mu<\lambda\le 1$, define
\begin{align*}
  f:=\lambda p+(1-\lambda)q,
  \qquad
  h:=\mu p+(1-\mu)q. 
\end{align*}
Then
\begin{align*}
  \kappa(f\mid h)=\frac{1-\lambda}{1-\mu},
  \qquad
  p=\frac{f-\kappa(f\mid h)h}{1-\kappa(f\mid h)},
\end{align*}
 and
\begin{align*}
  \kappa(h\mid f)=\frac{\mu}{\lambda},
  \qquad
  q=\frac{h-\kappa(h\mid f)f}{1-\kappa(h\mid f)}. 
\end{align*}
\end{lemma}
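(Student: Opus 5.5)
The plan is to prove the first pair of identities directly and obtain the second pair by symmetry, swapping the roles $(p,q,\lambda,\mu)\leftrightarrow(q,p,1-\mu,1-\lambda)$. Note the edge case $\lambda=1$, where $f=p$ and the claimed value $\kappa(f\mid h)=0$ must still be checked; $\mu<\lambda$ guarantees $\mu<1$, so the denominators are fine.

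\emph{Step 1 (lower bound).} Exhibit an explicit decomposition of $f$ with weight $\alpha_0:=(1-\lambda)/(1-\mu)\in[0,1]$ on $h$. Solving $f-\alpha_0 h=(1-\alpha_0)g$ for $g$, the coefficient of $q$ in $f-\alpha_0h$ is $(1-\lambda)-\alpha_0(1-\mu)=0$. Hence $f-\alpha_0 h=(\lambda-\alpha_0\mu)p$, and $\lambda-\alpha_0\mu=(\lambda-\mu)/(1-\mu)=1-\alpha_0$. So $f=\alpha_0 h+(1-\alpha_0)p$ with $p\in\mathcal P$, which gives $\kappa(f\mid h)\ge\alpha_0$. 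When $\alpha_0<1$, this is exactly the formula $p=(f-\kappa h)/(1-\kappa)$, provided equality $\kappa(f\mid h)=\alpha_0$ is established.

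\emph{Step 2 (upper bound).} This is the main step. Suppose $f=\alpha h+(1-\alpha)g$ for some $\alpha>\alpha_0$ and some probability density $g$; by the previous step we may assume $\alpha<1$, since $\alpha=1$ would force $f=h$, contradicting $\mu\ne\lambda$ and the affine independence implied by joint irreducibility. Then
\[
g=\frac{f-\alpha h}{1-\alpha}=\frac{(\lambda-\alpha\mu)p+\bigl((1-\lambda)-\alpha(1-\mu)\bigr)q}{1-\alpha}\in\mathcal P,
\]
an affine combination of $p,q$ whose coefficients sum to one. The coefficient of $q$ is negative precisely because $\alpha>\alpha_0$. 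Joint irreducibility of $\{p,q\}$ (Definition~\ref{def:joint-irreducibility}) forces all coefficients to be nonnegative, a contradiction. Hence $\kappa(f\mid h)=\alpha_0$. One subtlety is that the supremum might not be attained; the contradiction for every $\alpha>\alpha_0$ handles this, since the supremum then cannot exceed $\alpha_0$.

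\emph{Step 3 (second pair).} For $\kappa(h\mid f)$, run the same argument with the weight $\mu/\lambda$: we get $h-(\mu/\lambda)f=(1-\mu/\lambda)q$, so $h=(\mu/\lambda)f+(1-\mu/\lambda)q$. Any larger weight makes the $p$-coefficient of the residual negative, again contradicting joint irreducibility. Rearranging gives the formula for $q$, valid since $\mu/\lambda<1$.
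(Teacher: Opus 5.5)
Your proposal is correct and follows the paper's proof essentially step for step: you exhibit $f=\alpha_0 h+(1-\alpha_0)p$ with $\alpha_0=(1-\lambda)/(1-\mu)$ for the lower bound, show that any decomposition with $\alpha>\alpha_0$ leaves a residual $g$ with a negative $q$-coefficient (contradicting joint irreducibility), and obtain the second pair by the same argument. The one difference is your explicit treatment of $\alpha=1$, which you rule out because joint irreducibility forces $p\neq q$; the paper skips this case when it divides by $1-\alpha$.
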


\begin{proof}
We have the decomposition
\begin{align*}
 f
  =
  \frac{1-\lambda}{1-\mu}h
  +
 \left(1-\frac{1-\lambda}{1-\mu}\right)p,
\end{align*}
so $\kappa(f\mid h)\ge (1-\lambda)/(1-\mu)$.
Conversely, if
\begin{align*}
  f=\alpha h+(1-\alpha)g
 \qquad\text{for some }\alpha\in[0,1],\ g\in\mathcal P, 
\end{align*}
then
\begin{align*}
 g  =  \frac{\lambda-\alpha\mu}{1-\alpha}p  +  \frac{(1-\lambda)-\alpha(1-\mu)}{1-\alpha}q.
\end{align*}
The coefficients on the right-hand side sum to one.
Since $g\in\mathcal P$ and $\{p,q\}$ is jointly irreducible, both coefficients
must be nonnegative.
In particular,
\begin{align*}
  (1-\lambda)-\alpha(1-\mu)\ge 0, 
\end{align*}
which implies $\alpha\le (1-\lambda)/(1-\mu)$.
Hence
\begin{align*}
  \kappa(f\mid h)=\frac{1-\lambda}{1-\mu}. 
\end{align*}
Substituting this value into the above display gives
\begin{align*}
 f-\kappa(f\mid h)h  = \left(1-\kappa(f\mid h)\right)p, 
\end{align*}
and therefore
\begin{align*}
  p=\frac{f-\kappa(f\mid h)h}{1-\kappa(f\mid h)}. 
\end{align*}
The formulas for $\kappa(h\mid f)$ and $q$ are obtained by the same argument,
now writing
\begin{align*}
 h
  =
  \frac{\mu}{\lambda}f
  +
  \left(1-\frac{\mu}{\lambda}\right)q. 
\end{align*}
\end{proof}

The previous lemma reduces the unresolved two-component problem to a
one-dimensional residual demixing step.

\begin{proposition}
Assume $L=m$, that $\Theta\in[0,1]^{m\times m}$ is invertible, and that after
reindexing the first-stage outputs correspond to the recovered block
$\mathcal I=[m-2]$.
Let $\mathcal J=\{m-1,m\}$.
Suppose that the assumptions of
Theorem~\ref{thm:recover-known-columns} hold for the recovered block and that
$\{p_{m-1},p_m\}$ is jointly irreducible.
Then the following statements hold.
\begin{enumerate}
  \item[(i)] For every $\ell\in[m]$ with $c_\ell>0$,
  \[
    V_\ell\in\operatorname{conv}\{p_{m-1},p_m\}.
  \]
  \item[(ii)] There exist $a,b\in[m]$ such that $c_a>0$, $c_b>0$, and
  $V_a\neq V_b$.
  \item[(iii)] For any such pair, after relabeling if necessary so that
  $V_a=\lambda_a p_{m-1}+(1-\lambda_a)p_m$ and
  $V_b=\lambda_b p_{m-1}+(1-\lambda_b)p_m$ with $\lambda_a>\lambda_b$,
  \[
    p_{m-1}
    =
    \frac{V_a-\kappa(V_a\mid V_b)V_b}{1-\kappa(V_a\mid V_b)},
    \qquad
    p_m
    =
    \frac{V_b-\kappa(V_b\mid V_a)V_a}{1-\kappa(V_b\mid V_a)}.
  \]
  \item[(iv)] For each $\ell\in[m]$ with $c_\ell>0$,
  \[
    \theta_{\ell,m-1}
    =
    c_\ell\,\kappa(V_\ell\mid p_{m-1}),
    \qquad
    \theta_{\ell,m}
    =
    c_\ell-\theta_{\ell,m-1}.
  \]
  Consequently, the two remaining components and the last two columns of
  $\Theta$ are identifiable.
\end{enumerate}
\end{proposition}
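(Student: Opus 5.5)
The plan is to prove the four items in order, using the population identities already available for the recovered block and Lemma~\ref{lem:two-point-residue} for the unresolved pair.

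For (i), Theorem~\ref{thm:recover-known-columns} identifies $\theta_{\ell i}$ for $i\in[m-2]$, so $c_\ell$ and $V_\ell$ are well defined from identifiable objects. Expanding $U_\ell=\sum_i\theta_{\ell i}p_i$ gives
\[
V_\ell=\lambda_\ell p_{m-1}+(1-\lambda_\ell)p_m,
\qquad \lambda_\ell\in[0,1],
\]
because $\theta_{\ell,m-1},\theta_{\ell m}\ge0$ and they sum to $c_\ell$. Hence $V_\ell\in\operatorname{conv}\{p_{m-1},p_m\}$.

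For (ii), I would use the invertibility of $\Theta$. Since the last two columns of $\Theta$ are linearly independent, some row has $\theta_{\ell,m-1}>0$, so some $c_\ell>0$.
\begin{itemize}
\item Suppose all $\ell$ with $c_\ell>0$ shared a common $\lambda_\ell=\lambda$. Then $\theta_{\ell,m-1}=\lambda c_\ell$ and $\theta_{\ell m}=(1-\lambda)c_\ell$ for every $\ell$. This also holds trivially when $c_\ell=0$, since then both entries vanish by nonnegativity.
\item In that case the last two columns are both multiples of $(c_\ell)_\ell$, hence linearly dependent, contradicting invertibility.
\item So there exist $a,b$ with $c_a,c_b>0$ and $\lambda_a\ne\lambda_b$.
\item Joint irreducibility implies affine independence of $\{p_{m-1},p_m\}$, i.e.\ $p_{m-1}\neq p_m$. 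Therefore $\lambda_a\neq\lambda_b$ gives $V_a\ne V_b$.
\end{itemize}

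For (iii), apply Lemma~\ref{lem:two-point-residue} with $p=p_{m-1}$, $q=p_m$, $\lambda=\lambda_a$, $\mu=\lambda_b$, $f=V_a$, $h=V_b$. This gives the two formulas directly. We need $1-\kappa(V_a\mid V_b)>0$ and $1-\kappa(V_b\mid V_a)>0$; both hold since $\kappa=(1-\lambda_a)/(1-\lambda_b)<1$ and $\kappa=\lambda_b/\lambda_a<1$. The case $\lambda_b=1$ cannot occur because $\lambda_b<\lambda_a\le1$.

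For (iv), I would show $\kappa(V_\ell\mid p_{m-1})=\lambda_\ell$ by the argument of Theorem~\ref{thm:recover-known-columns}. The decomposition $V_\ell=\lambda_\ell p_{m-1}+(1-\lambda_\ell)p_m$ gives $\kappa\ge\lambda_\ell$. Conversely, suppose $V_\ell=\alpha p_{m-1}+(1-\alpha)g$ with $g\in\mathcal P$ and $\alpha>\lambda_\ell$. Then
\[
g=\frac{\lambda_\ell-\alpha}{1-\alpha}\,p_{m-1}+\frac{1-\lambda_\ell}{1-\alpha}\,p_m
\]
is an affine combination with a negative coefficient on $p_{m-1}$. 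This contradicts joint irreducibility. When $\alpha=1$ we get $V_\ell=p_{m-1}$, which forces $\lambda_\ell=1$ by affine independence, again a contradiction.

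Multiplying by $c_\ell$ yields $\theta_{\ell,m-1}=c_\ell\,\kappa(V_\ell\mid p_{m-1})$, and the row-sum identity gives $\theta_{\ell m}=c_\ell-\theta_{\ell,m-1}$. Rows with $c_\ell=0$ have both entries equal to zero.

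Every quantity used is determined by the $U_\ell$, the identified $p_i$ for $i\le m-2$, and $\Theta_{\mathcal I}$. Hence $p_{m-1}$, $p_m$ and the last two columns of $\Theta$ are identifiable.

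The main obstacle is (ii): producing two distinct residual points. This is the only place invertibility of $\Theta$ enters, and the zero-$c_\ell$ rows must be handled in the rank argument.
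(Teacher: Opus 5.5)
Your proposal is correct and follows essentially the same route as the paper: (i) by expanding the definition of $V_\ell$, (ii) by the proportional-columns contradiction with the invertibility of $\Theta$, (iii) by Lemma~\ref{lem:two-point-residue}, and (iv) by the mixture-proportion argument of Theorem~\ref{thm:recover-known-columns}. The differences are cosmetic. You inline that argument using joint irreducibility directly, whereas the paper first reduces it to one-sided irreducibility and then cites the theorem. You also make explicit several edge cases the paper leaves implicit: that some $c_\ell>0$ exists, that $p_{m-1}\neq p_m$, and that the denominators in (iii) are nonzero.
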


\begin{proof}
Part (i) follows directly from the definition of $V_\ell$.
For part (ii), write
\[
  V_\ell
  =
  \lambda_\ell p_{m-1}+(1-\lambda_\ell)p_m,
  \qquad
  \lambda_\ell=\frac{\theta_{\ell,m-1}}{c_\ell}.
\]
If all $V_\ell$ with $c_\ell>0$ were equal, then all $\lambda_\ell$ would be the
same value $\lambda\in[0,1]$.
Hence
\[
  \theta_{\ell,m-1}=\lambda c_\ell,
  \qquad
  \theta_{\ell,m}=(1-\lambda)c_\ell
\]
for all $\ell$, so the two columns of $\Theta_{\mathcal J}$ would be
proportional.
This contradicts the invertibility of $\Theta$.
Thus there exist $a,b$ with $V_a\neq V_b$.
Part (iii) is exactly Lemma~\ref{lem:two-point-residue} applied to the pair
$(V_a,V_b)$.
For part (iv), if $c_\ell>0$ then
\[
  V_\ell
  =
  \lambda_\ell p_{m-1}+(1-\lambda_\ell)p_m.
\]
For two distributions, joint irreducibility implies one-sided irreducibility in
both directions.
Therefore,
Theorem~\ref{thm:recover-known-columns} applied to the two-component mixture
$V_\ell$ yields
\[
  \lambda_\ell=\kappa(V_\ell\mid p_{m-1}).
\]
Multiplying by $c_\ell$ gives the claimed formula for
$\theta_{\ell,m-1}$, and the identity for $\theta_{\ell,m}$ follows from
$\theta_{\ell,m-1}+\theta_{\ell,m}=c_\ell$.
If $c_\ell=0$, then automatically
$\theta_{\ell,m-1}=\theta_{\ell,m}=0$.
This proves the claim.
\end{proof}

The proposition suggests the following empirical completion rule.
Any two distinct normalized residual mixtures would suffice at the population
level. Empirically, we first form the normalized residuals $\widehat V_\ell$ for
all rows with positive residual mass, and then choose the farthest pair only
among rows whose estimated residual mass exceeds the threshold $\eta$.
This avoids amplifying noise when the normalization denominator is too small.

\begin{algorithm}[!t]
\caption{Completion when exactly two components remain}
\label{alg:completion-two-missing}
\begin{algorithmic}[1]
\REQUIRE Reindexed first-stage outputs
$\{(\widehat{\bm r}_i,\widehat p_i)\}_{i=1}^{m-2}$,
empirical mixtures $\{\widehat U_\ell\}_{\ell=1}^m$,
an estimator $\widehat\kappa(f\mid h)$ of the mixture proportion
$\kappa(f\mid h)$, and a threshold $\eta\ge 0$.
\ENSURE An estimate $\widehat\Theta\in[0,1]^{m\times m}$ and component estimates
$\{\widehat p_i\}_{i=1}^m$.

\STATE Estimate the first $m-2$ columns by
$\widetilde\theta_{\ell i}:=\widehat\kappa(\widehat U_\ell\mid \widehat p_i)$,
$\ell\in[m]$, $i\in[m-2]$.

\STATE Define the residual masses
$\widetilde c_\ell:=1-\sum_{i=1}^{m-2}\widetilde\theta_{\ell i}$,
and whenever $\widetilde c_\ell>0$ set
\[
  \widehat V_\ell
  :=
  \frac{\widehat U_\ell-\sum_{i=1}^{m-2}\widetilde\theta_{\ell i}\widehat p_i}
       {\widetilde c_\ell}.
\]

\STATE Choose
\[
  (a,b)
  \in
  \arg\max_{\substack{\ell,\ell'\in[m]\\
                       \widetilde c_\ell>\eta,\ \widetilde c_{\ell'}>\eta}}
  \bigl\|\phi(\widehat V_\ell)-\phi(\widehat V_{\ell'})\bigr\|_{\mathcal H_k}.
\]

\STATE Set
\[
  \widehat\alpha:=\widehat\kappa(\widehat V_a\mid \widehat V_b),
  \qquad
  \widehat\beta:=\widehat\kappa(\widehat V_b\mid \widehat V_a),
\]
and recover the remaining two components by
\[
  \widehat p_{m-1}
  :=
  \frac{\widehat V_a-\widehat\alpha\,\widehat V_b}{1-\widehat\alpha},
  \qquad
  \widehat p_m
  :=
  \frac{\widehat V_b-\widehat\beta\,\widehat V_a}{1-\widehat\beta}.
\]

\STATE For each $\ell\in[m]$ with $\widetilde c_\ell>0$, define
\[
  \widetilde\theta_{\ell,m-1}
  :=
  \widetilde c_\ell\,\widehat\kappa(\widehat V_\ell\mid \widehat p_{m-1}),
  \qquad
  \widetilde\theta_{\ell,m}
  :=
  \widetilde c_\ell-\widetilde\theta_{\ell,m-1}.
\]
If $\widetilde c_\ell\le 0$, set
$\widetilde\theta_{\ell,m-1}=\widetilde\theta_{\ell,m}:=0$.

\STATE If needed, project each row
$\widetilde{\bm\theta}_\ell
=(\widetilde\theta_{\ell1},\dots,\widetilde\theta_{\ell m})^T$
onto the simplex $\Delta_1^m$, and denote the resulting matrix by
$\widehat\Theta=(\widehat\theta_{\ell i})$.

\STATE \textbf{return}
$\widehat\Theta$ and
$\{\widehat p_1,\dots,\widehat p_{m-2},\widehat p_{m-1},\widehat p_m\}$.
\end{algorithmic}
\end{algorithm}

As in Algorithm~\ref{alg:completion-one-missing}, we keep the presentation simple
and do not add a separate failure case.
The threshold $\eta$ is used only in Step~3 as a numerical stabilization
device: if $\widetilde c_\ell$ is very small, then the normalized residual
$\widehat V_\ell$ can be unstable because it divides by $\widetilde c_\ell$.
At the population level, one may simply take $\eta=0$.


\section{Proof of Theorems in Section~\ref{sec:mmd-estimation}}
\label{app:mmd-estimation-proofs}

\subsection{Proof of Theorem~\ref{thm:pmmd-uniform-bound}}
\begin{proof}
[Proof of Theorem~\ref{thm:pmmd-uniform-bound}]
Write each observation from the bivariate marginal of $U_\ell$ as
$\bm x_{\ell i}=(X_{\ell i},Y_{\ell i})\in\mathcal X_1\times\mathcal X_2$, and set
\[
  \psi(x,y):=k((x,y),\cdot)\in\mathcal H_k,
  \qquad
  \|\psi(x,y)\|_{\mathcal H_k}\le K.
\]
Let
\begin{align*}
  \mu_\ell
  &:=\mathbb E\psi(X_\ell,Y_\ell),
  &
  \widehat\mu_\ell
  &:=\frac1{n_\ell}\sum_{i=1}^{n_\ell}\psi(X_{\ell i},Y_{\ell i}),\\
  \nu_{\ell\ell'}
  &:=\mathbb E\psi(X_\ell,\widetilde Y_{\ell'}),
  &
  \widehat\nu_{\ell\ell'}
  &:=\frac1{n_\ell n_{\ell'}}
    \sum_{i=1}^{n_\ell}\sum_{j=1}^{n_{\ell'}}
    \psi(X_{\ell i},Y_{\ell' j}),
\end{align*}
where $X_\ell$ has the first marginal of $U_\ell$, $\widetilde Y_{\ell'}$ has the
second marginal of $U_{\ell'}$, and $X_\ell$ and $\widetilde Y_{\ell'}$ are independent.
For $\ell=\ell'$, this convention means that $\nu_{\ell\ell}$ is the embedding of
the product of the two one-dimensional marginals, not the embedding of the joint
law of $(X_\ell,Y_\ell)$.

For any $\bm r\in\Delta_{\bar r}$,
\begin{align*}
 \mathrm{MMD}(\bm r^T\U)
 =\Bigl\|\sum_{\ell}r_\ell\mu_\ell
          -\sum_{\ell,\ell'}r_\ell r_{\ell'}\nu_{\ell\ell'}\Bigr\|_{\mathcal H_k},\quad
 \mathrm{MMD}(\bm r^T\widehat\U)
 =\Bigl\|\sum_{\ell}r_\ell\widehat\mu_\ell
          -\sum_{\ell,\ell'}r_\ell r_{\ell'}\widehat\nu_{\ell\ell'}\Bigr\|_{\mathcal H_k}.
\end{align*}
Hence, by the reverse triangle inequality,
\begin{align}
\label{eq:uniform-bound-basic-decomp}
 \left|\mathrm{MMD}(\bm r^T\U)-\mathrm{MMD}(\bm r^T\widehat\U)\right|
 &\le
 \sum_\ell |r_\ell|\,\|\widehat\mu_\ell-\mu_\ell\|_{\mathcal H_k}
 +
 \sum_{\ell,\ell'} |r_\ell r_{\ell'}|\,
 \|\widehat\nu_{\ell\ell'}-\nu_{\ell\ell'}\|_{\mathcal H_k}.
\end{align}
We now control the two types of empirical embeddings.

First, for the ordinary empirical embedding, changing one observation in
$\widehat\mu_\ell$ changes $\|\widehat\mu_\ell-\mu_\ell\|_{\mathcal H_k}$ by at most
$2K/n_\ell$. Moreover,
\begin{align*}
 \mathbb E\|\widehat\mu_\ell-\mu_\ell\|_{\mathcal H_k}
 \le
 \left(\mathbb E\|\widehat\mu_\ell-\mu_\ell\|_{\mathcal H_k}^2\right)^{1/2}
 \le \frac{K}{\sqrt{n_\ell}}.
\end{align*}
McDiarmid's inequality therefore gives, for any $a>0$,
\begin{align}
\label{eq:mu-embedding-bound}
 \|\widehat\mu_\ell-\mu_\ell\|_{\mathcal H_k}
 \le
 \frac{2K(1+\sqrt a)}{\sqrt{n_\ell}}
\end{align}
with probability at least $1-e^{-a}$.

Second, consider the product empirical embedding. If $\ell\ne\ell'$, then
$\mathbb E\widehat\nu_{\ell\ell'}=\nu_{\ell\ell'}$. Replacing one observation from
sample $\ell$ changes $\widehat\nu_{\ell\ell'}$ by at most $2K/n_\ell$, and replacing
one observation from sample $\ell'$ changes it by at most $2K/n_{\ell'}$.
The Hilbert-space Efron--Stein 
inequality~\cite{pinelis1994optimum,boucheron2013concentration} yields
\begin{align*}
 \mathbb E\|\widehat\nu_{\ell\ell'}-\nu_{\ell\ell'}\|_{\mathcal H_k}
 \le
 \left\{2K^2\left(\frac1{n_\ell}+\frac1{n_{\ell'}}\right)\right\}^{1/2}
 \le
 2K\left(\frac1{\sqrt{n_\ell}}+\frac1{\sqrt{n_{\ell'}}}\right).
\end{align*}
Together with McDiarmid's inequality, this implies
\begin{align}
\label{eq:nu-offdiag-bound}
 \|\widehat\nu_{\ell\ell'}-\nu_{\ell\ell'}\|_{\mathcal H_k}
 \le
 2K(1+\sqrt a)
 \left(\frac1{\sqrt{n_\ell}}+\frac1{\sqrt{n_{\ell'}}}\right)
\end{align}
with probability at least $1-e^{-a}$.

If $\ell=\ell'$, the all-pairs estimator contains diagonal terms. In this case
\begin{align*}
 \mathbb E\widehat\nu_{\ell\ell}
 =
 \left(1-\frac1{n_\ell}\right)\nu_{\ell\ell}
  +\frac1{n_\ell}\mu_\ell,
\end{align*}
and hence
\begin{align*}
 \|\mathbb E\widehat\nu_{\ell\ell}-\nu_{\ell\ell}\|_{\mathcal H_k}
 \le \frac{2K}{n_\ell}.
\end{align*}
Replacing one observation changes $\widehat\nu_{\ell\ell}$ by at most $4K/n_\ell$.
Thus the Hilbert-space Efron--Stein inequality gives
\begin{align*}
 \mathbb E\|\widehat\nu_{\ell\ell}-\mathbb E\widehat\nu_{\ell\ell}\|_{\mathcal H_k}
 \le \frac{2\sqrt2K}{\sqrt{n_\ell}},
\end{align*}
and McDiarmid's inequality gives
\begin{align*}
 \|\widehat\nu_{\ell\ell}-\nu_{\ell\ell}\|_{\mathcal H_k}
 \le
 \frac{8K(1+\sqrt a)}{\sqrt{n_\ell}}
\end{align*}
with probability at least $1-e^{-a}$. Combining this diagonal bound with
\eqref{eq:nu-offdiag-bound}, we have the following uniform bound for all ordered
pairs $\ell,\ell'$:
\begin{align}
\label{eq:nu-all-bound}
 \|\widehat\nu_{\ell\ell'}-\nu_{\ell\ell'}\|_{\mathcal H_k}
 \le
 4K(1+\sqrt a)
 \left(\frac1{\sqrt{n_\ell}}+\frac1{\sqrt{n_{\ell'}}}\right).
\end{align}

Set $a=\log(2L^2/\delta)$.
By the union bound, \eqref{eq:mu-embedding-bound} holds for all $\ell\in[L]$ and
\eqref{eq:nu-all-bound} holds for all ordered pairs $(\ell,\ell')\in[L]^2$ with
probability at least $1-\delta$.
On this event, \eqref{eq:uniform-bound-basic-decomp} implies, uniformly over
$\bm r\in\Delta_{\bar r}$,
\begin{align*}
 \left|\mathrm{MMD}(\bm r^T\U)-\mathrm{MMD}(\bm r^T\widehat\U)\right|
 &\le
 2K(1+\sqrt a)\sum_\ell |r_\ell|n_\ell^{-1/2} \\
 &\quad +
 4K(1+\sqrt a)
 \sum_{\ell,\ell'} |r_\ell r_{\ell'}|
 \left(n_\ell^{-1/2}+n_{\ell'}^{-1/2}\right)\\
 &=
 2K(1+\sqrt a)\sum_\ell |r_\ell|n_\ell^{-1/2}
 +8K(1+\sqrt a)\|\bm r\|_1
       \sum_\ell |r_\ell|n_\ell^{-1/2}.
\end{align*}
Since $\|\bm r\|_1\le\bar r$, $\bar r\ge1$, and
$\sum_\ell |r_\ell|n_\ell^{-1/2}\le \bar r N_{1/2}$, the right-hand side is bounded by
\begin{align*}
 10K\bar r^2(1+\sqrt a)N_{1/2}.
\end{align*}
This proves the theorem.
\end{proof}

\subsection{Proof of Theorem~\ref{thm:population-level-set}}
\begin{proof}
[Proof of Theorem~\ref{thm:population-level-set}]
 The assumption on the Hessian matrix ensures that
 there exist $\lambda_i>0$ and a neighborhood $N_i$ of $\bm{r}_i$ such that
\begin{align*}
   d(\bm{r}) \geq \lambda_i \|\bm{r}-\bm{r}_i\|^2  \quad\text{for all } \bm{r}\in N_i\cap\Delta_{\bar{r}}. 
\end{align*}
 Hence, for $c\in(0,c_0)$ with sufficiently small $c_0>0$, we have 
 \begin{align*}
  \{\, \bm{r}\in\Delta_{\bar{r}} \mid d(\bm{r})\leq c \,\}\cap N_i
  \subset 
  B_i := \bigl\{\bm{r}\in\Delta_{\bar{r}} \mid \|\bm{r}-\bm{r}_i\| < \sqrt{c_0/\lambda_i}\bigr\}
  \subset N_i\cap \Delta_{\bar{r}}. 
 \end{align*}
 The sets $B_i$, $i\in R_{12}$, are pairwise disjoint. 
By the zero characterization stated before Theorem~\ref{thm:population-level-set}, 
there are no zeros of $d$ on 
$\Delta_{\bar r}\setminus\{\bm r_i\mid i\in R_{12}\}$. 
Hence, by the compactness of $\Delta_{\bar r}$, 
\[
  \zeta_0:=\min\bigl\{ d(\bm{r}) \mid \bm{r}\in
  \Delta_{\bar{r}}\setminus\cup_{i\in R_{12}} B_i\bigr\}
\]
is strictly positive.
 Then, for $c\in(0,\zeta_0)$ we have
 \[
   \{\, \bm{r}\in\Delta_{\bar{r}} \mid d(\bm{r})\leq c \,\}
   \subset \bigcup_{i\in R_{12}} B_i. 
 \]
Each set $\{\, \bm{r}\in\Delta_{\bar{r}} \mid d(\bm{r})\leq c \,\}\cap B_i$ is non-empty and convex, and hence
 connected, for sufficiently small~$c$.  
Hence, $\{\, \bm{r}\in\Delta_{\bar{r}} \mid d(\bm{r})\leq c \,\}$ consists of $|R_{12}|$ disjoint subsets. 
\end{proof}

\subsection{Proof of Theorem~\ref{thm:empirical-level-set}}
\begin{proof}
[Proof of Theorem~\ref{thm:empirical-level-set}]
 The Hessian matrix of $d(\bm{r})$ at ${\bm r}_i, i\in{R_{12}}$ is positive definite 
 on the tangent space of $\Delta_{\bar{r}}$. 
 Hence, at each zero $\bm{r}_i$
 we can choose mutually disjoint open balls $B_i$ centered at $\bm{r}_i$ 
 such that $d$ is strongly convex on $B_i \cap \Delta_{\bar{r}}$. 
 By continuity of the Hessian for quartic polynomial, 
 there exists $\epsilon_0' > 0$ such that 
 if a quartic polynomial $d_1$ satisfies $\|d - d_1\|_\infty \le \epsilon_0'$, 
 then $d_1$ is also strongly convex on $B_i \cap \Delta_{\bar{r}}$. 
 Here $\epsilon_0'$ is determined only by $d$.

 By the zero characterization stated before Theorem~\ref{thm:population-level-set}, 
there are no zeros of $d$ on 
$\Delta_{\bar r}\setminus\{\bm r_i\mid i\in R_{12}\}$. 
Define
\begin{align*}
  \zeta_0 := \min_{\bm{r}\in\Delta_{\bar{r}} \setminus \cup_{i\in R_{12}} B_i} d(\bm{r}) > 0, 
  \qquad 
  c_0 := \tfrac{\zeta_0}{2}, 
  \qquad 
  \epsilon_0 := \tfrac{\zeta_0}{8} \wedge \epsilon_0'. 
\end{align*}
 Note that $c_0$ and $\epsilon_0$ are determined from the function $d$. 
 Assume $\|d - \widehat{d}\|_\infty \le \epsilon \le \epsilon_0$ and $2\epsilon \le c \le c_0$. 
 Then for any $\bm{r} \in \Delta_{\bar{r}} \setminus \cup_{i\in R_{12}} B_i$,
 \begin{align*}
  \widehat{d}(\bm{r}) \ge d(\bm{r}) - \epsilon 
  \ge \zeta_0 - \epsilon 
  \ge \tfrac{7}{8}\zeta_0 = \tfrac{7}{4}c_0 > c, 
 \end{align*}
 hence $\{\bm{r}\in\Delta_{\bar{r}} \mid \widehat{d}(\bm{r}) \le c\} \subset \cup_{i\in R_{12}} B_i$. 

 Moreover, since $d(\bm{r}_i) = 0$, we have $\widehat{d}(\bm{r}_i) \le \epsilon < c$ 
 for every $i$, so each set
 \[
   A_i := \{\bm{r}\in\Delta_{\bar{r}} \mid \widehat{d}(\bm{r}) \le c\} \cap B_i 
 \]
 is nonempty. In particular, $\bm{r}_i \in A_i$. 
 Because the $B_i$ are disjoint, the $A_i$ are also pairwise disjoint, and
 \[
   \{\bm{r}\in\Delta_{\bar{r}} \mid \widehat{d}(\bm{r}) \le c\}
   = \bigcup_{i \in R_{12}} A_i .
 \]
 Since $\widehat{d}$ is strongly convex on each $A_i \subset B_i$, 
 the set $\{\bm{r}\in\Delta_{\bar{r}} \mid \widehat{d}(\bm{r}) \le c\} \cap B_i$ is convex and therefore connected. 
 Hence $\{\bm{r}\in\Delta_{\bar{r}} \mid \widehat{d}(\bm{r}) \le c\}$ consists of $|R_{12}|$ disjoint subsets, 
 and each subset contains exactly one of the points $\bm{r}_i$, $i\in R_{12}$. 
\end{proof}

\subsection{Proof of Theorem~\ref{thm:near-independence-stability}}
First, let us show the proof of Lemma~\ref{lem:near-independent-replacement}. 

\begin{proof}[Proof of Lemma~\ref{lem:near-independent-replacement}]
Since 
$\Pi_{\mathcal{E}}[\bm{r}^T\U]=\Pi_{\mathcal{E}}[\bm{r}^T\bar{\U}]$, 
for all $\bm{r}\in\Delta_{\bar{r}}$ we have 
\begin{align*}
 \mathrm{MMD}(\bm{r}^T\U) 
&\leq 
 \mathrm{MMD}(\bm{r}^T\bar{\U}) 
 +
 \bigl\|\phi(\bm{r}^T\U)-\phi(\bm{r}^T\bar{\U})\bigr\|_{\mathcal{H}_k} \\
&=
 \mathrm{MMD}(\bm{r}^T\bar{\U})
 +
 \Bigl\|\sum_{i\in R_{12}^{(\eta)}}\sum_\ell r_\ell\theta_{\ell i}
 \bigl(\phi(p_i)-\phi(\Pi_{\mathcal{E}}[p_i])\bigr)
 \Bigr\|_{\mathcal{H}_k}\\
 &\leq 
 \mathrm{MMD}(\bm{r}^T\bar{\U}) + \sum_{\ell\in[L]}|r_\ell| 
 \sum_{i\in R_{12}^{(\eta)}}\theta_{\ell i} K\eta\\
&\leq 
 \mathrm{MMD}(\bm{r}^T\bar{\U}) + K\bar{r}\eta. 
\end{align*}
In the same way, we obtain 
\begin{align*}
 \mathrm{MMD}(\bm{r}^T\U) \geq \mathrm{MMD}(\bm{r}^T\bar{\U}) -K\bar{r}\eta,
\end{align*}
and hence 
\begin{align*}
 \big|\mathrm{MMD}(\bm{r}^T\U) - \mathrm{MMD}(\bm{r}^T\bar{\U})\big|
 < K\bar{r}\eta
 \qquad \text{for all } \bm{r}\in \Delta_{\bar{r}}. 
\end{align*} 
\end{proof}

The proof of Theorem~\ref{thm:near-independence-stability} is shown below. 
\begin{proof}
[Proof of Theorem~\ref{thm:near-independence-stability}]

We first record a stability fact used in both parts.  Choose pairwise disjoint
neighborhoods $B_i$, $i\in R_{12}^{(\eta)}$, of $\bm r_i$, contained in
the neighborhoods in the Hessian assumption.  Since the restrictions of quartic
polynomials to the affine hull of $\Delta_{\bar r}$ form a finite-dimensional
space, there exists $\epsilon_0'>0$ such that every quartic polynomial $g$
with
\[
  \|g-\bar d\|_{\infty,\Delta_{\bar r}}\le \epsilon_0'
\]
is strongly convex on each $B_i\cap\Delta_{\bar r}$.  By the zero
characterization assumed for the modified exact model, $\bar d$ has zeros on
$\Delta_{\bar r}$ exactly at $\bm r_i$, $i\in R_{12}^{(\eta)}$.  Hence
\[
  \zeta_0
  :=
  \min_{\bm r\in\Delta_{\bar r}\setminus\cup_{i\in R_{12}^{(\eta)}}B_i}
  \bar d(\bm r)>0 .
\]
Take $c_0>0$ so small that $c_0\le \zeta_0/2$ and
$c_0/2\le\epsilon_0'$.

For part (a), put $\epsilon_\eta:=4K^2\bar r^3\eta$.  By
\eqref{eq:near-independence-loss-gap},
$\|d-\bar d\|_{\infty,\Delta_{\bar r}}<\epsilon_\eta$.  Since
$8K^2\bar r^3\eta\le c<c_0$, we have
$\epsilon_\eta\le c/2<c_0/2\le\epsilon_0'$.  Therefore $d$ is strongly
convex on each $B_i\cap\Delta_{\bar r}$.  Moreover,
\[
  d(\bm r_i)<c,
  \qquad
  d(\bm r)>c
  \quad
  \text{for }\bm r\in\Delta_{\bar r}\setminus\cup_{i\in R_{12}^{(\eta)}}B_i .
\]
Thus, with
\[
  A_i:=\{\bm r\in\Delta_{\bar r}:d(\bm r)\le c\}\cap B_i,
\]
the level set $\{\bm r\in\Delta_{\bar r}:d(\bm r)\le c\}$ is the disjoint
union of the nonempty connected sets $A_i$, and each $A_i$ contains exactly
one $\bm r_i$.  Let $\bm r_*$ be a local minimizer of $d$ on this level
set, and suppose $\bm r_*\in A_i$.  Since $d$ is convex on $A_i$,
$\bm r_*$ is the minimizer of $d$, equivalently of
$\mathrm{MMD}(\bm r^T\U)$, over $A_i$.
Then Lemma~\ref{lem:near-independent-replacement} leads to 
 \begin{align*}
  \mathrm{MMD}(\bm{r}_*^T\bar{\U})
  &\leq 
  \mathrm{MMD}(\bm{r}_*^T\U) + K\bar{r}\eta\\
  &\leq 
  \mathrm{MMD}(\bm{r}_i^T\U) + K\bar{r}\eta\\
  &\leq 
  \mathrm{MMD}(\bm{r}_i^T\bar{\U}) + 2K\bar{r}\eta =2K\bar{r}\eta. 
 \end{align*}
The assumption on the Hessian matrix ensures that 
\begin{align*}
 \frac{\lambda_i K^2}{2}\|\bm{r}_*-\bm{r}_i\|^2
 \leq 
 \bar{d}(\bm{r}_*)
 \leq 
 (2K\bar{r}\eta)^2 
 \quad \Longrightarrow\quad
 \|\bm{r}_*-\bm{r}_i\|\leq \frac{3\bar{r}\eta}{\sqrt{\lambda_i}}. 
\end{align*}
The discrepancy between $\bm{r}_*^T\U$ and $p_i=\bm{r}_i^T\U$ is bounded as 
\begin{align*}
 \big\|\phi(\bm{r}_*^T\U)-\phi(p_i)\big\|_{\mathcal{H}_k}
& \leq 
 \sum_\ell|r_{*,\ell}-r_{i,\ell}|\sum_{j}\theta_{\ell j}\big\|\phi(p_j)\big\|_{\mathcal{H}_k}\\
& \leq 
 \|\bm{r}_*-\bm{r}_i\|_1 K\\
&\leq  
 3K\bar{r}\eta\sqrt{\frac{m}{\lambda_i}}. 
\end{align*}
This proves (a).


For part (b), work on the event in the statement.  Since both
$\mathrm{MMD}(\bm r^T\U)$ and $\mathrm{MMD}(\bm r^T\widehat\U)$ are bounded
by $2K\bar r^2$ on $\Delta_{\bar r}$, the event implies
\[
  \|\widehat d-d\|_{\infty,\Delta_{\bar r}}
  \le 4K^2\bar r^2\zeta_n .
\]
Together with \eqref{eq:near-independence-loss-gap}, this gives
\[
  \|\widehat d-\bar d\|_{\infty,\Delta_{\bar r}}
  \le 4K^2\bar r^2(\bar r\eta+\zeta_n)=:\epsilon_{\eta,n}.
\]
Since $8K^2\bar r^2(\bar r\eta+\zeta_n)\le c'<c_0$, we have
$\epsilon_{\eta,n}\le c'/2<c_0/2\le\epsilon_0'$.  Hence the same argument as
in part (a) gives
\[
  \{\bm r\in\Delta_{\bar r}:\widehat d(\bm r)\le c'\}
  =\bigcup_{i\in R_{12}^{(\eta)}}\widehat A_i,
  \qquad
  \widehat A_i:=\{\bm r\in\Delta_{\bar r}:\widehat d(\bm r)\le c'\}\cap B_i,
\]
where the $\widehat A_i$ are nonempty, pairwise disjoint, and connected, and
each contains exactly one $\bm r_i$.  Moreover, $\widehat d$ is strongly
convex on each $B_i\cap\Delta_{\bar r}$.  Thus any local minimizer
$\widehat{\bm r}\in\widehat A_i$ is the minimizer of $\widehat d$,
equivalently of $\mathrm{MMD}(\bm r^T\widehat\U)$, over $\widehat A_i$.
Then, it holds that 
 \begin{align*}
  \mathrm{MMD}(\widehat{\bm{r}}^T\bar{\U})
  &\leq 
  \mathrm{MMD}(\widehat{\bm{r}}^T\U) + K\bar{r}\eta\\
  &\leq 
  \mathrm{MMD}(\widehat{\bm{r}}^T\widehat{\U}) + K\bar{r}\eta + K\zeta_n\\
  &\leq 
  \mathrm{MMD}(\bm{r}_i^T\widehat{\U}) + K\bar{r}\eta + K\zeta_n\\
  &\leq 
  \mathrm{MMD}(\bm{r}_i^T\U) + K\bar{r}\eta + 2K\zeta_n\\
  &\leq 
  \mathrm{MMD}(\bm{r}_i^T\bar{\U}) + 2K\bar{r}\eta + 2K\zeta_n
  =
  2K\bar{r}\eta + 2K\zeta_n
 \end{align*}
 with high probability. 
 The assumption on the Hessian matrix ensures that 
\begin{align*}
 \|\widehat{\bm{r}}-\bm{r}_i\|
 \leq \frac{3(\bar{r}\eta+\zeta_n)}{\sqrt{\lambda_i}}. 
\end{align*}
The discrepancy between $\widehat{\bm{r}}^T\U$ and $p_i=\bm{r}_i^T\U$ is bounded as 
\begin{align*}
 \big\|\phi(\widehat{\bm{r}}^T\U)-\phi(p_i)\big\|_{\mathcal{H}_k}
& \leq 
 \sum_\ell|\widehat{r}_{\ell}-r_{i,\ell}|\sum_{j}\theta_{\ell j}\big\|\phi(p_j)\big\|_{\mathcal{H}_k}\\
& \leq 
 \|\widehat{\bm{r}}-\bm{r}_i\|_1 K\\
&\leq 
 3K(\bar{r}\eta+\zeta_n)\sqrt{\frac{m}{\lambda_i}}. 
\end{align*}
This proves (b).
\end{proof}

\section{Additional numerical results}
\label{app:additional-numerical-results}

The main text reports a compact selection of experiments.  This appendix keeps
all detailed numerical results that are useful for reproducibility and for
checking failure modes, but are less central to the top-level empirical message.

\subsection{Controlled synthetic mixture models}
\label{app:controlled-synthetic-results}

We evaluate PM-MMD on controlled synthetic mixture models.  The purpose is to
separate identifiability issues from numerical and finite-sample effects.  The
Gaussian experiments specify the marginal-independence structure explicitly
through the covariance matrices, while the controlled non-Gaussian design checks
whether the method relies on marginal-independence structure rather than
Gaussian cluster geometry.

In all controlled synthetic experiments, the observed distributions are mixtures
\begin{align*}
  U_\ell=\sum_{i=1}^m \theta_{\ell i}p_i .
\end{align*}
Candidate weight vectors are generated by constrained numerical minimization of
the empirical pairwise MMD objective on the training split, under the constraints
\begin{align*}
  \sum_\ell r_\ell=1,  \qquad  \|\bm r\|_1\le \bar{r}. 
\end{align*}
Representatives are selected on an independent validation split using the
representative-selection step in Algorithm~\ref{alg:pmmd}.  
Unless otherwise stated, we use an RBF product kernel on the selected coordinate
pair after coordinate-wise median/MAD scaling; the RBF bandwidths are chosen by
a median-distance heuristic on the scaled training samples~\cite{garreau2017large}.
In the synthetic experiments, recovery
is evaluated through the diagnostic vectors $\bm r^T\Theta$: successful recovery
corresponds to these vectors being close to simplex vertices.  The exact
Gaussian parameters used in the GMM experiments are listed in
Appendix~\ref{app:gmm-designs}.

We first consider a symmetric Gaussian design in which the assumptions of
Theorem~\ref{thm:marginal-independence-identifiability} and
Corollary~\ref{cor:full-recovery-marginal-independence} are intentionally violated.
In this design, all components have zero mean and unit marginal variances, and
each component has one independent coordinate pair.  However, for each target
pair, the two non-target components have identical bivariate Gaussian
marginals.  For example, for the pair $(1,2)$, the non-target components
$p_2$ and $p_3$ have the same $(1,2)$-marginal.  Hence, for any nonzero scalar
$a$, $\e_1+a(\e_2-\e_3)$ 
defines an affine component-weight vector whose $(1,2)$-marginal coincides 
with that of $p_1$.  This gives an independent affine combination with
nonzero coefficients outside $R_{12}$, directly violating
condition~\eqref{eq:no-cancellation} in
Theorem~\ref{thm:marginal-independence-identifiability}.  The subset-rank
condition is vacuous in this design because every target set $R_{st}$ is a
singleton.  The resulting
signed cancellation is a population-level non-identifiability and is therefore
not removed by validation.  This experiment confirms that
condition~\eqref{eq:no-cancellation} is not merely technical.

We then consider two asymmetric GMM designs constructed to avoid the above
signed cancellation.  The non-target bivariate Gaussian marginals are made
distinct for every diagnostic pair, and the univariate Gaussian marginals are
also separated across components.  These choices are intended to match the
pairwise no-cancellation condition in
Theorem~\ref{thm:marginal-independence-identifiability}.  In the multi-component
design, the separated target marginals also satisfy the subset-rank condition
in Corollary~\ref{cor:full-recovery-marginal-independence}; in the singleton
design, that condition is vacuous.

The first asymmetric design is a singleton case with $d=m=3$:
\[
  R_{12}=\{1\},
  \qquad
  R_{13}=\{2\},
  \qquad
  R_{23}=\{3\}.
\]
The second is a multi-component case with $d=m=4$:
\[
  R_{12}=\{1,2\},
  \qquad
  R_{13}=\{3,4\},
  \qquad
  R_{23}=\{1,3\},
  \qquad
  R_{24}=\{2,4\}.
\]
In the latter case, the pairwise step intentionally retains several
validation-good and mutually separated representatives from each coordinate
pair, and the global aggregation step selects four global representatives.
We use the non-oracle selection rule in the $\bm r$-space.

\begin{table}[!tbp]
\centering
\small
\caption{Summary of the controlled GMM experiments.  Errors are reported as
mean $\pm$ standard error over ten random seeds.  ``Rec.'' denotes vertex
recovery: singleton recovery requires nearest-vertex agreement for all
diagnostic pairs, and multi-component recovery requires global recovery of all
components.  ``Max vtx. dist.'' denotes the mean maximum nearest-vertex
distance.  Pairwise vertex coverage $R_e\subseteq\widehat R_e$ was $10/10$ for
every listed sample size.}
\label{tab:gmm-summary}
\begin{tabular}{llccc}
\hline
Design & $n$ & Rec. & Relative error & Max vtx. dist. \\
\hline
$|R_e|=1$ & 2000  & $10/10$ & $0.0925 \pm 0.0110$ & $0.1263 \pm 0.0099$ \\
$|R_e|=1$ & 5000  & $10/10$ & $0.0636 \pm 0.0075$ & $0.0942 \pm 0.0095$ \\
$|R_e|=1$ & 10000 & $10/10$ & $0.0487 \pm 0.0054$ & $0.0684 \pm 0.0072$ \\
$|R_e|=1$ & 20000 & $10/10$ & $0.0323 \pm 0.0033$ & $0.0487 \pm 0.0038$ \\
\hline
$|R_e|=2$ & 2000  & $4/10$  & $0.3864 \pm 0.0280$ & $0.5882 \pm 0.0243$ \\
$|R_e|=2$ & 5000  & $7/10$  & $0.2390 \pm 0.0701$ & $0.2865 \pm 0.0707$ \\
$|R_e|=2$ & 10000 & $10/10$ & $0.0594 \pm 0.0089$ & $0.0983 \pm 0.0159$ \\
$|R_e|=2$ & 20000 & $10/10$ & $0.0380 \pm 0.0041$ & $0.0543 \pm 0.0050$ \\
\hline
\end{tabular}
\end{table}

Table~\ref{tab:gmm-summary} summarizes the results.  In the singleton case, the method recovered the correct component vertex for every coordinate pair in all runs, even at $n=2000$.  Both the relative Frobenius error of 
the estimated mixing matrix and the maximum distance of the selected diagnostic
vectors $\bm r^T\Theta$ to their target vertices decrease with $n$.
Figure~\ref{fig:gmm-singleton-simplex} overlays the selected diagnostic vectors from ten independent runs at $n=20000$.  The points concentrate near the corresponding simplex vertices, illustrating the stability of the singleton recovery.

The multi-component case is more demanding.  Pairwise coverage was successful for every pair, every seed, and all tested sample sizes, meaning that the pairwise representative sets always contained the true target components.
However, the global aggregation step is more sensitive to sample size.  At $n=2000$ and $n=5000$, all four components were recovered in only $4$ out of $10$ and $7$ out of $10$ seeds, respectively.  From $n=10000$ onward, global recovery was successful in all seeds, and the relative Frobenius error continued to decrease at $n=20000$.

\begin{figure}[!t] 
\centering
\includegraphics[width=0.4\linewidth]{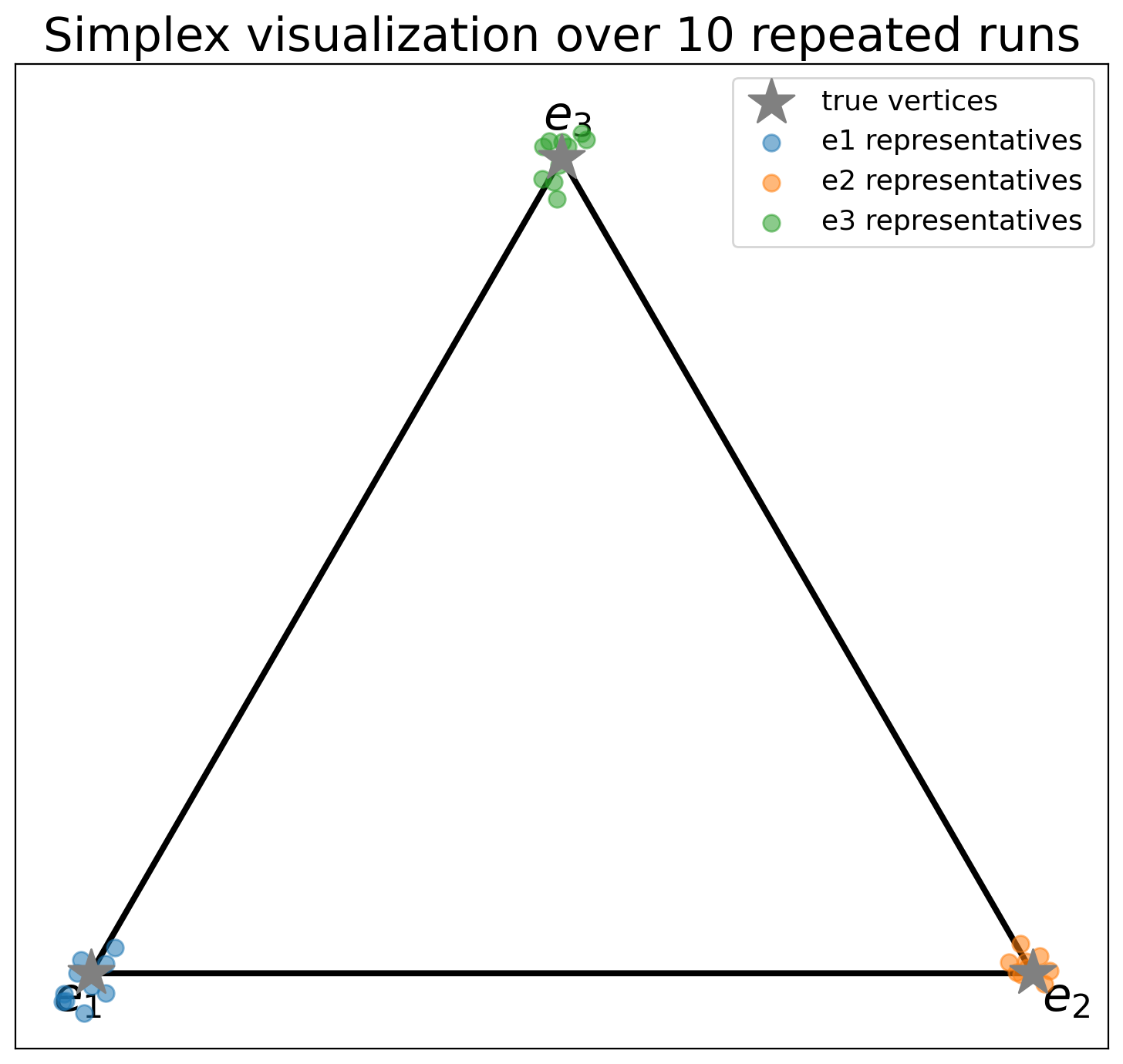}
\caption{Simplex visualization of the singleton GMM experiment over ten
independent runs at $n=20000$.  Each point represents a selected diagnostic
vector $\bm r^T\Theta$ from one coordinate pair and one random seed.  The gray
stars indicate the true simplex vertices.  The concentration of the selected
points near the corresponding vertices indicates stable component recovery.}
\label{fig:gmm-singleton-simplex}
\end{figure}

\begin{figure}[!t]
\centering
\begin{minipage}{0.49\linewidth}
\centering
\includegraphics[width=\linewidth]{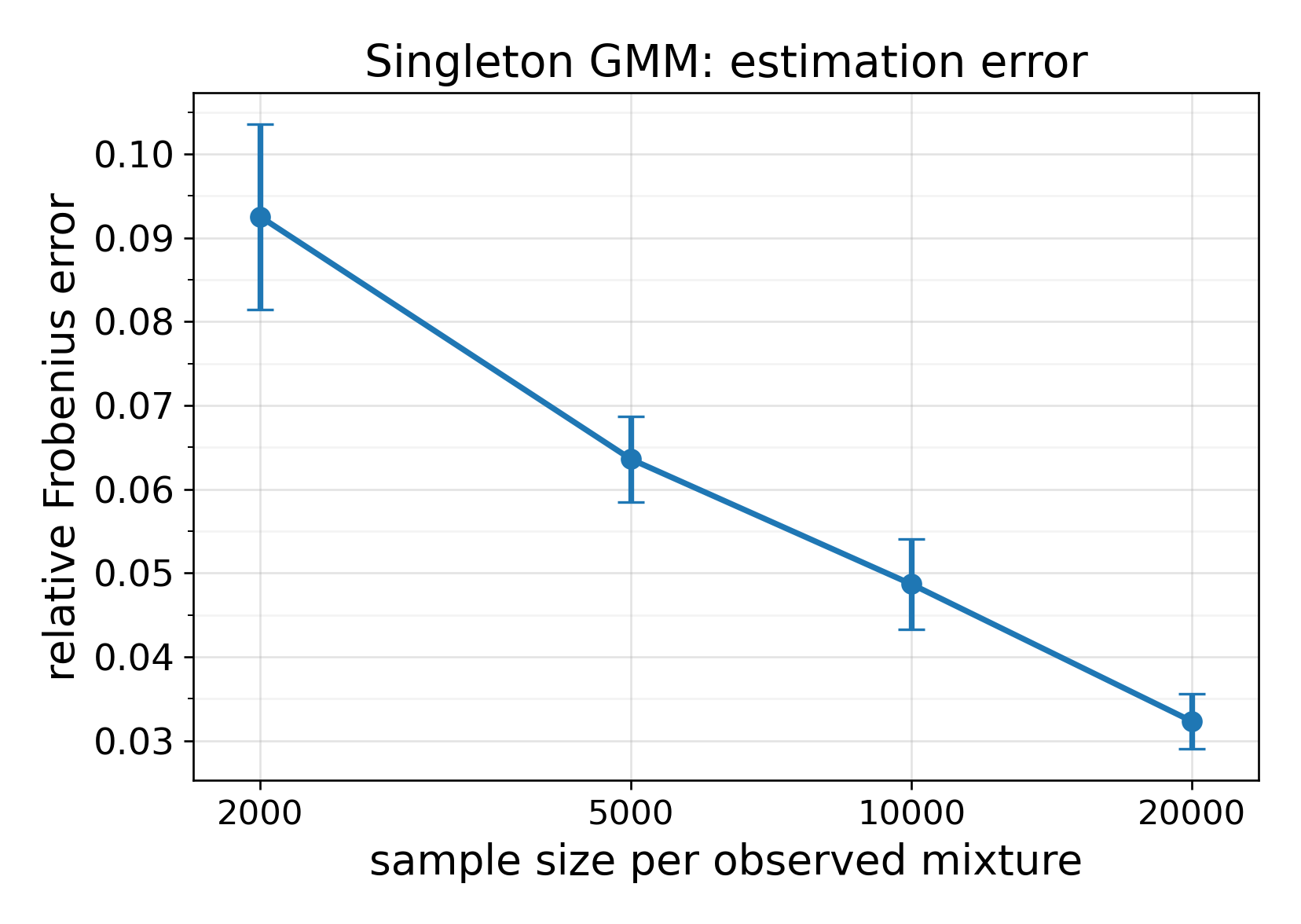}

\vspace{1mm}
{\small (a) $|R_e|=1$: mixing-matrix error.}
\end{minipage}
\hfill
\begin{minipage}{0.49\linewidth}
\centering
\includegraphics[width=\linewidth]{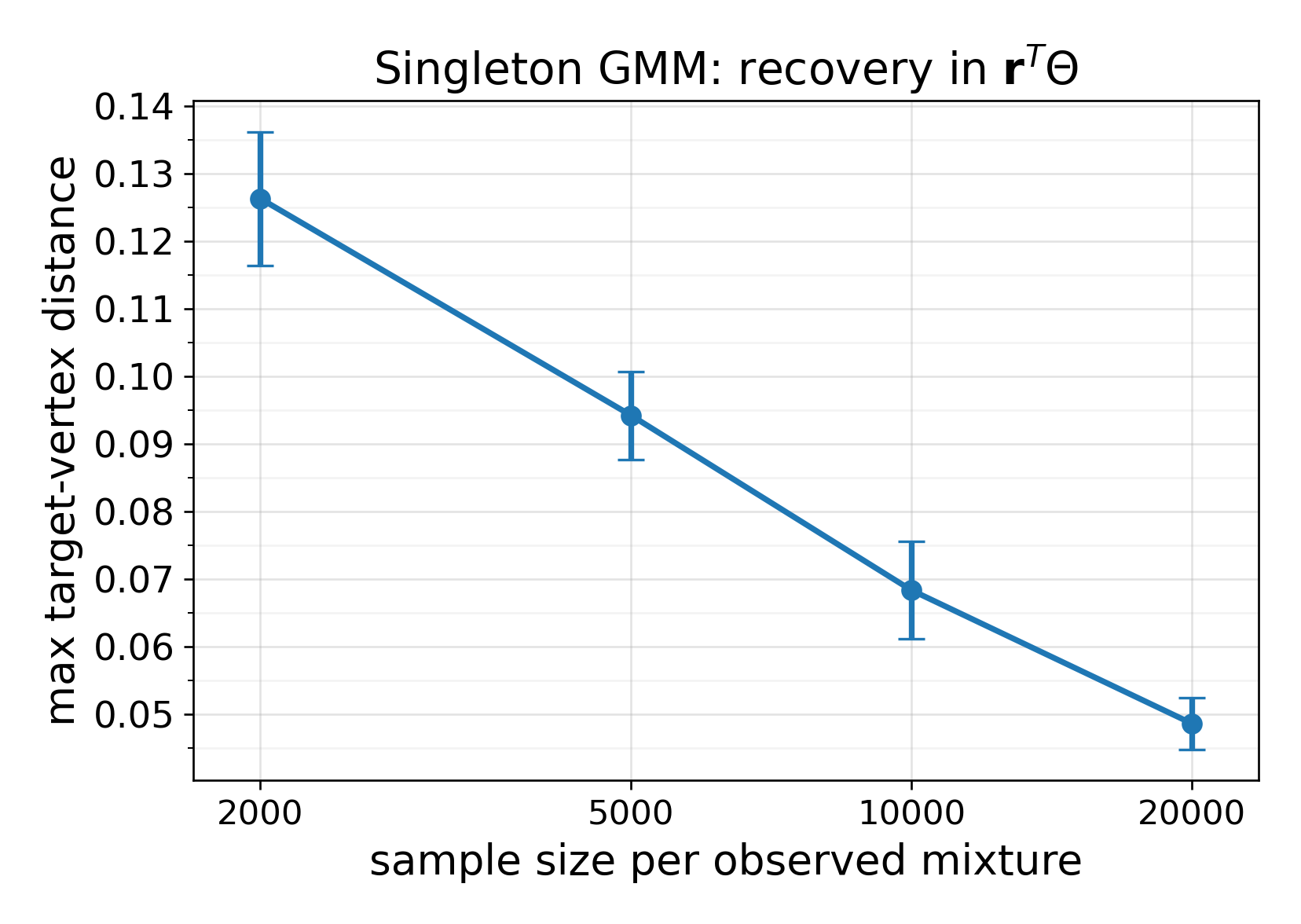}

\vspace{1mm}
{\small (b) $|R_e|=1$: vertex recovery in $\bm r^T\Theta$.}
\end{minipage}

\vspace{3mm}

\begin{minipage}{0.49\linewidth}
\centering
\includegraphics[width=\linewidth]{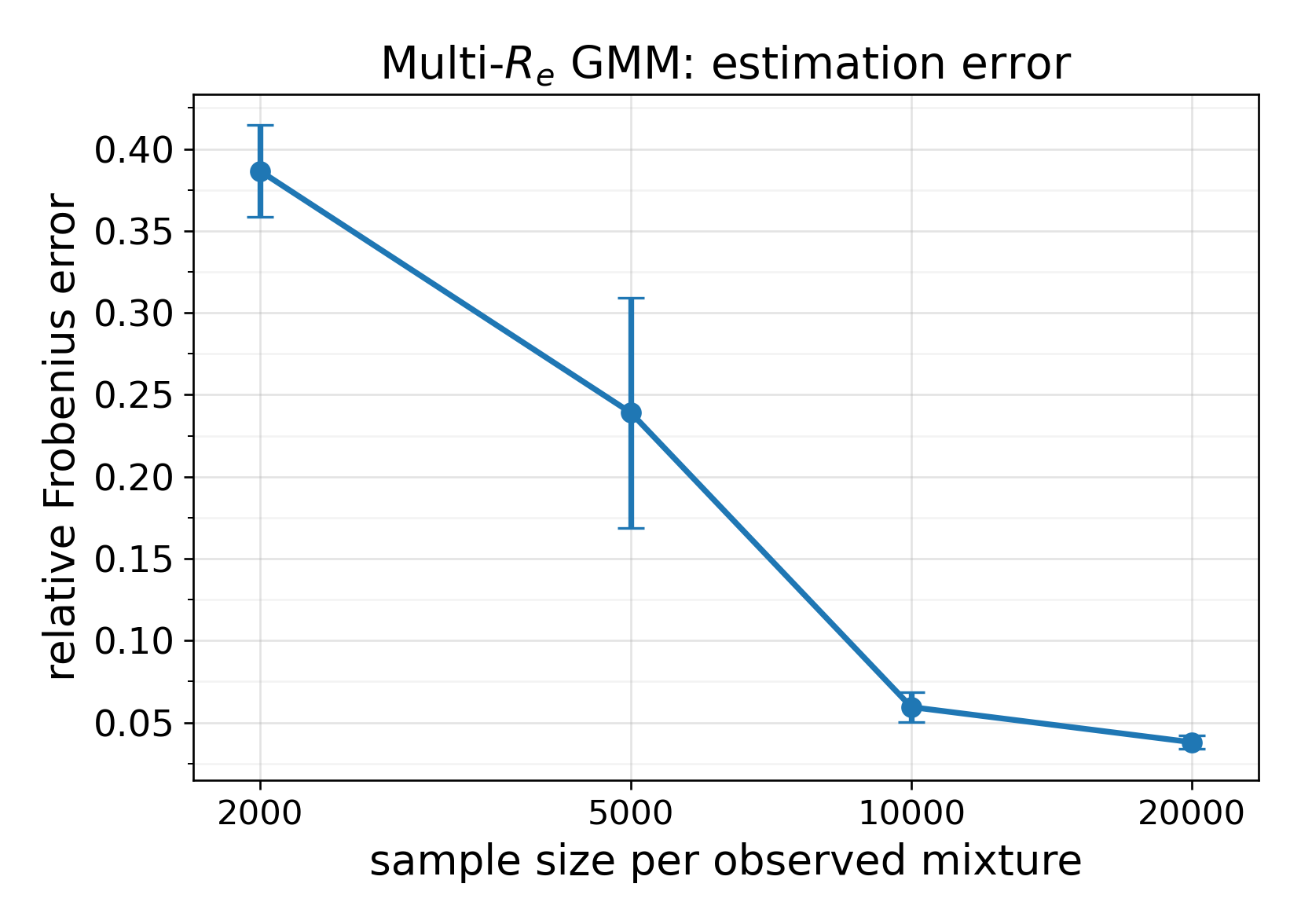}

\vspace{1mm}
{\small (c) $|R_e|=2$: mixing-matrix error.}
\end{minipage}
\hfill
\begin{minipage}{0.49\linewidth}
\centering
\includegraphics[width=\linewidth]{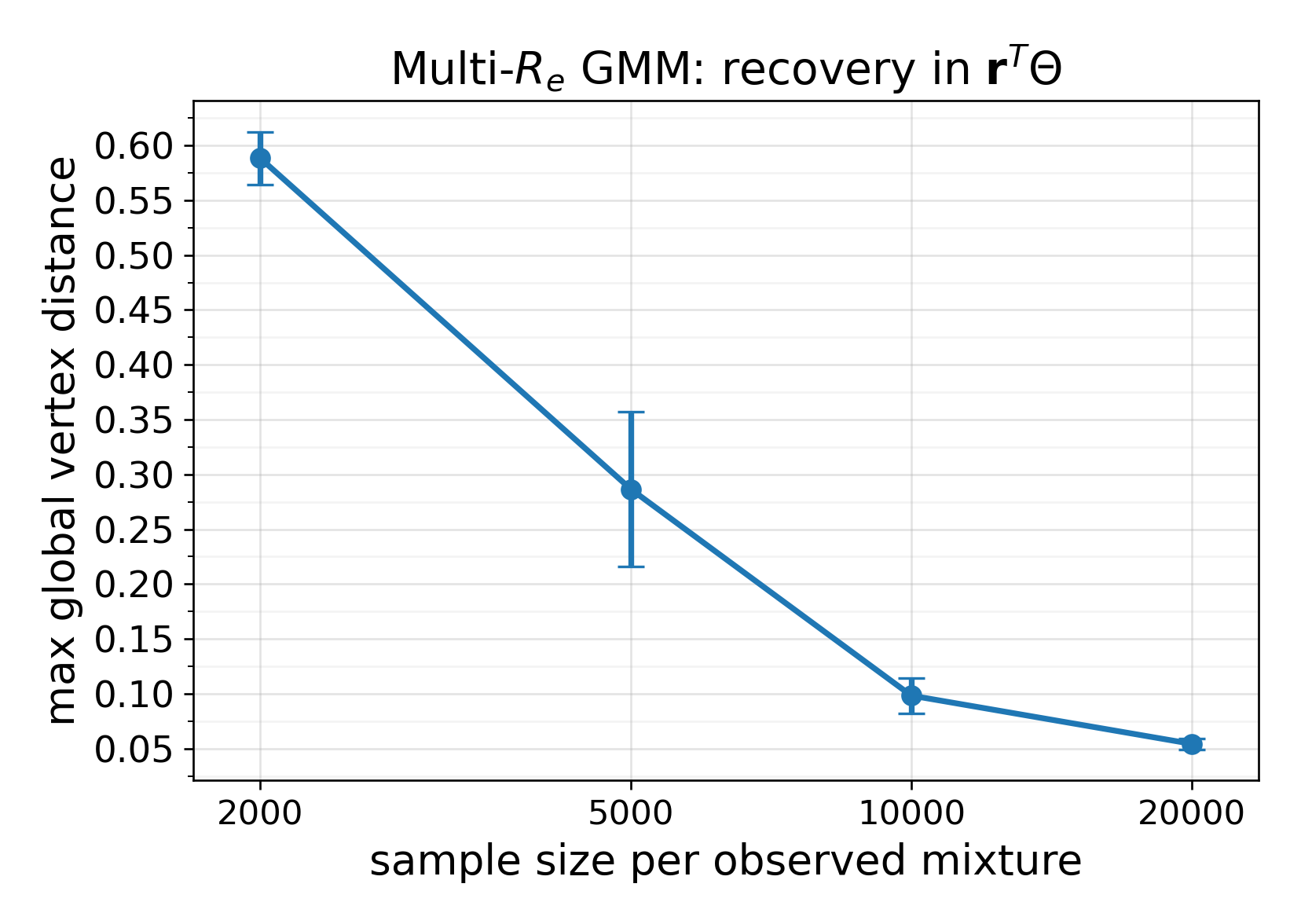}

\vspace{1mm}
{\small (d) $|R_e|=2$: global vertex recovery in $\bm r^T\Theta$.}
\end{minipage}

\caption{Sample-size dependence in the controlled GMM experiments.  Panels
(a) and (b) show the singleton setting $|R_e|=1$, and panels (c) and (d) show
the multi-component setting $|R_e|=2$.  The left column reports the relative
Frobenius error of the estimated mixing matrix, and the right column reports
the maximum distance of the selected diagnostic vectors $\bm r^T\Theta$ to
their target simplex vertices.  Error bars show standard errors over ten
random seeds.}
\label{fig:gmm-sample-size}
\end{figure}

The trends in Figure~\ref{fig:gmm-sample-size} show a clear difference between the singleton and multi-component settings.  The singleton case is stable even at small sample sizes, whereas the multi-component case exhibits unstable global aggregation at $n=2000$ and $n=5000$ before stabilizing from $n=10000$ onward. 
These results indicate that validation-diversity selection is effective under the identifiability conditions, while the more difficult
$|R_e|>1$ setting requires larger samples for stable global aggregation.

\paragraph{Controlled non-Gaussian components.}
We also consider a controlled non-Gaussian design with the same singleton
marginal-independence pattern
\[
  R_{12}=\{1\},\qquad
  R_{13}=\{2\},\qquad
  R_{23}=\{3\}.
\]
For each component, the target coordinate pair is generated 
independently from asymmetric or multimodal non-Gaussian marginals, 
including shifted gamma, log-normal, Student-$t$, and two-component Gaussian mixture distributions.
The remaining coordinate is then generated as a nonlinear function of the target pair plus component-specific noise.  The exact component-wise construction is given in Appendix~\ref{app:nongauss-design}.  
Hence the components are skewed or multimodal, and are not well described as single Gaussian clusters.  The mixing matrix is the same as in the singleton GMM experiment.

\begin{table}[t]
\centering
\small
 \caption{Controlled non-Gaussian mixture experiment.  Entries are relative
 Frobenius errors of the estimated mixing matrix, reported as mean $\pm$
 standard error over ten random seeds. 
Boldface indicates the lowest error in
each row, and underlining indicates the second lowest error.  The last column,
``PM rec.'', reports PM-MMD vertex recovery over ten seeds; a seed is recovered when all diagnostic directions have nearest-vertex agreement with their target vertices.}
\label{tab:nongauss-controlled}
\begin{tabular}{lccccc}
\hline
$n$ & PM-MMD & GM fit & $k$-means & NMF & PM rec. \\
\hline
500
& \textbf{0.2470 $\pm$ 0.0740}
& 0.5027 $\pm$ 0.0080
& \underline{0.4363 $\pm$ 0.0042}
& 0.7107 $\pm$ 0.0105
& $9/10$ \\
1000
& \textbf{0.1338 $\pm$ 0.0166}
& 0.5036 $\pm$ 0.0127
& \underline{0.4444 $\pm$ 0.0019}
& 0.6464 $\pm$ 0.0044
& $10/10$ \\
2000
& \textbf{0.1996 $\pm$ 0.1049}
& 0.5193 $\pm$ 0.0107
& \underline{0.4427 $\pm$ 0.0022}
& 0.6038 $\pm$ 0.0065
& $9/10$ \\
\hline
\end{tabular}
\end{table}

Table~\ref{tab:nongauss-controlled} shows that PM-MMD remains effective when the component distributions are non-Gaussian.
At $n=1000$, the method recovers the correct vertices in all ten seeds and substantially outperforms Gaussian mixture fitting, 
$k$-means, and histogram NMF.  
This supports the interpretation that the proposed method exploits marginal independence rather than Gaussian cluster geometry.  
The single failures at $n=500$ and $n=2000$ indicate that the non-Gaussian setting is still affected by finite-sample selection and aggregation, but PM-MMD remains the best method among the clustering and factorization baselines in all reported sample sizes.

\subsection{Semi-synthetic benchmark data and clean-component calibrations}
\label{app:semisynthetic-results}

We also evaluate the method on semi-synthetic benchmark data.  These
experiments are not pure real-data experiments: the component identities and
the mixing matrix are controlled, while the component marginals are derived
from class-conditional empirical distributions of benchmark data, namely
Digits, Wine, and Dry Bean
\cite{alpaydin1998optdigits,aeberhard1992wine,drybean2020uci}.  This allows
us to evaluate the diagnostic vectors $\bm r^T\Theta$ and the mixing-matrix
error beyond Gaussian component models.  We use the same candidate-generation
and representative-selection pipeline as in
Section~\ref{app:controlled-synthetic-results}.

The construction enforces the singleton marginal-independence pattern
\[
  R_{12}=\{1\},\qquad
  R_{13}=\{2\},\qquad
  R_{23}=\{3\},
\]
by independently resampling the target coordinate pair within each component
and adding controlled dependence to the remaining coordinate.  Details of the
resampling scheme and feature choices are given in
Appendix~\ref{app:semisynthetic-benchmark-designs}.

We first compare PM-MMD with three reference methods that do not use marginal
independence: Gaussian mixture fitting using EM
\cite{dempster1977maximum}, $k$-means clustering
\cite{macqueen1967some}, and histogram NMF
\cite{lee1999learning}.  These methods use only the observed mixtures and
serve as parametric clustering, geometric clustering, and distributional
factorization references, respectively.

We report both unwarped and cubic-warped versions of the semi-synthetic
benchmarks.  The unwarped setting uses the benchmark-derived empirical
marginals directly.  In the cubic-warped setting, after robust coordinate-wise
standardization, each coordinate $z$ is transformed to $z+0.3z^3$.  Since this
transformation is coordinate-wise, it preserves the target marginal
independences while changing the component geometry.

\providecommand{\bestval}[1]{#1}
\renewcommand{\bestval}[1]{\ensuremath{\bm{#1}}}
\providecommand{\secondval}[1]{#1}
\renewcommand{\secondval}[1]{\ensuremath{\underline{#1}}}

\begin{table}[!tbp]
\centering
\scriptsize
\caption[Semi-synthetic benchmark results.]{
Semi-synthetic benchmark results.  Entries are relative Frobenius errors of
the estimated mixing matrix, reported as mean $\pm$ standard error over ten
random seeds.  
Boldface indicates the lowest error in each row, and
underlining indicates the second lowest error.  The last column, ``PM rec.'',
reports PM-MMD vertex recovery over ten seeds, i.e., the number of seeds for
which all three selected diagnostic vectors $\bm r^T\Theta$ are nearest to
their target simplex vertices.}
\label{tab:semisynthetic-benchmarks}
\resizebox{\linewidth}{!}{
\begin{tabular}{lllcccccc}
\hline
Data & Setting & Warp & $n$ & PM-MMD & GM fit & $k$-means & NMF & PM rec. \\
\hline
Digits & $\{0,1,2\}$, feat. $\{59,61,3\}$
& none
& 500
& \bestval{0.1905 \pm 0.0257}
& 0.7234 $\pm$ 0.0056
& 0.7460 $\pm$ 0.0063
& \secondval{0.6339 \pm 0.0169}
& $10/10$ \\

Digits & $\{0,1,2\}$, feat. $\{59,61,3\}$
& cubic
& 500
& \bestval{0.4271 \pm 0.0579}
& 0.6708 $\pm$ 0.0485
& 1.0052 $\pm$ 0.0034
& \secondval{0.6339 \pm 0.0169}
& $7/10$ \\

Digits & $\{0,1,2\}$, feat. $\{59,61,3\}$
& none
& 1000
& \bestval{0.1203 \pm 0.0160}
& 0.7346 $\pm$ 0.0026
& 0.7533 $\pm$ 0.0037
& \secondval{0.6206 \pm 0.0081}
& $10/10$ \\

Digits & $\{0,1,2\}$, feat. $\{59,61,3\}$
& cubic
& 1000
& \bestval{0.3292 \pm 0.0947}
& 0.7555 $\pm$ 0.0567
& 1.0062 $\pm$ 0.0026
& \secondval{0.6206 \pm 0.0081}
& $9/10$ \\
\hline
Wine & $\{0,1,2\}$, feat. $\{6,12,11\}$
& none
& 500
& \secondval{0.2679 \pm 0.1010}
& \bestval{0.0508 \pm 0.0056}
& 0.4308 $\pm$ 0.0050
& 0.6587 $\pm$ 0.0129
& $8/10$ \\

Wine & $\{0,1,2\}$, feat. $\{6,12,11\}$
& cubic
& 500
& \secondval{0.4512 \pm 0.1107}
& \bestval{0.4074 \pm 0.0371}
& 0.7130 $\pm$ 0.0445
& 0.6587 $\pm$ 0.0129
& $5/10$ \\

Wine & $\{0,1,2\}$, feat. $\{6,12,11\}$
& none
& 1000
& \secondval{0.0902 \pm 0.0110}
& \bestval{0.0342 \pm 0.0034}
& 0.4222 $\pm$ 0.0033
& 0.6147 $\pm$ 0.0068
& $10/10$ \\

Wine & $\{0,1,2\}$, feat. $\{6,12,11\}$
& cubic
& 1000
& \bestval{0.0873 \pm 0.0123}
& \secondval{0.3590 \pm 0.0238}
& 0.6520 $\pm$ 0.0232
& 0.6147 $\pm$ 0.0068
& $10/10$ \\
\hline
Dry Bean-A & $\{6,4,2\}$, feat. $\{6,11,9\}$
& none
& 500
& \secondval{0.2073 \pm 0.0235}
& \bestval{0.0704 \pm 0.0083}
& 0.4839 $\pm$ 0.0083
& 0.6824 $\pm$ 0.0132
& $10/10$ \\

Dry Bean-A & $\{6,4,2\}$, feat. $\{6,11,9\}$
& cubic
& 500
& \bestval{0.3697 \pm 0.0921}
& \secondval{0.6517 \pm 0.0216}
& 1.1012 $\pm$ 0.0065
& 0.6824 $\pm$ 0.0132
& $9/10$ \\

Dry Bean-B & $\{6,5,4\}$, feat. $\{1,6,0\}$
& none
& 500
& \secondval{0.2239 \pm 0.1022}
& \bestval{0.0520 \pm 0.0063}
& 0.4892 $\pm$ 0.0049
& 0.6258 $\pm$ 0.0224
& $9/10$ \\

Dry Bean-B & $\{6,5,4\}$, feat. $\{1,6,0\}$
& cubic
& 500
& 0.6380 $\pm$ 0.1664
& \bestval{0.4366 \pm 0.0512}
& 1.0739 $\pm$ 0.0048
& \secondval{0.6258 \pm 0.0224}
& $5/10$ \\
\hline
\end{tabular}}
\end{table}

Table~\ref{tab:semisynthetic-benchmarks} shows that the benchmark geometry
matters.  On the Digits benchmark, PM-MMD is the best method in both the
unwarped and cubic-warped settings, although the cubic warp makes vertex
recovery less stable at $n=500$.  On the Wine benchmark, Gaussian mixture
fitting is strongest in the unwarped setting, but PM-MMD becomes the best
method at $n=1000$ after the cubic warp.  The Dry Bean results show that the
effect of the cubic warp is feature-dependent: in setting A, the cubic warp
makes PM-MMD the best method among the reference methods considered, whereas
in setting B, PM-MMD becomes less stable and Gaussian mixture fitting remains
stronger.

These results also indicate when PM-MMD is expected to be useful.  The method
is not a clustering procedure; it relies on pairwise marginal independence as
the identifying signal.  It is most advantageous when the components are not
well captured by simple Gaussian clusters, but the selected coordinate pairs
still exhibit a clear contrast between independent target components and
dependent non-target components.  When the benchmark geometry is favorable to
Gaussian clustering, or when the selected features do not contain a strong
dependence contrast, Gaussian mixture fitting can remain stronger and PM-MMD
can become unstable.

\paragraph{Oracle-assisted binary MPE baselines.}
We next compare with standard binary MPE methods.  Methods such as
KM2~\cite{ramaswamy2016mixture}, DEDPUL~\cite{ivanov2019dedpul}, and
BBE~\cite{garg2021mixture} are designed for binary mixture proportion
estimation or positive-unlabeled learning, where samples from a mixture and
samples from one clean component are available.  Since the present
multi-mixture problem provides only unlabeled observed mixtures, these methods
are not directly applicable under the same information structure.  For
calibration, we therefore evaluate them as oracle-assisted references.  For
each observed mixture and each true component, clean samples from the
component are supplied to the binary MPE method, and the resulting estimates
are assembled into a mixing matrix.  These baselines use stronger information
than PM-MMD and should not be interpreted as direct competitors under the
same information condition.

\begin{table}[!tbp] 
\centering
\scriptsize
\caption[Oracle-assisted binary MPE baselines.]{
Oracle-assisted binary MPE baselines on the semi-synthetic benchmarks.  These
methods are given clean samples from each true component and estimate each
entry of the mixing matrix by a binary MPE problem.  They therefore use
stronger information than PM-MMD and are included as oracle-assisted
references rather than direct competitors under the same information
structure.  Entries are relative Frobenius errors of the estimated mixing
matrix, reported as mean $\pm$ standard error over ten random seeds.}
\label{tab:oracle-binary-mpe}
\resizebox{\linewidth}{!}{
\begin{tabular}{llccccc}
\hline
Data & Warp & $n$ & PM-MMD & KM2-oracle & DEDPUL-oracle & BBE-oracle \\
\hline
Digits & none & 500
& 0.1905 $\pm$ 0.0257
& 0.1693 $\pm$ 0.0083
& \secondval{0.0655 \pm 0.0041}
& \bestval{0.0553 \pm 0.0043} \\
Digits & cubic & 500
& 0.4271 $\pm$ 0.0579
& 0.1499 $\pm$ 0.0114
& \secondval{0.0654 \pm 0.0040}
& \bestval{0.0552 \pm 0.0044} \\
Digits & none & 1000
& 0.1203 $\pm$ 0.0160
& 0.1442 $\pm$ 0.0108
& \secondval{0.0930 \pm 0.0073}
& \bestval{0.0261 \pm 0.0032} \\
Digits & cubic & 1000
& 0.3292 $\pm$ 0.0947
& 0.1318 $\pm$ 0.0120
& \secondval{0.0927 \pm 0.0071}
& \bestval{0.0261 \pm 0.0032} \\
\hline
Wine & none & 500
& 0.2679 $\pm$ 0.1010
& 0.0877 $\pm$ 0.0124
& \secondval{0.0853 \pm 0.0053}
& \bestval{0.0585 \pm 0.0060} \\
Wine & cubic & 500
& 0.4512 $\pm$ 0.1107
& \secondval{0.0758 \pm 0.0106}
& 0.0851 $\pm$ 0.0051
& \bestval{0.0586 \pm 0.0060} \\
Wine & none & 1000
& 0.0902 $\pm$ 0.0110
& \secondval{0.0831 \pm 0.0097}
& 0.1455 $\pm$ 0.0108
& \bestval{0.0313 \pm 0.0027} \\
Wine & cubic & 1000
& 0.0873 $\pm$ 0.0123
& \secondval{0.0741 \pm 0.0093}
& 0.1456 $\pm$ 0.0110
& \bestval{0.0313 \pm 0.0027} \\
\hline
Dry Bean-A & none & 500
& 0.2073 $\pm$ 0.0235
& 0.1311 $\pm$ 0.0110
& \secondval{0.0729 \pm 0.0050}
& \bestval{0.0594 \pm 0.0049} \\
Dry Bean-A & cubic & 500
& 0.3697 $\pm$ 0.0921
& 0.1349 $\pm$ 0.0087
& \secondval{0.0736 \pm 0.0050}
& \bestval{0.0594 \pm 0.0049} \\
\hline
Dry Bean-B & none & 500
& 0.2239 $\pm$ 0.1022
& 0.1326 $\pm$ 0.0091
& \secondval{0.0779 \pm 0.0067}
& \bestval{0.0646 \pm 0.0059} \\
Dry Bean-B & cubic & 500
& 0.6380 $\pm$ 0.1664
& 0.1727 $\pm$ 0.0101
& \secondval{0.0775 \pm 0.0064}
& \bestval{0.0647 \pm 0.0059} \\
\hline
\end{tabular}}
\end{table}

Table~\ref{tab:oracle-binary-mpe} shows that oracle-assisted binary MPE
methods, especially BBE, are very strong when clean component samples are
provided.  This is expected because these methods are evaluated under a
stronger information condition than PM-MMD.  The comparison clarifies the
role of the proposed method: PM-MMD is designed for the setting where only
the observed mixtures are available, and it uses marginal-independence
structure rather than clean component samples.

\paragraph{Clean-component binary variant of PM-MMD.}
We also evaluate a clean-component binary variant of PM-MMD.  In this
setting,
\[
  U_1=p_{\mathrm{clean}},
  \qquad
  U_2=0.55\,p_{\mathrm{clean}}+0.45\,p_{\mathrm{res}} .
\]
The modification is straightforward: the affine weight vector is restricted
to the one-dimensional residual path, and the PM-MMD of the
residual candidate is minimized.  We evaluate three binary pairs,
\(p_0\to p_1\), \(p_1\to p_2\), and \(p_2\to p_0\), so that each component
appears once as the residual component.  In each pair, the residual component
is constructed to be marginally independent on its designated coordinate pair,
so the clean-MI MMD estimator is correctly specified as a
marginal-independence estimator.

\begin{table}[!tbp]
\centering
\small
\caption[Clean-component binary MPE comparison.]{
Clean-component binary MPE comparison averaged over three binary pairs and
twelve semi-synthetic benchmark settings.  The first three rows report
row-wise summary errors: relative Frobenius errors of the binary mixing
matrix for the first two rows, and the absolute error of the estimated mixing
proportion for the third row.  The last two rows count how often each method
has the lowest and second-lowest row-wise relative Frobenius error.}
\label{tab:clean-binary-summary}
\begin{tabular}{lcccc}
\hline
Summary & clean-MI MMD & KM2 & DEDPUL & BBE \\
\hline
Mean relative error
& 0.0537
& \secondval{0.0470}
& 0.1632
& \bestval{0.0209} \\
Median relative error
& 0.0424
& \secondval{0.0363}
& 0.1462
& \bestval{0.0196} \\
Mean $|\widehat\theta-\theta|$
& 0.0466
& \secondval{0.0408}
& 0.1416
& \bestval{0.0182} \\
Best count
& 3
& 1
& 0
& 32 \\
Second count
& 12
& 20
& 0
& 4 \\
\hline
\end{tabular}
\end{table}

Table~\ref{tab:clean-binary-summary} shows that standard binary MPE methods
are very strong in the clean-component setting.  BBE is the most stable method
overall, and KM2 is also strong.  The clean-MI variant of PM-MMD is
competitive in some pairs, but it is not uniformly better than BBE or KM2.
This is consistent with the information structure: clean-component binary MPE
is precisely the setting for which existing binary MPE methods are designed.

We also computed a reverse-contamination diagnostic for these clean-binary
pairs by estimating the proportion of the clean component in a pure residual
component.  BBE and KM2 gave near-zero estimates on average
(\(0.0033\) and \(0.0135\), respectively), suggesting that these clean-binary
settings are operationally favorable to irreducibility-based binary MPE.
Thus these experiments should be read as a calibration in a favorable
clean-component setting for existing MPE methods.  The main setting of this
paper is different: clean component samples are not available, and marginal
independence is used to recover components from unlabeled mixtures.

\FloatBarrier
\subsection{Mixture-separation sweep on raw gated-pool DLBCL}
\label{app:raw-dlbcl-rho-sweep}

We supplement the raw gated-pool DLBCL experiment with a sweep over the
mixture-separation parameter
\[
  \Theta(\rho)=\rho I_3+\frac{1-\rho}{3}\mathbf 1\mathbf 1^T .
\]
The sweep uses the same three gated component pools and the same marker pairs
as the main DLBCL experiment.  PM-MMD uses the condition-aware stable selector
and an expanded search radius $\bar r=10$ for this stress test.  The transition
baselines use the same pooled observed samples and the same cross-fitted
logistic posterior construction as in the main table.

The parameter $\rho$ changes the difficulty of the two recovery routes in
different ways.  For PM-MMD, the exact inverse weights grow as $\rho$ decreases:
\[
  \left\|\bigl(\Theta(\rho)^{-1}\bigr)_{i\cdot}\right\|_1
  =
  \frac{4-\rho}{3\rho}.
\]
Thus small $\rho$ amplifies signed-affine estimation error and makes spurious
low-PM-MMD candidates more likely.  For the transition baselines, larger
$\rho$ makes the observed mixture index $Y'$ more informative about the latent
component $Y$, so the classifier posterior cloud reveals the transition simplex
more clearly.

\begin{figure}[H]
\centering
\includegraphics[width=0.82\linewidth]{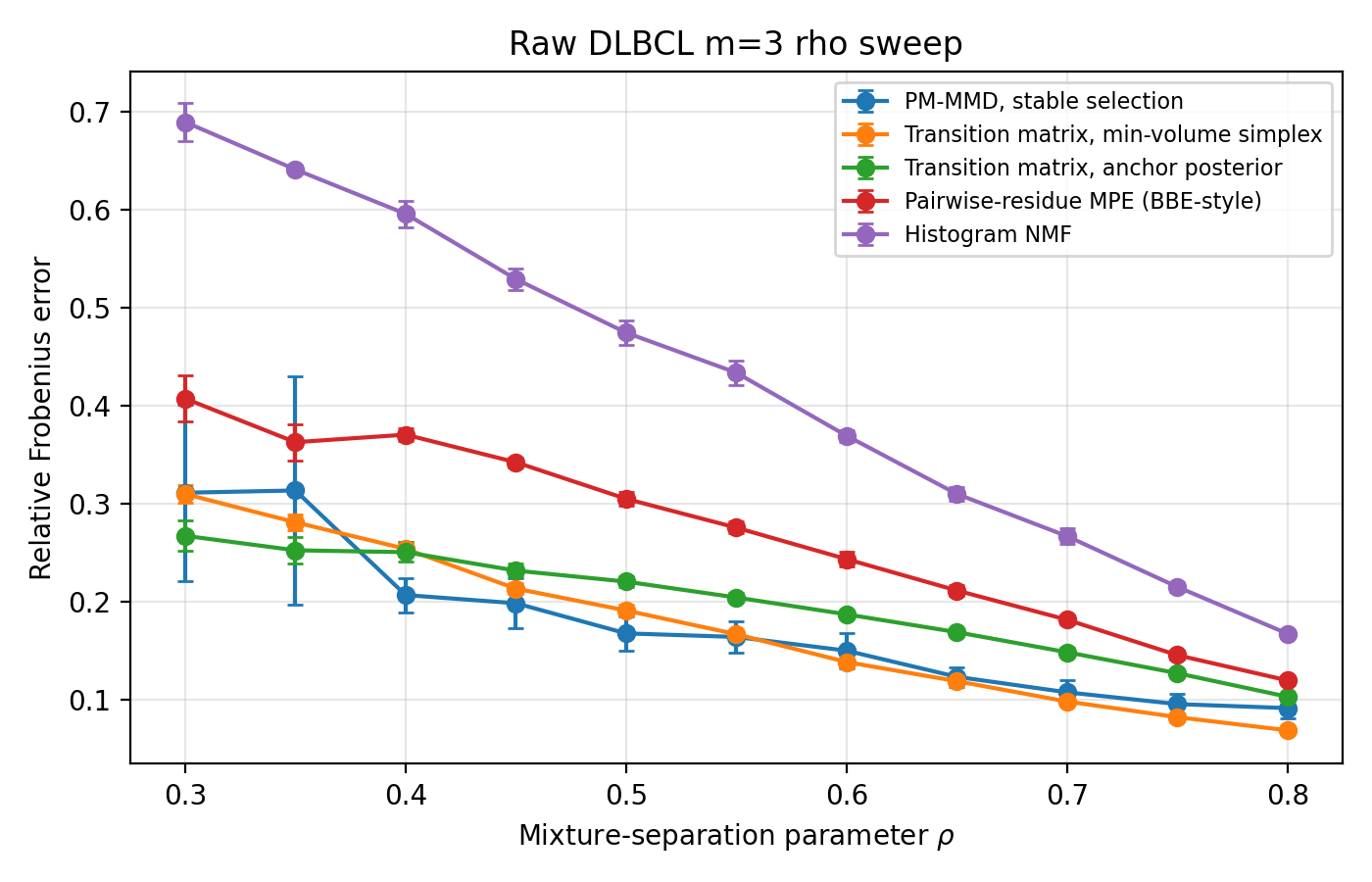}
\caption[Mixture-separation sweep on raw gated-pool DLBCL.]{
Mixture-separation sweep on the raw gated-pool DLBCL experiment.  Points show
mean relative Frobenius error over ten random seeds, and error bars show
standard errors.  PM-MMD is strongest at intermediate mixture separations,
whereas the min-volume transition baseline becomes strongest for larger
$\rho$.  At very small $\rho$, PM-MMD is unstable even with $\bar r=10$.}
\label{fig:raw-dlbcl-rho-sweep}
\end{figure}

\section{Gaussian designs used in the numerical experiments}
\label{app:gmm-designs}

This appendix lists the Gaussian component parameters used in the controlled
experiments in Section~\ref{app:controlled-synthetic-results}.  For a Gaussian component
$p_i=\mathcal N(\mu_i,\Sigma_i)$, marginal independence of two coordinates is
equivalent to zero covariance between them.

\subsection{Symmetric Gaussian design}

The symmetric design uses $d=m=3$, zero means, and unit marginal variances.
The mixing matrix is
\[
\Theta
=
\begin{pmatrix}
0.55 & 0.35 & 0.10\\
0.20 & 0.65 & 0.15\\
0.15 & 0.30 & 0.55
\end{pmatrix}.
\]
The component means are $\mu_1=\mu_2=\mu_3=0$, and for $\rho=0.6$ the covariance
matrices are
\[
\Sigma_1=
\begin{pmatrix}
1 & 0 & \rho\\
0 & 1 & \rho\\
\rho & \rho & 1
\end{pmatrix},\quad
\Sigma_2=
\begin{pmatrix}
1 & \rho & 0\\
\rho & 1 & \rho\\
0 & \rho & 1
\end{pmatrix},\quad
\Sigma_3=
\begin{pmatrix}
1 & \rho & \rho\\
\rho & 1 & 0\\
\rho & 0 & 1
\end{pmatrix}.
\]
Thus $R_{12}=\{1\}$, $R_{13}=\{2\}$, and $R_{23}=\{3\}$.  However, for each
pair, the two non-target components have identical bivariate Gaussian
marginals.  For instance, for the pair $(1,2)$, the $(1,2)$-marginals of
$p_2$ and $p_3$ are identical.  This creates the signed cancellation described
in Section~\ref{app:controlled-synthetic-results}.

\subsection{Asymmetric singleton design}

The singleton design uses $d=m=3$ and the same mixing matrix $\Theta$ as above.
The target sets are
\[
  R_{12}=\{1\},\qquad R_{13}=\{2\},\qquad R_{23}=\{3\}.
\]
The Gaussian means are
\[
\mu_1=(0,0.8,-0.6)^T,\quad
\mu_2=(1.2,-0.7,0.4)^T,\quad
\mu_3=(-0.8,1.4,1.1)^T.
\]
We specify each covariance as
\[
  \Sigma_i=D_i^{1/2} C_i D_i^{1/2},
\]
where $D_i$ is the diagonal matrix of marginal variances and $C_i$ is the
correlation matrix.  The parameters are
\[
D_1=\mathrm{diag}(1,2,0.5),\quad
C_1=
\begin{pmatrix}
1 & 0 & 0.65\\
0 & 1 & 0.55\\
0.65 & 0.55 & 1
\end{pmatrix},
\]
\[
D_2=\mathrm{diag}(2.5,0.6,1.4),\quad
C_2=
\begin{pmatrix}
1 & -0.60 & 0\\
-0.60 & 1 & -0.60\\
0 & -0.60 & 1
\end{pmatrix},
\]
and
\[
D_3=\mathrm{diag}(0.7,1.7,2.8),\quad
C_3=
\begin{pmatrix}
1 & 0.70 & -0.50\\
0.70 & 1 & 0\\
-0.50 & 0 & 1
\end{pmatrix}.
\]
Only the target component has zero covariance on its corresponding pair, while
the non-target bivariate Gaussian marginals are distinct.

\subsection{Asymmetric multi-component design}

The multi-component design uses $d=m=4$ and
\[
\Theta
=
\begin{pmatrix}
0.7 & 0.1 & 0.1 & 0.1\\
0.1 & 0.7 & 0.1 & 0.1\\
0.1 & 0.1 & 0.7 & 0.1\\
0.1 & 0.1 & 0.1 & 0.7
\end{pmatrix}.
\]
The target sets are
\[
  R_{12}=\{1,2\},\qquad
  R_{13}=\{3,4\},\qquad
  R_{23}=\{1,3\},\qquad
  R_{24}=\{2,4\}.
\]
The means are
\[
\begin{aligned}
\mu_1&=(0,0.7,-0.5,1.0)^T,\\
\mu_2&=(1.2,-0.6,0.4,-0.8)^T,\\
\mu_3&=(-0.9,1.2,1.0,0.3)^T,\\
\mu_4&=(0.5,-1.1,1.3,-0.2)^T .
\end{aligned}
\]
Again write $\Sigma_i=D_i^{1/2}C_iD_i^{1/2}$.  The variance matrices are
\[
\begin{aligned}
D_1&=\mathrm{diag}(1,1.4,0.8,1.7),\\
D_2&=\mathrm{diag}(2,0.7,1.3,0.9),\\
D_3&=\mathrm{diag}(0.8,1.9,0.6,1.5),\\
D_4&=\mathrm{diag}(1.6,1.1,1.8,0.7).
\end{aligned}
\]
The correlation matrices are
\[
C_1=
\begin{pmatrix}
1 & 0 & 0.45 & -0.25\\
0 & 1 & 0 & 0.35\\
0.45 & 0 & 1 & 0.20\\
-0.25 & 0.35 & 0.20 & 1
\end{pmatrix},
\]
\[
C_2=
\begin{pmatrix}
1 & 0 & -0.35 & 0.30\\
0 & 1 & 0.50 & 0\\
-0.35 & 0.50 & 1 & -0.25\\
0.30 & 0 & -0.25 & 1
\end{pmatrix},
\]
\[
C_3=
\begin{pmatrix}
1 & -0.40 & 0 & 0.25\\
-0.40 & 1 & 0 & -0.45\\
0 & 0 & 1 & 0.30\\
0.25 & -0.45 & 0.30 & 1
\end{pmatrix},
\]
and
\[
C_4=
\begin{pmatrix}
1 & 0.55 & 0 & -0.30\\
0.55 & 1 & -0.35 & 0\\
0 & -0.35 & 1 & 0.25\\
-0.30 & 0 & 0.25 & 1
\end{pmatrix}.
\]
The zero entries in these correlation matrices induce exactly the target
marginal independences listed above.  The remaining bivariate Gaussian
marginals are chosen to be distinct, which avoids the signed cancellation
present in the symmetric design.

\section{Controlled non-Gaussian design}
\label{app:nongauss-design}

This appendix describes the controlled non-Gaussian design used in
Section~\ref{app:controlled-synthetic-results}.  The design uses $m=d=3$ and the same
mixing matrix as the singleton GMM experiment,
\[
\Theta
=
\begin{pmatrix}
0.55 & 0.35 & 0.10\\
0.20 & 0.65 & 0.15\\
0.15 & 0.30 & 0.55
\end{pmatrix}.
\]
The target marginal-independence pattern is
\[
  R_{12}=\{1\},\qquad
  R_{13}=\{2\},\qquad
  R_{23}=\{3\}.
\]
For each component, the target coordinate pair is generated independently,
and the remaining coordinate is generated as a nonlinear function of the
target pair plus non-Gaussian noise.  All primitive random variables and noise
variables below are mutually independent.  For a variable $A$, write
$\widetilde A$ for its standardized version.  In the implementation this
standardization is performed using the empirical mean and standard deviation
of the generated sample.

For component $p_1$, the independent target pair is $(X_1,X_2)$.  We draw
\[
X_1 \sim
0.58\,N(-1.4,0.35^2)+0.42\,N(1.1,0.25^2),
\]
and
\[
X_2 \sim \Gamma(2,0.55)-0.65,
\]
where the gamma distribution is parameterized by shape and scale.  The third
coordinate is
\[
X_3
=
-0.6
+0.75\widetilde X_1
+0.45\tanh(1.2\widetilde X_2)
+0.35\widetilde X_1\widetilde X_2
+0.2\,T_5,
\]
where $T_5$ is a Student-$t$ random variable with five degrees of freedom.
Thus $X_1$ and $X_2$ are marginally independent under $p_1$.

For component $p_2$, the independent target pair is $(X_1,X_3)$.  We draw
\[
X_1
\sim
\operatorname{Lognormal}(-0.15,0.75^2)
-\exp\{-0.15+0.75^2/2\}
+0.65,
\]
and
\[
X_3
\sim
0.48\,N(-0.8,0.28^2)+0.52\,N(1.55,0.50^2).
\]
The second coordinate is
\[
X_2
=
0.35
-0.65\widetilde X_1
+0.70\sin(0.9\widetilde X_3)
+0.28\widetilde X_1\widetilde X_3
+\varepsilon_2,
\]
where $\varepsilon_2$ has a Laplace distribution with location $0$ and scale
$0.175$.  Thus $X_1$ and $X_3$ are marginally independent under $p_2$.

For component $p_3$, the independent target pair is $(X_2,X_3)$.  We draw
\[
X_2
\sim
0.40\,N(-1.0,0.45^2)+0.60\,N(1.25,0.35^2),
\]
and
\[
X_3 \sim 0.75\,T_4+0.85,
\]
where $T_4$ is a Student-$t$ random variable with four degrees of freedom.
The first coordinate is
\[
X_1
=
-0.25
+0.55\tanh(1.1\widetilde X_2)
-0.70\widetilde X_3
+0.22(\widetilde X_2^2-1)
+\varepsilon_3,
\]
where $\varepsilon_3\sim N(0,0.2^2)$.  Thus $X_2$ and $X_3$ are marginally
independent under $p_3$.

This construction makes the target 
marginal independences exact by
construction, while the non-target coordinate pairs contain nonlinear
dependence.  The marginal distributions are skewed, heavy-tailed, or
multimodal, so the components are not well represented by single Gaussian
clusters.

\section{Semi-synthetic benchmark designs}
\label{app:semisynthetic-benchmark-designs}

This appendix describes the semi-synthetic benchmark construction used in
Section~\ref{app:semisynthetic-results}.  The purpose of these
experiments is to use empirical component marginals from benchmark data while
retaining known component identities, a known mixing matrix, and a controlled
marginal-independence pattern.

The multi-mixture experiments in
Table~\ref{tab:semisynthetic-benchmarks} and the oracle-assisted matrix
estimation comparison in Table~\ref{tab:oracle-binary-mpe} use \(m=d=3\) and
the mixing matrix
\[
\Theta
=
\begin{pmatrix}
0.55 & 0.35 & 0.10\\
0.20 & 0.65 & 0.15\\
0.15 & 0.30 & 0.55
\end{pmatrix}.
\]
The target sets are
\[
  R_{12}=\{1\},\qquad
  R_{13}=\{2\},\qquad
  R_{23}=\{3\}.
\]
For each component $i$, let $(a_i,b_i)$ be its target independent coordinate
pair and let $c_i$ be the remaining coordinate.  Given the class-conditional
empirical data for component $i$, we generate $X_{a_i}$ and $X_{b_i}$ by
independent resampling from their empirical marginal distributions.  We also
draw an independent baseline variable $B_{c_i}$ from the empirical marginal of
coordinate $c_i$.  With empirical means $\mu_j$ and standard deviations $s_j$,
define
\[
  Z_{a_i}=\frac{X_{a_i}-\mu_{a_i}}{s_{a_i}},
  \qquad
  Z_{b_i}=\frac{X_{b_i}-\mu_{b_i}}{s_{b_i}}.
\]
The remaining coordinate is generated by
\[
  X_{c_i}
  =
  0.55 B_{c_i}
  +
  \alpha s_{c_i}\frac{Z_{a_i}+Z_{b_i}}{\sqrt{2}}
  +
  \beta s_{c_i} Z_{a_i}Z_{b_i}
  +
  \varepsilon_i,
\]
where $\varepsilon_i\sim N(0,\sigma^2 s_{c_i}^2)$.  In the reported
benchmarks, we use $\alpha=3.0$, $\beta=0.15$, and $\sigma=0.05$.  This
construction makes the target pair independent by construction, while the two
non-target coordinate pairs carry a controlled dependence signal.

The benchmark-specific choices used in
Table~\ref{tab:semisynthetic-benchmarks} are as follows.  Class labels and
feature indices are the encoded labels and zero-based feature indices used in
the implementation.
\[
\begin{array}{llll}
\text{Digits} & \text{classes } \{0,1,2\} & \text{features } \{59,61,3\} &
\text{none/cubic},\\
\text{Wine} & \text{classes } \{0,1,2\} & \text{features } \{6,12,11\} &
\text{none/cubic},\\
\text{Dry Bean-A} & \text{classes } \{6,4,2\} & \text{features } \{6,11,9\} &
\text{none/cubic},\\
\text{Dry Bean-B} & \text{classes } \{6,5,4\} & \text{features } \{1,6,0\} &
\text{none/cubic}.
\end{array}
\]
For the cubic-warped settings, after generating the semi-synthetic samples we
first apply robust coordinate-wise centering and scaling.  Each standardized
coordinate $z$ is then transformed as
\[
  z \mapsto z+0.3z^3 .
\]
Because the warp is applied coordinate-wise, it preserves marginal
independence of the target coordinate pairs while making the component shapes
less Gaussian. 
The clean-component binary comparison in
Table~\ref{tab:clean-binary-summary} uses the same component generators but
forms binary mixtures
\[
  U_1=p_{\mathrm{clean}},
  \qquad
  U_2=0.55\,p_{\mathrm{clean}}+0.45\,p_{\mathrm{res}}.
\]
We evaluate the three binary pairs
\(p_0\to p_1\), \(p_1\to p_2\), and \(p_2\to p_0\), so that each component is
used once as the residual component.  The residual component is evaluated on
its designated marginal-independence pair. 
We also screened the Shuttle data, but the recovery was
unstable under the present construction, mainly because of strong class
imbalance.  Therefore, Shuttle is not included in the main comparison.

\section{Implementation details}
\label{app:implementation-details}

This appendix gives the implementation details for the numerical experiments
in Section~\ref{sec:numerical-experiments}.  All experiments were run with Python
3.10--3.11 using NumPy, SciPy, pandas, scikit-learn, matplotlib, and openpyxl.
The binary MPE baselines use the external official implementations described
below.  Unless otherwise stated, reported averages use ten independent random
seeds, $0,1,\ldots,9$.  To avoid thread oversubscription in parallel runs, the
shell scripts set 
\begin{align*}
&\texttt{OMP\_NUM\_THREADS},
\texttt{OPENBLAS\_NUM\_THREADS}, 
\texttt{MKL\_NUM\_THREADS},\\
&\texttt{VECLIB\_MAXIMUM\_THREADS}, \text{and}\ 
\texttt{NUMEXPR\_NUM\_THREADS} 
\end{align*}
to one.
The degree of process-level parallelism affects only wall-clock time.

\paragraph{Sample splitting.}
For PM-MMD experiments, samples from each observed mixture are split into an
optimization sample and an independent validation sample.  The reported sample
size $n$ denotes the number of optimization samples per observed mixture;
unless otherwise stated, each observed mixture also contributes $n$ validation
samples.  The optimization split is used to generate candidate affine weights,
and the validation split is used only for representative selection.  Clustering
and factorization references are fitted to the union of the two splits.

\paragraph{Preprocessing and kernels.}
Unless otherwise stated, PM-MMD uses an RBF product kernel on the selected
coordinate pair.  Each coordinate is robustly standardized using the median and
$1.4826$ times the median absolute deviation fitted on the pooled optimization
samples.  For nearly constant coordinates, the scale falls back to the empirical
standard deviation and then to one.  After this scaling, the coordinate-wise RBF
bandwidth is chosen by the median-distance heuristic using 20,000 random pairs
from the pooled optimization sample.  The same scaling and bandwidths are used
on the validation split.

\paragraph{PM-MMD objective and constrained optimization.}
For a selected coordinate pair $(s,t)$ and $L$ observed mixtures, the
optimization variable is $\bm{r}=(r_1,\ldots,r_L)^T\in\mathbb R^L$.  The empirical
squared PM-MMD loss is evaluated from precomputed kernel contractions.  In code
notation it has the quartic form
\[
\widehat J_{s,t}(\bm{r})
=
\sum_{\ell,k}r_\ell r_k A_{\ell k}
-2\sum_{\ell,a,b}r_\ell r_a r_b C_{\ell ab}
+\sum_{a,b,c,d}r_a r_b r_c r_d D_{abcd}.
\]
The constrained optimization problem is
\[
\min_{r\in\mathbb R^L}\ \widehat J_{s,t}(\bm{r})+\lambda\|\bm{r}\|_2^2
\quad\text{subject to}\quad
\mathbf 1^T \bm{r}=1,
\qquad
\|\bm{r}\|_1\le \bar r.
\]
No nonnegativity constraint is imposed on $r$; numerical box bounds
$-\bar r\le r_\ell\le\bar r$ are also imposed.  The reported experiments use
$\lambda=0$, and the validation score uses the same form with selection ridge
parameter $\lambda_{\mathrm{sel}}=0$.

The constrained problem is solved by \texttt{scipy.optimize.minimize} with the
\texttt{SLSQP} method, \texttt{maxiter=1000}, and \texttt{ftol=1e-12}.  No
analytic gradient is supplied.  Since the objective is nonconvex and quartic in
$r$, SLSQP is used only as a local constrained optimizer; the procedure relies
on multi-start optimization.

\paragraph{Candidate generation and representative selection.}
For each coordinate pair, SLSQP is initialized from the standard basis vectors,
the uniform vector, and random feasible vectors satisfying
$\mathbf 1^T \bm{r}=1$ and $\|r\|_1\le\bar r$.  The number of starts is controlled
by \texttt{num-paths}.  Debug-only oracle starts are available in the code but
are not used in the reported runs.

After optimization, we retain candidates whose optimization loss is below the
training threshold and also retain the \texttt{keep-top-train} candidates with
the smallest training losses.  If no candidate passes the threshold, the
lowest-training-loss candidates are used as a fallback.  These candidates are
deduplicated in $r$-space by the specified Euclidean radius and then rescored on
the independent validation split by
\[
\widehat J^{\mathrm{val}}_{s,t}(\bm{r})+\lambda_{\mathrm{sel}}\|\bm{r}\|_2^2 .
\]

Most reported PM-MMD experiments use a greedy validation-diversity rule.  For
each coordinate pair, candidates are sorted by validation score and scanned in
increasing order.  A candidate is retained only if its Euclidean distance in
$r$-space from all previously retained pairwise representatives is at least the
pairwise separation threshold.  This step retains at most $q_{\max}$
representatives for each coordinate pair.  The pairwise representatives are then
pooled across coordinate pairs, and the same greedy rule is applied with the
global separation threshold to retain at most $K_{\max}=m$ global
representatives.  If fewer than $m$ global representatives are obtained, the
global separation threshold is relaxed deterministically, first to half of its
value and then to zero.

For the raw gated-pool three-component DLBCL experiment in
Section~\ref{subsec:raw-dlbcl-three-component}, we use a condition-aware
global selector after the pairwise greedy step.  The selector enumerates all
$m$-subsets of the retained pairwise representatives and minimizes the score in
\eqref{eq:stable-global-selection-score}.  The weights are
\[
\lambda_{\mathrm{cond}}=0.05,
\qquad
\lambda_{\mathrm{neg}}=10.0,
\qquad
\lambda_{\Delta}=0.
\]
Subsets with singular $R_{\mathcal C}$ are discarded.  This rule penalizes
ill-conditioned inverses and, more importantly, candidate sets whose inverse
has substantial negative mass.  It uses only the candidate affine weights and
held-out PM-MMD scores.  The known mixing matrix $\Theta$ is not used in
candidate generation or representative selection; it is used only for
evaluation.

\paragraph{Deterministic stability of condition-aware selection.}
\label{app:stable-selection-guarantee}
The following elementary statement explains what the condition-aware global
selector guarantees once the candidate set contains a separated correct subset.
Let $\mathfrak C_m$ denote the finite collection of all $m$-subsets of the
retained pairwise representatives.  For $\mathcal C\in\mathfrak C_m$, let
$S(\mathcal C)$ be the population analogue of
\eqref{eq:stable-global-selection-score}, obtained by replacing the held-out
validation losses by their population PM-MMD losses, and let
$\widehat S(\mathcal C)$ be the empirical score used by the algorithm.

\begin{proposition}[Selection stability and inverse perturbation]
Suppose that there exists a subset $\mathcal C_\star\in\mathfrak C_m$ such that
\[
  S(\mathcal C)\ge S(\mathcal C_\star)+\gamma
  \qquad
  \text{for all }\mathcal C\in\mathfrak C_m,\ \mathcal C\ne\mathcal C_\star,
\]
for some margin $\gamma>0$.  If
\[
  \sup_{\mathcal C\in\mathfrak C_m}
  |\widehat S(\mathcal C)-S(\mathcal C)| < \gamma/2,
\]
then the empirical condition-aware selector returns $\mathcal C_\star$.
Moreover, write $R_\star=\Theta^{-1}$ and let $\widehat R$ be the matrix formed
from the selected candidate rows.  If
\[
  \|\widehat R-R_\star\|_2 < 1/\|R_\star^{-1}\|_2,
\]
then
\[
  \|\widehat R^{-1}-R_\star^{-1}\|_F
  \le
  \frac{\|R_\star^{-1}\|_2^2\,\|\widehat R-R_\star\|_F}
       {1-\|R_\star^{-1}\|_2\,\|\widehat R-R_\star\|_2}.
\]
\end{proposition}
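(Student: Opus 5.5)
The proof has two independent parts: a margin argument over a finite collection of candidate subsets, and a standard Neumann-series perturbation bound for the matrix inverse.

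\emph{Selection step.} The empirical selector returns a minimizer $\widehat{\mathcal C}$ of $\widehat S$ over the finite set $\mathfrak C_m$. Fix any $\mathcal C\ne\mathcal C_\star$ in $\mathfrak C_m$. Using the uniform deviation bound twice, together with the population margin, I would write
\[
  \widehat S(\mathcal C)
  > S(\mathcal C)-\gamma/2
  \ge S(\mathcal C_\star)+\gamma/2
  > \widehat S(\mathcal C_\star).
\]
Hence $\mathcal C_\star$ is the unique minimizer of $\widehat S$, and the selector returns it.

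Finiteness of $\mathfrak C_m$ makes the minimizer well defined. The only subtlety is subsets with singular $R_{\mathcal C}$. These are discarded by the algorithm, which amounts to setting their score to $+\infty$ in both $S$ and $\widehat S$. I would state that convention explicitly so that the uniform-closeness hypothesis is read over the subsets that are actually scored.

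\emph{Inverse perturbation.} Write $E:=\widehat R-R_\star$ and $A:=R_\star^{-1}$, so that $\widehat R=R_\star(I+AE)$. The hypothesis gives $\|AE\|_2\le\|A\|_2\|E\|_2<1$. The Neumann series then shows that $I+AE$ is invertible, with
\[
  \|(I+AE)^{-1}\|_2\le \frac{1}{1-\|A\|_2\|E\|_2}.
\]
In particular $\widehat R$ is invertible, with $\widehat R^{-1}=(I+AE)^{-1}A$.

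Next I would use the identity
\[
  \widehat R^{-1}-R_\star^{-1}=-\widehat R^{-1}\,E\,R_\star^{-1}=-(I+AE)^{-1}AEA .
\]
To bound this in Frobenius norm, apply $\|XYZ\|_F\le\|X\|_2\|Y\|_F\|Z\|_2$. This gives
\[
  \|\widehat R^{-1}-R_\star^{-1}\|_F
  \le
  \frac{\|A\|_2^2\,\|E\|_F}{1-\|A\|_2\|E\|_2},
\]
which is exactly the claimed inequality.

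I do not expect a genuine obstacle in either part. The step needing the most care is the mixed-norm inequality $\|XYZ\|_F\le\|X\|_2\|Y\|_F\|Z\|_2$, which follows from $\|XY\|_F\le\|X\|_2\|Y\|_F$ and the same bound on the right. Note that only $\|E\|_F$ appears in the numerator, while the denominator uses the spectral norm. I would also remark that $R_\star=\Theta^{-1}$ presumes that the rows of $\widehat R$ are ordered to match the components. The first part only guarantees that the correct set is chosen, so the bound holds after a permutation of the selected rows, equivalently up to relabeling the columns of $\Theta$.
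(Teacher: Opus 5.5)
Your proposal is correct and follows the paper's proof. For selection you use the same margin chain $\widehat S(\mathcal C)>S(\mathcal C)-\gamma/2\ge S(\mathcal C_\star)+\gamma/2>\widehat S(\mathcal C_\star)$, and for the inverse you use the same resolvent identity (you write it as $-\widehat R^{-1}ER_\star^{-1}$, the paper as $R_\star^{-1}(R_\star-\widehat R)\widehat R^{-1}$) combined with the Neumann-series bound on $\|\widehat R^{-1}\|_2$. Your factorization $\widehat R=R_\star(I+AE)$ only makes explicit the invertibility of $\widehat R$ that the paper leaves implicit. Your remarks on singular subsets and row ordering are sensible clarifications rather than needed repairs.
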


\begin{proof}
For any $\mathcal C\ne\mathcal C_\star$, the assumed uniform score accuracy gives
\[
  \widehat S(\mathcal C)
  > S(\mathcal C)-\gamma/2
  \ge S(\mathcal C_\star)+\gamma/2
  > \widehat S(\mathcal C_\star).
\]
Thus $\mathcal C_\star$ is the unique empirical minimizer.  The inverse bound is
obtained from the identity
\[
  \widehat R^{-1}-R_\star^{-1}
  = R_\star^{-1}(R_\star-\widehat R)\widehat R^{-1}
\]
and the Neumann-series bound
\[
  \|\widehat R^{-1}\|_2
  \le
  \frac{\|R_\star^{-1}\|_2}
       {1-\|R_\star^{-1}\|_2\,\|\widehat R-R_\star\|_2}.
\]
\end{proof}

This proposition is intentionally deterministic.  It separates two issues:
whether candidate generation places representatives near the true rows of
$\Theta^{-1}$, and whether the global selector chooses a well-conditioned
subset from those representatives.  The PM-MMD concentration bounds in
Theorem~\ref{thm:pmmd-uniform-bound} control the validation-loss component of
$\widehat S$, while the condition-number and inverse-nonnegativity terms are
computed directly from the candidate affine weights.

\paragraph{Mixing-matrix estimation and evaluation.}
When $m$ global representatives have been selected, they are stacked as rows of
$\widehat R$, and the raw mixing-matrix estimate is
$\widehat\Theta_{\mathrm{raw}}=\widehat R^{-1}$.  The rows of
$\widehat\Theta_{\mathrm{raw}}$ are projected onto the probability simplex for
reporting.  Column permutation ambiguity is removed by solving the assignment
problem that minimizes the Frobenius distance between the estimated and true
mixing matrices.  The reported relative error is
\[
\frac{\|\widehat\Theta_{\mathrm{aligned}}-\Theta\|_F}{\|\Theta\|_F}.
\]
For diagnostic recovery, each selected vector is mapped to its nearest simplex
vertex,
\[
\operatorname{nearest}(\bm{r})=\arg\min_{j\in[m]}\|\bm{r}^T \Theta-\bm{e}_j\|_2.
\]
The maximum vertex distance is the maximum of these nearest-vertex distances.
Vertex coverage is the number of distinct nearest vertices among the relevant diagnostic vectors.  We use ``Rec.'' or ``PM rec.'' in tables for vertex recovery. 
In the singleton experiments, vertex recovery means that each selected diagnostic vector is nearest to its intended target vertex. 
In the multi-component experiment, pairwise vertex overage means that each coordinate pair's selected representatives cover all vertices in the corresponding target set $R_{st}$, 
and global vertex recovery means that the final global representatives recover all $m$ vertices. 

\paragraph{Controlled Gaussian experiments.}
For the singleton GMM experiment, the sample sizes are
$n\in\{2000,5000,10000,20000\}$ per split and per observed mixture.  PM-MMD uses
\begin{align*}
&\bar r=4.0,
\quad
\texttt{num-paths}=300,
\quad
\texttt{keep-top-train}=20,\\
&
\texttt{train-threshold}=10^{-3},
\quad
\texttt{dedup-radius}=0.15.    
\end{align*}
For the multi-component GMM experiment, the same sample sizes are used and
PM-MMD uses
\begin{align*}
&\bar r=2.5,
\quad
\texttt{num-paths}=400,
\quad
\texttt{keep-top-train}=80,\\
&\texttt{train-threshold}=10^{-3},
\quad
\texttt{dedup-radius}=0.20.
\end{align*}
In the multi-component case, four representatives are retained from each
coordinate pair using pairwise separation $0.30$ in $r$-space, and four global
representatives are retained using global separation $0.75$.  Both Gaussian
experiments use RBF kernels, MAD scaling, automatic bandwidths, and zero ridge
parameters.

\paragraph{Controlled non-Gaussian experiment.}
The component-wise data-generating mechanism is given in
Appendix~\ref{app:nongauss-design}.  PM-MMD uses
\begin{align*}
&\bar r=4.0,
\quad
\texttt{num-paths}=400,
\quad
\texttt{keep-top-train}=40,\\
&\texttt{train-threshold}=10^{-3},
\quad
\texttt{dedup-radius}=0.15,
\end{align*}
with RBF kernels, MAD scaling, automatic bandwidths, zero ridge parameters, and
the same training/validation split convention.

\paragraph{Semi-synthetic benchmark experiments.}
Digits and Wine are loaded from scikit-learn, and Dry Bean is downloaded from
the UCI repository on first use and cached locally.  The class labels, feature
choices, resampling scheme, dependence construction, and coordinate-wise cubic
warp are specified in Appendix~\ref{app:semisynthetic-benchmark-designs}.
PM-MMD uses
\begin{align*}
&\bar r=4.0,
\quad
\texttt{num-paths}=300,
\quad
\texttt{keep-top-train}=30,\\
&\texttt{train-threshold}=10^{-3},
\quad
\texttt{dedup-radius}=0.20,
\end{align*}
with RBF kernels, MAD scaling, automatic bandwidths, and zero ridge parameters.

\paragraph{Clustering and factorization references.}
The Gaussian mixture reference is fitted to the pooled optimization and
validation samples from all observed mixtures with $m$ components, full
covariance matrices, \texttt{reg\_covar}$=10^{-5}$, five random
initializations, and maximum iteration number 500.  Mixing weights are
estimated by averaging posterior responsibilities over the samples from each
observed mixture.  The $k$-means reference is fitted to the pooled samples with
$m$ clusters and 20 random initializations; mixing weights are empirical
cluster proportions for each observed mixture.  The NMF reference converts each
observed mixture into a normalized joint histogram on a common quantile grid
with four bins per coordinate.  The resulting nonnegative matrix is factorized
with rank $m$ using NMF with \texttt{init="nndsvda"}, multiplicative-update
solver, Kullback--Leibler loss, maximum iteration number 1000, and the seed of
the corresponding run.

The histogram archetypal reference uses the same normalized joint histograms
and estimates $m$ archetypes in the histogram simplex.  Mixing weights are then
obtained by nonnegative least-squares/simplex projection for each observed
mixture histogram.  This baseline is included as a geometric
simplex-factorization reference that, unlike PM-MMD, does not use marginal
independence.

The pairwise-residue MPE reference is a same-information decontamination
heuristic.  For each ordered pair of observed mixture histograms
$(\widehat U_a,\widehat U_b)$, a binary MPE subroutine estimates
$\widehat\kappa_{ab}\approx\kappa(U_a\mid U_b)$ and forms the pseudo-residue
\[
\widehat R_{a\setminus b}
=
\frac{\widehat U_a-\widehat\kappa_{ab}\widehat U_b}{1-\widehat\kappa_{ab}}.
\]
Negative histogram bins are clipped and the residue is renormalized.  A diverse
set of $m$ pseudo-residues is selected, and the observed mixture histograms are
then fitted by simplex least squares against these residue candidates.  In the
DLBCL three-component experiment we report a BBE-style implementation of this
binary MPE subroutine.  This baseline uses only the unlabeled observed mixtures;
it does not receive clean component samples.

\paragraph{Classifier-posterior transition baselines.}
For the raw gated-pool three-component DLBCL experiment, we also evaluate
classifier-posterior transition-matrix baselines~\cite{liu2016importance,patrini2017making,li2021volminnet}.  We pool the observed mixture
samples, denote the mixture index by $Y'\in\{1,2,3\}$, and treat it as an
observed noisy label.  A cross-fitted multinomial logistic regression is trained
to estimate posterior vectors
\[
  \widehat p(x)=\widehat P(Y'=\cdot\mid X=x)\in\Delta^{2}.
\]
For each seed, the transition baselines use 1000 samples per observed mixture,
matching the total sample size used by PM-MMD in the same experiment.

The forward transition matrix is
\[
  T_{ji}=P(Y'=j\mid Y=i),
\]
so its $i$th column is the distribution of the observed mixture index for a
point from latent component $i$.  We estimate these columns from the posterior
vectors $\widehat p(x)$ by four posterior-simplex constructions.  The
anchor-posterior variant averages posterior vectors with large
$\widehat P(Y'=i\mid X)$; the archetype variant selects three extreme posterior
vectors by a farthest-point heuristic~\cite{gonzalez1985clustering}; and the
$k$-means variant clusters the posterior vectors into three groups and uses the
cluster centers after normalization~\cite{macqueen1967some}.  The min-volume
variant fits a triangle to the posterior cloud by minimizing an outside-simplex
penalty for barycentric coordinates plus a small volume penalty and a condition
number penalty.  This is a lightweight post-hoc analogue of the
minimum-volume transition-matrix idea, not an official end-to-end VolMinNet
implementation.

Given the empirical mixture-index prior $\widehat q_j=\widehat P(Y'=j)$, we
compute the latent prior $\widehat\pi$ from
\[
  \widehat q=\widehat T\widehat\pi
\]
using a linear solve followed by simplex projection when needed.  We then apply
Bayes' rule to obtain the estimated mixing matrix
\[
  \widehat\Theta_{ji}
  =
  \widehat P(Y=i\mid Y'=j)
  =
  \frac{\widehat T_{ji}\widehat\pi_i}{\widehat q_j}.
\]
Thus, these baselines use the same observed-data information and sample budget
as PM-MMD, and estimate $\Theta$ through the geometry of the classifier
posterior vectors $\widehat P(Y'\mid X)$.

\paragraph{Binary MPE baselines.}
We use the official implementations of KM2, DEDPUL, and BBE.  KM2 is run
through the KMPE wrapper with \texttt{KM\_2=True}, \texttt{epsilon=0.04}, lower
bound 1.0, and upper bound 8.0.  DEDPUL uses cross-fitted random-forest
predictions with 200 trees, maximum depth 8, minimum leaf size 5, and three
folds.  BBE uses three-fold cross-fitted clean-probability scores from a
histogram gradient boosting classifier with 180 iterations, learning rate
0.06, maximum 15 leaf nodes, and $\ell_2$ regularization 0.01.

For the oracle-assisted semi-synthetic comparison, clean samples from each
true component are supplied to the binary MPE method.  These baselines
therefore use stronger information than PM-MMD.  The estimated binary
proportions are assembled into a mixing matrix, rows are normalized, and the
permutation-aligned relative Frobenius error is reported.

For the binary DLBCL prior-shift calibration in
Table~\ref{tab:dlbcl-binary-prior-shift}, no clean component samples are
provided.  Each binary MPE method is applied in both directions to the two
unlabeled mixtures, using 2000 samples from each mixture, to estimate
\[
\alpha=\kappa(U_1\mid U_2),
\qquad
\beta=\kappa(U_2\mid U_1).
\]
The mixing matrix is recovered from
\[
a=\frac{1-\alpha}{1-\alpha\beta},
\qquad
b=\beta a,
\]
corresponding to $U_1=a p_1+(1-a)p_2$ and
$U_2=b p_1+(1-b)p_2$ after choosing the orientation $a>b$.  This mutual-MPE
comparison uses the same unlabeled observed mixtures as PM-MMD in the DLBCL
experiment, but it exploits identities specific to the two-component case.

\paragraph{Clean-component binary PM-MMD variant.}
For the clean-component binary setting described in
Section~\ref{app:semisynthetic-results} and
Appendix~\ref{app:semisynthetic-benchmark-designs}, the PM-MMD variant
restricts the affine search to the one-dimensional residual path
\[
r_\alpha=
\left(-\frac{\alpha}{1-\alpha},\frac{1}{1-\alpha}\right).
\]
Here $\alpha\in[0,\alpha_{\max}]$ and
\[
\alpha_{\max}\le \min\left\{0.99,\frac{\bar r-1}{\bar r+1}\right\}.
\]
The reported runs use $\bar r=4.0$, so the radius constraint gives
$\alpha_{\max}\le0.6$.  The training loss is first evaluated on a grid of 301
points; the ten best grid points are refined by bounded scalar minimization
with tolerance $10^{-5}$.  Refined candidates are deduplicated at threshold
$10^{-5}$, and the final value is selected by validation loss.  The table
averages over the binary pairs $0\to1$, $1\to2$, and $2\to0$ on the
semi-synthetic benchmark settings.

\paragraph{Raw gated-pool three-component DLBCL flow-cytometry experiment.}
\label{app:raw-dlbcl-details}
For the raw gated-pool three-component DLBCL experiment in
Section~\ref{subsec:raw-dlbcl-three-component}, we use the manual-gated
classes $0,1,2$ with counts $47$, $604$, and $4873$, and the marker coordinates
FL1, FL2, and FL4.  The labels are used only to construct the empirical
component pools and to evaluate recovery.  The observed mixtures are formed with
\[
\Theta=0.70 I_3+\frac{0.30}{3}\mathbf 1\mathbf 1^T,
\]
so the diagonal entries are $0.8$ and the off-diagonal entries are $0.1$.
No coordinate-wise resampling is used to enforce independence.

PM-MMD uses the two marker pairs FL1--FL4 and FL2--FL4 (zero-indexed pairs
$(0,2)$ and $(1,2)$), $\bar r=4.0$, 500 optimization samples and 500
independent validation samples per observed
mixture, 300 multistart paths, 
\begin{align*}
\texttt{keep-top-train}=100,\quad
\texttt{train-threshold}=0.01, \quad
\texttt{dedup-radius}=0.05, 
\end{align*}
pairwise separation $0.15$, and $q_{\max}=3$.  The global representatives are chosen by the
condition-aware score in \eqref{eq:stable-global-selection-score} with the
weights stated above.  The same-information baselines are fitted to the same
unlabeled mixtures.  The oracle BBE-style binary MPE baseline is given clean
component samples and is used only as an oracle calibration.

The selected marker pairs are treated as prescribed choices.  Any
component-level product-marginal checks for these pairs are post-hoc diagnostics
of whether the raw component pools contain a marginal-independence signal; they
are not supplied to PM-MMD candidate generation or representative selection.
The very large vertex-distance value for spectral clustering in
Table~\ref{tab:raw-dlbcl-three-component} reflects the ill-conditioning of the
induced vertex inversion in that fit and is not used as a PM-MMD stability
criterion.

\paragraph{PM-MMD settings for the binary DLBCL calibration.}
For Table~\ref{tab:dlbcl-binary-prior-shift}, PM-MMD uses
$\rho\in\{0.40,0.50,0.60,0.70\}$, $\bar r=10.0$, 300 multistart paths,
\texttt{keep-top-train}$=30$, \texttt{train-threshold}$=10^{-3}$, and
\texttt{dedup-radius}$=0.20$.  The pairwise separation, global separation, and
vertex threshold are $0.30$, $0.60$, and $0.35$, respectively.  We use 1000
optimization samples and 1000 independent validation samples from each observed
mixture.  The pairwise and global representative limits are $q_{\max}=m$ and
$K_{\max}=m$.  The RBF kernel, MAD scaling, automatic bandwidth selection, and
zero ridge parameters are used as in the semi-synthetic experiments.